\providecommand{\tabularnewline}{\\}
\providecommand{\algorithmname}{Algorithm}
\newcommand{\lyxaddress}[1]{
\par {\raggedright #1
\vspace{1.4em}
\noindent\par}
}
  \theoremstyle{definition}
  \newtheorem{defn}{\protect\definitionname}
\theoremstyle{plain}
\newtheorem{thm}{\protect\theoremname}
  \theoremstyle{plain}
  \newtheorem{lem}{\protect\lemmaname}
  \theoremstyle{remark}
  \newtheorem{rem}{\protect\remarkname}
  \theoremstyle{plain}
  \newtheorem{ax}{\protect\axiomname}
 \theoremstyle{definition}
  \newtheorem{example}{\protect\examplename}
  \theoremstyle{definition}
  \newtheorem{problem}{\protect\problemname}
  \providecommand{\axiomname}{Axiom}
  \providecommand{\definitionname}{Definition}
  \providecommand{\examplename}{Example}
  \providecommand{\lemmaname}{Lemma}
  \providecommand{\problemname}{Problem}
  \providecommand{\remarkname}{Remark}
\providecommand{\theoremname}{Theorem}
\begin{document}

\title{A Differential Privacy Mechanism Design\\Under Matrix-Valued Query}

\author{Thee Chanyaswad\quad{}Alex Dytso\quad{}H. Vincent Poor\quad{}Prateek
Mittal}
\maketitle

\lyxaddress{\begin{center}
Princeton University\\Princeton, NJ, USA
\par\end{center}}
\begin{abstract}
Traditionally, differential privacy mechanism design has been tailored
for a scalar-valued query function. Although many mechanisms such
as the Laplace and Gaussian mechanisms can be extended to a matrix-valued
query function by adding i.i.d. noise to each element of the matrix,
this method is often sub-optimal as it forfeits an opportunity to
exploit the structural characteristics typically associated with matrix
analysis. In this work, we consider the design of differential privacy
mechanism specifically for a matrix-valued query function. The proposed
solution is to utilize a matrix-variate noise, as opposed to the traditional
scalar-valued noise. Particularly, we propose a novel differential
privacy mechanism called the \emph{Matrix-Variate Gaussian (MVG) mechanism},
which adds a \emph{matrix-valued} noise drawn from a matrix-variate
Gaussian distribution. We prove that the MVG mechanism preserves $(\epsilon,\delta)$-differential
privacy, and show that it allows the structural characteristics of
the matrix-valued query function to naturally be exploited. Furthermore,
due to the multi-dimensional nature of the MVG mechanism and the matrix-valued
query, we introduce the concept of \emph{directional noise}, which
can be utilized to mitigate the impact the noise has on the utility
of the query. Finally, we demonstrate the performance of the MVG mechanism
and the advantages of directional noise using three matrix-valued
queries on three privacy-sensitive datasets. We find that the MVG
mechanism notably outperforms four previous state-of-the-art approaches,
and provides comparable utility to the non-private baseline. Our work
thus presents a promising prospect for both future research and implementation
of differential privacy for matrix-valued query functions.
\end{abstract}
{\let\thefootnote\relax\footnotetext{This is the full version of the work, so parts of the material in this paper are derived from the conference version of this work \cite{chanyaswad2018mvg}.}}

\section{Introduction}

Differential privacy \cite{RefWorks:186,RefWorks:195,RefWorks:151}
has become the gold standard for a rigorous privacy guarantee, and
there has been the development of many differentially-private mechanisms.
Some popular mechanisms include the classical Laplace mechanism \cite{RefWorks:195}
and the Exponential mechanism \cite{RefWorks:192}. In addition, there
are other mechanisms that build upon these two classical ones such
as those based on data partition and aggregation \cite{RefWorks:224,RefWorks:241,RefWorks:242,RefWorks:243,RefWorks:244,RefWorks:245,RefWorks:246,RefWorks:247,RefWorks:248,RefWorks:219,RefWorks:193},
and those based on adaptive queries \cite{RefWorks:236,RefWorks:221,RefWorks:316,RefWorks:317,RefWorks:318,RefWorks:319,RefWorks:175}.
From this observation, differentially-private mechanisms may be categorized
into two groups: the basic mechanisms, and the derived mechanisms.
The basic mechanisms' privacy guarantee is self contained, whereas
the derived mechanisms' privacy guarantee is achieved through a combination
of basic mechanisms, composition theorems, and the post-processing
invariance property \cite{RefWorks:185}.

In this work, we consider the design of a \emph{basic mechanism} for
\emph{matrix-valued query functions}. Existing basic mechanisms for
differential privacy are designed usually for scalar-valued query
functions. However, in many practical settings, the query functions
are multi-dimensional and can be succinctly represented as matrix-valued
functions. Examples of matrix-valued query functions in the real-world
applications include the covariance matrix \cite{RefWorks:249,RefWorks:194,RefWorks:178},
the kernel matrix \cite{RefWorks:33}, the adjacency matrix \cite{RefWorks:329},
the incidence matrix \cite{RefWorks:329}, the rotation matrix \cite{RefWorks:346},
the Hessian matrix \cite{RefWorks:347}, the transition matrix \cite{RefWorks:348},
and the density matrix \cite{RefWorks:349}, which find applications
in statistics \cite{RefWorks:350}, machine learning \cite{RefWorks:376},
graph theory \cite{RefWorks:329}, differential equations \cite{RefWorks:347},
computer graphics \cite{RefWorks:346}, probability theory \cite{RefWorks:348},
quantum mechanics \cite{RefWorks:349}, and many other fields \cite{RefWorks:328}.

One property that distinguishes the matrix-valued query functions
from the scalar-valued query functions is the relationship and interconnection
among the elements of the matrix. One may naively treat these matrices
as merely a collection of scalar values, but that could prove sub-optimal
since the \emph{structure} and \emph{relationship} among these scalar
values are often informative and essential to the understanding and
analysis of the system. For example, in graph theory, the adjacency
matrix is symmetric for an undirected graph, but not for a directed
graph \cite{RefWorks:329} \textendash{} an observation which is implausible
to extract from simply looking at the collection of elements without
considering how they are arranged in the matrix.

In differential privacy, the traditional method for dealing with a
matrix-valued query function is to extend a scalar-valued mechanism
by adding \emph{independent and identically distributed} (i.i.d.)
noise to each element of the matrix \cite{RefWorks:195,RefWorks:186,RefWorks:220}.
However, this method fails to utilize the structural characteristics
of the matrix-valued noise and query function. Although some advanced
methods have explored this possibility in an iterative/procedural
manner \cite{RefWorks:236,RefWorks:221}, the structural characteristics
of the matrices are still largely under-investigated. This is partly
due to the lack of a basic mechanism that is directly designed for
matrix-valued query functions, making the utilization of matrix structures
and application of available tools in matrix analysis challenging.

In this work, we formalize the study of the matrix-valued differential
privacy, and present a new basic mechanism that can readily exploit
the structural characteristics of the matrices \textendash{} the \emph{Matrix-Variate
Gaussian (MVG) mechanism}. The high-level concept of the MVG mechanism
is simple \textendash{} it adds a matrix-variate Gaussian noise scaled
to the $L_{2}$-sensitivity of the matrix-valued query function (cf.
Figure \ref{fig:mvg_schematic}). We rigorously prove that the MVG
mechanism guarantees $(\epsilon,\delta)$-differential privacy, and
show that, with the MVG mechanism, the structural characteristics
of the matrix-valued query functions can readily be incorporated into
the mechanism design. Specifically, we present an example of how the
MVG mechanism can yield greater utility by exploiting the positive-semi
definiteness of the matrix-valued query function. Moreover, due to
the multi-dimensional nature of the noise and the query function,
the MVG mechanism allows flexibility in the design via the novel notion
of \emph{directional noise}. An important consequence of the concept
of directional noise is that the matrix-valued noise in the MVG mechanism
can be devised to affect certain parts of the matrix-valued query
function less than the others, while providing \emph{the same} privacy
guarantee. In practice, this property could be advantageous as the
noise can be tailored to have minimal impact on the intended utility.
We present simple algorithms to incorporate the directional noise
into the differential privacy mechanism design, and theoretically
present the optimal design for the MVG mechanism with directional
noise that maximizes the power-to-noise ratio of the mechanism output.

\begin{figure}
\begin{centering}
\includegraphics[scale=0.5]{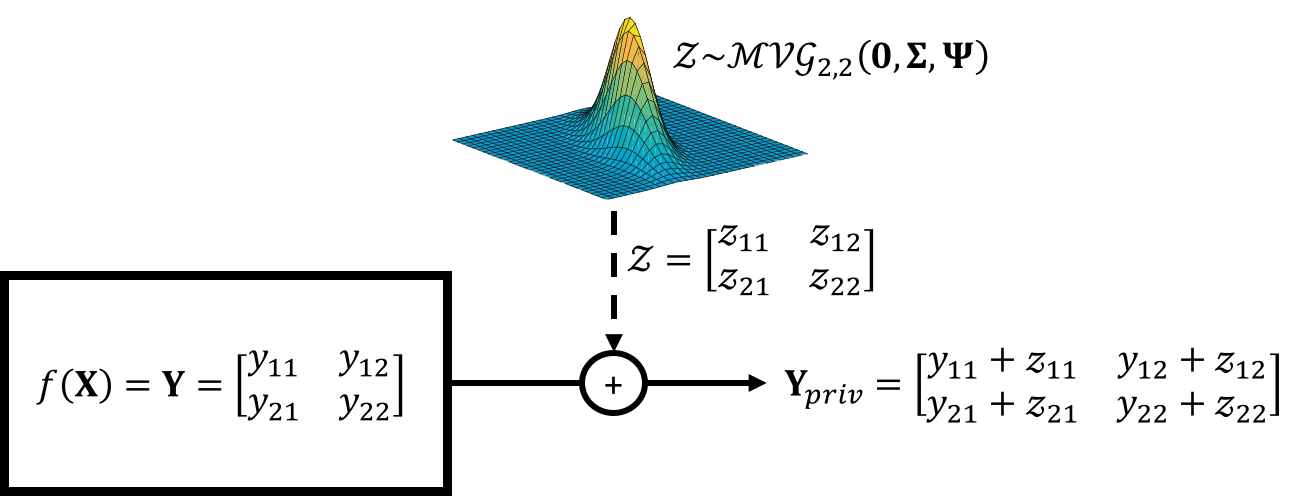} 
\par\end{centering}
\caption{Given a matrix-valued query function $f(\mathbf{X})\in\mathbb{R}^{m\times n}$,
the MVG mechanism adds a \emph{matrix-valued noise} drawn from the
\emph{matrix-variate Gaussian distribution} $\mathcal{MVG}_{m,n}(\mathbf{0},\boldsymbol{\Sigma},\boldsymbol{\Psi})$
to guarantee $(\epsilon,\delta)$-differential privacy. The schematic
shows an example when $m=n=2$. \label{fig:mvg_schematic}}
\end{figure}

Finally, to illustrate the effectiveness of the MVG mechanism, we
conduct experiments on three privacy-sensitive real-world datasets
\textendash{} Liver Disorders \cite{RefWorks:322,RefWorks:413}, Movement
Prediction \cite{RefWorks:396}, and Cardiotocography \cite{RefWorks:322,RefWorks:412}.
The experiments include three tasks involving matrix-valued query
functions \textendash{} regression, finding the first principal component,
and covariance estimation. The results show that the MVG mechanism
can evidently outperform four prior state-of-the-art mechanisms \textendash{}
the Laplace mechanism, the Gaussian mechanism, the Exponential mechanism,
and the JL transform \textendash{} in utility in all experiments,
and can provide the utility similar to that achieved with the non-private
methods, while guaranteeing differential privacy.

To summarize, the main contributions are as follows. 
\begin{itemize}
\item We formalize the study of matrix-valued query functions in differential
privacy and introduce the novel \emph{Matrix-Variate Gaussian (MVG)
mechanism}. 
\item We rigorously prove that the MVG mechanism guarantees $(\epsilon,\delta)$-differential
privacy.
\item We show that exploiting the structural characteristic of the matrix-valued
query function can improve the utility performance of the MVG mechanism.
\item We introduce a novel concept of \emph{directional noise}, and propose
two simple algorithms to implement this novel concept with the MVG
mechanism. 
\item We theoretically exhibit how the directional noise can be devised
to provide the maximum utility from the MVG mechanism.
\item We evaluate our approach on three real-world datasets and show that
our approach can outperform four prior state-of-the-art mechanisms
in all experiments, and yields utility performance close to the non-private
baseline. 
\end{itemize}

\section{Prior Works \label{sec:Prior-Works}}

Existing mechanisms for differential privacy may be categorized into
two types: the \emph{basic} \cite{RefWorks:195,RefWorks:186,RefWorks:313,RefWorks:192,RefWorks:220,RefWorks:277,RefWorks:397,RefWorks:398,RefWorks:399};
and the \emph{derived mechanisms} \cite{RefWorks:193,RefWorks:401,RefWorks:224,RefWorks:241,RefWorks:242,RefWorks:243,RefWorks:244,RefWorks:245,RefWorks:246,RefWorks:247,RefWorks:248,RefWorks:219,RefWorks:253,RefWorks:187,RefWorks:221,RefWorks:236,RefWorks:237,RefWorks:314,pca-gauss,RefWorks:400,RefWorks:337,RefWorks:339,RefWorks:316,RefWorks:317,RefWorks:236,RefWorks:175,RefWorks:319}.
Since our work concerns the basic mechanism design, we focus our discussion
on this type, and provide a general overview of the other.

\subsection{Basic Mechanisms \label{subsec:Basic-Mechanisms}}

Basic mechanisms are those whose privacy guarantee is self-contained,
i.e. it does not deduce the guarantee from another mechanism. Here,
we discuss four popular existing basic mechanisms: the Laplace mechanism,
the Gaussian mechanism, the Johnson-Lindenstrauss transform method,
and the Exponential mechanism.

\subsubsection{Laplace Mechanism}

The classical \emph{Laplace mechanism} \cite{RefWorks:195} adds noise
drawn from the Laplace distribution scaled to the $L_{1}$-sensitivity
of the query function. It was initially designed for a scalar-valued
query function, but can be extended to a matrix-valued query function
by adding i.i.d. Laplace noise to each element of the matrix. The
Laplace mechanism provides the strong $\epsilon$-differential privacy
guarantee and is relatively simple to implement. However, its generalization
to a matrix-valued query function does not automatically utilize the
structure of the matrices involved.

\subsubsection{Gaussian Mechanism}

The \emph{Gaussian mechanism} \cite{RefWorks:220,RefWorks:186,RefWorks:277}
uses i.i.d. additive noise drawn from the Gaussian distribution scaled
to the $L_{2}$-sensitivity of the query function. The Gaussian mechanism
guarantees $(\epsilon,\delta)$-differential privacy. It suffers from
the same limitation as the Laplace mechanism when extended to a matrix-valued
query function, i.e. it does not automatically consider the structure
of the matrices.

\subsubsection{Johnson-Lindenstrauss Transform}

The \emph{Johnson-Lindenstrauss (JL) transform method} \cite{RefWorks:313}
uses multiplicative noise to guarantee $(\epsilon,\delta)$-differential
privacy. It is, in fact, a rare basic mechanism designed for a matrix-valued
query function. Despite its promise, previous works show that the
JL transform method can be applied to queries with certain properties
only, as we discuss here. 
\begin{itemize}
\item Blocki et al. \cite{RefWorks:313} use a random matrix, whose entries
are drawn i.i.d. from a Gaussian distribution, and the method is applicable
to the Laplacian of a graph and the covariance matrix. 
\item Blum and Roth \cite{RefWorks:397} use a hash function that implicitly
represents the JL transform, and the method is suitable for a sparse
query.
\item Upadhyay \cite{RefWorks:398,RefWorks:399} uses a multiplicative combination
of random matrices to provide a JL transform that is applicable to
any matrix-valued query function whose singular values are all above
a minimum threshold. 
\end{itemize}
Among these methods, Upadhyay's works \cite{RefWorks:398,RefWorks:399}
stand out as possibly the most general. In our experiments, we show
that our approach can yield higher utility for the same privacy budget
than these methods.

\subsubsection{Exponential Mechanism}

In contrast to additive and multiplicative noise used in previous
approaches, the \emph{Exponential mechanism} uses noise introduced
via the sampling process \cite{RefWorks:192}. The Exponential mechanism
draws its query answers from a custom probability density function
designed to preserve $\epsilon$-differential privacy. To provide
reasonable utility, the Exponential mechanism designs its sampling
distribution based on the \emph{quality function}, which indicates
the utility score of each possible sample. Due to its generality,
the Exponential mechanism has been utilized for many types of query
functions, including the matrix-valued query functions. We experimentally
compare our approach to the Exponential mechanism, and show that,
with slightly weaker privacy guarantee, our method can yield significant
utility improvement.

Finally, we conclude that our method differs from the four existing
basic mechanisms as follows. In contrast with the i.i.d. noise in
the Laplace and Gaussian mechanisms, the MVG mechanism allows a non-i.i.d.
noise (cf. Section \ref{sec:Directional-Noise}). As opposed to the
multiplicative noise in the JL transform and the sampling noise in
the Exponential mechanism, the MVG mechanism uses an additive noise
for matrix-valued query functions.

\subsection{Derived Mechanisms}

Derived mechanisms are those whose privacy guarantee is deduced from
other basic mechanisms via the composition theorems and the post-processing
invariance property \cite{RefWorks:185}. Derived mechanisms are often
designed to provide better utility by exploiting some properties of
the query function or of the data. Blocki et al. \cite{RefWorks:313}
also define a similar categorization with the term ``revised algorithm''.

The general techniques used by derived mechanisms are often translatable
among basic mechanisms, including our MVG mechanism. Given our focus
on a novel basic mechanism, these techniques are less relevant to
our work, and we leave the investigation of integrating them into
the MVG framework in future work. Some of the popular techniques used
by derived mechanisms are summarized here.

\subsubsection{Sensitivity Control}

This technique avoids the worst-case sensitivity in basic mechanisms
by using variant concepts of sensitivity. Examples include the \emph{smooth
sensitivity framework} \cite{RefWorks:193} and \emph{elastic sensitivity}
\cite{RefWorks:401}.

\subsubsection{Data Partition and Aggregation}

This technique uses data partition and aggregation to produce more
accurate query answers \cite{RefWorks:224,RefWorks:241,RefWorks:242,RefWorks:243,RefWorks:244,RefWorks:245,RefWorks:246,RefWorks:247,RefWorks:248,RefWorks:219}.
The partition and aggregation processes are done in a differentially-private
manner either via the composition theorems and the post-processing
invariance property \cite{RefWorks:185}, or with a small extra privacy
cost. Hay et al. \cite{RefWorks:253} nicely summarize many works
that utilize this concept.

\subsubsection{Non-uniform Data Weighting}

This technique lowers the level of perturbation required for the privacy
protection by \emph{weighting each data sample or dataset differently}
\cite{RefWorks:187,RefWorks:221,RefWorks:236,RefWorks:237}. The rationale
is that each sample in a dataset, or each instance of the dataset
itself, has a heterogeneous contribution to the query output. Therefore,
these mechanisms place a higher weight on the critical samples or
instances of the database to provide better utility.

\subsubsection{Data Compression}

This approach reduces the level of perturbation required for differential
privacy via \emph{dimensionality reduction}. Various dimensionality
reduction methods have been proposed. For example, Kenthapadi et al.
\cite{RefWorks:314}, Xu et al. \cite{RefWorks:400}, and Li et al.
\cite{RefWorks:337} use random projection; Chanyaswad et al. \cite{pca-gauss}
and Jiang et al. \cite{RefWorks:339} use principal component analysis
(PCA); Xiao et al. \cite{RefWorks:224} use wavelet transform; and
Acs et al. \cite{RefWorks:219} use lossy Fourier transform.

\subsubsection{Adaptive Queries}

The derived mechanisms based on adaptive queries use prior and/or
auxiliary information to improve the utility of the query answers.
Examples include the matrix mechanism \cite{RefWorks:316,RefWorks:317},
the multiplicative weights mechanism \cite{RefWorks:236,RefWorks:221},
the low-rank mechanism \cite{RefWorks:318}, boosting \cite{RefWorks:175},
and the sparse vector technique \cite{RefWorks:220,RefWorks:319}.

Finally, we conclude with three main observations. First, the MVG
mechanism falls into the category of basic mechanism. Second, techniques
used in derived mechanisms are generally applicable to multiple basic
mechanisms, including our novel MVG mechanism. Third, therefore, for
fair comparison, we will compare the MVG mechanism with the four state-of-the-art
basic mechanisms presented in this section.

\section{Background}

We begin with a discussion of basic concepts pertaining to the MVG
mechanism for matrix-valued query.

\subsection{Matrix-Valued Query \label{subsec:Matrix-Valued-Query}}

In our analysis, we use the term \emph{dataset} interchangeably with
database, and represent it with the matrix $\mathbf{X}$. The matrix-valued
query function, $f(\mathbf{X})\in\mathbb{R}^{m\times n}$, has $m$
rows and $n$ columns. We define the notion of neighboring datasets
$\{\mathbf{X}_{1},\mathbf{X}_{2}\}$ as two datasets that differ by
a single record, and denote it as $d(\mathbf{X}_{1},\mathbf{X}_{2})=1$.
We note, however, that although the neighboring datasets differ by
only a single record, $f(\mathbf{X}_{1})$ and $f(\mathbf{X}_{2})$
may differ in every element.

We denote a matrix-valued random variable with the calligraphic font,
e.g. $\mathcal{Z}$, and its instance with the bold font, e.g. $\mathbf{Z}$.
Finally, as will become relevant later, we use the columns of $\mathbf{X}$
to denote the records (samples) in the dataset.

\subsection{$(\epsilon,\delta)$-Differential Privacy}

In the paradigm of data privacy, differential privacy \cite{RefWorks:151,RefWorks:186}
provides a rigorous privacy guarantee, and has been widely adopted
in the community \cite{RefWorks:185}. Differential privacy guarantees
that the involvement of any one particular record of the dataset would
not drastically change the query answer. 
\begin{defn}
\label{def:differential_privacy}A mechanism $\mathcal{A}$ on a query
function $f(\cdot)$ is $(\epsilon,\delta)$- differentially-private
if for all neighboring datasets $\{\mathbf{X}_{1},\mathbf{X}_{2}\}$,
and for all possible measurable matrix-valued outputs $\mathbf{Y}\subseteq\mathbb{R}^{m\times n}$,
\[
\Pr[\mathcal{A}(f(\mathbf{X}_{1}))\in\mathbf{Y}]\leq e^{\epsilon}\Pr[\mathcal{A}(f(\mathbf{X}_{2}))\in\mathbf{Y}]+\delta.
\]
\end{defn}

\subsection{Matrix-Variate Gaussian Distribution}

One of our main innovations is the use of the noise drawn from a matrix-variate
probability distribution. More specifically, in the MVG mechanism,
the additive noise is drawn from the matrix-variate Gaussian distribution,
defined as follows \cite{RefWorks:279,RefWorks:280,RefWorks:281,RefWorks:282,RefWorks:284,RefWorks:367}. 
\begin{defn}
\label{def:tmvg_dist}An $m\times n$ matrix-valued random variable
$\mathcal{X}$ has a matrix-variate Gaussian distribution, denoted
as $\mathcal{MVG}_{m,n}(\mathbf{M},\boldsymbol{\Sigma},\boldsymbol{\Psi})$,
if it has the probability density function: 
\[
p_{\mathcal{X}}(\mathbf{X})=\frac{\exp\{-\frac{1}{2}\mathrm{tr}[\boldsymbol{\Psi}^{-1}(\mathbf{X}-\mathbf{M})^{T}\boldsymbol{\Sigma}^{-1}(\mathbf{X}-\mathbf{M})]\}}{(2\pi)^{mn/2}\left|\boldsymbol{\Psi}\right|^{m/2}\left|\boldsymbol{\Sigma}\right|^{n/2}},
\]
where $\mathrm{tr}(\cdot)$ is the matrix trace \cite{RefWorks:208},
$\left|\cdot\right|$ is the matrix determinant \cite{RefWorks:208},
$\mathbf{M}\in\mathbb{R}^{m\times n}$ is the mean, $\boldsymbol{\Sigma}\in\mathbb{R}^{m\times m}$
is the row-wise covariance, and $\boldsymbol{\Psi}\in\mathbb{R}^{n\times n}$
is the column-wise covariance. 
\end{defn}
Noticeably, the probability density function (pdf) of $\mathcal{MVG}_{m,n}(\mathbf{M},\boldsymbol{\Sigma},\boldsymbol{\Psi})$
looks similar to that of the $m$-dimensional multivariate Gaussian
distribution, $\mathcal{N}_{m}(\boldsymbol{\mu},\boldsymbol{\Sigma})$.
Indeed, $\mathcal{MVG}_{m,n}(\mathbf{M},\boldsymbol{\Sigma},\boldsymbol{\Psi})$
is a generalization of $\mathcal{N}_{m}(\boldsymbol{\mu},\boldsymbol{\Sigma})$
to a matrix-valued random variable. This leads to a few notable additions.
First, the mean vector $\boldsymbol{\mu}$ now becomes the mean matrix
$\mathbf{M}$. Second, in addition to the traditional row-wise covariance
matrix $\boldsymbol{\Sigma}$, there is also the column-wise covariance
matrix $\boldsymbol{\Psi}$. The latter addition is due to the fact
that, not only could the rows of the matrix be distributed non-uniformly,
but also could its columns.

We may intuitively explain this addition as follows. If we draw $n$
i.i.d. samples from $\mathcal{N}_{m}(\mathbf{0},\boldsymbol{\Sigma})$
denoted as $\mathbf{x}_{1},\mathbf{x}_{2},\ldots,\mathbf{x}_{n}\in\mathbb{R}^{m}$,
and concatenate them into a matrix $\mathbf{X}=[\mathbf{x}_{1},\mathbf{x}_{2},\ldots,\mathbf{x}_{n}]\in\mathbb{R}^{m\times n}$,
then, it can be shown that $\mathbf{X}$ is drawn from $\mathcal{MVG}_{m,n}(\mathbf{0},\boldsymbol{\Sigma},\mathbf{I})$,
where $\mathbf{I}$ is the identity matrix \cite{RefWorks:279}. However,
if we consider the case when the columns of $\mathbf{X}$ are not
i.i.d., and are distributed with the covariance $\boldsymbol{\Psi}$
instead, then, it can be shown that this is distributed according
to $\mathcal{MVG}_{m,n}(\mathbf{0},\boldsymbol{\Sigma},\boldsymbol{\Psi})$
\cite{RefWorks:279}.

\subsection{Relevant Matrix Algebra Theorems}

We recite major theorems in matrix algebra that are essential to the
subsequent analysis and discussion as follows.
\begin{thm}[Singular value decomposition (SVD) \cite{RefWorks:208}]
\label{thm:svd}A matrix $\mathbf{A}\in\mathbb{R}^{m\times n}$ can
be decomposed into two unitary matrices $\mathbf{W}_{1}\in\mathbb{R}^{m\times m},\mathbf{W}_{2}\in\mathbb{R}^{n\times n}$,
and a diagonal matrix $\boldsymbol{\Lambda}$, whose diagonal elements
are ordered non-increasingly downward. These diagonal elements are
the \emph{singular values} of $\mathbf{A}$ denoted as $\sigma_{1}\geq\sigma_{2}\geq\cdots\geq0$,
and $\mathbf{A}=\mathbf{W}_{1}\boldsymbol{\Lambda}\mathbf{W}_{2}^{T}$. 
\end{thm}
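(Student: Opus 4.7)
The plan is to reduce the existence of the SVD to the spectral theorem for real symmetric matrices, applied to the Gram matrix $\mathbf{A}^{T}\mathbf{A}$. First, observe that $\mathbf{A}^{T}\mathbf{A}\in\mathbb{R}^{n\times n}$ is symmetric and positive semi-definite, so the spectral theorem yields an orthogonal matrix $\mathbf{W}_{2}\in\mathbb{R}^{n\times n}$ and a diagonal $\mathbf{D}=\mathrm{diag}(d_{1},\dots,d_{n})$ with $d_{1}\geq d_{2}\geq\cdots\geq d_{n}\geq 0$ such that $\mathbf{A}^{T}\mathbf{A}=\mathbf{W}_{2}\mathbf{D}\mathbf{W}_{2}^{T}$. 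Setting $\sigma_{i}=\sqrt{d_{i}}$ produces the candidate singular values, already ordered non-increasingly, and we let $\boldsymbol{\Lambda}\in\mathbb{R}^{m\times n}$ be the rectangular diagonal matrix with entries $\sigma_{1},\dots,\sigma_{\min(m,n)}$ on its main diagonal.

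Next, I would construct the left orthogonal factor. Let $\mathbf{v}_{1},\dots,\mathbf{v}_{n}$ be the columns of $\mathbf{W}_{2}$ and let $r=\mathrm{rank}(\mathbf{A})$, which equals the number of strictly positive $\sigma_{i}$. For $i\leq r$, define $\mathbf{u}_{i}=\sigma_{i}^{-1}\mathbf{A}\mathbf{v}_{i}\in\mathbb{R}^{m}$. A direct computation using $\mathbf{A}^{T}\mathbf{A}\mathbf{v}_{j}=d_{j}\mathbf{v}_{j}$ gives $\mathbf{u}_{i}^{T}\mathbf{u}_{j}=\sigma_{i}^{-1}\sigma_{j}^{-1}d_{j}\delta_{ij}=\delta_{ij}$, so these vectors are orthonormal. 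Gram-Schmidt then extends $\{\mathbf{u}_{1},\dots,\mathbf{u}_{r}\}$ to an orthonormal basis $\{\mathbf{u}_{1},\dots,\mathbf{u}_{m}\}$ of $\mathbb{R}^{m}$, which I assemble column-wise into the orthogonal matrix $\mathbf{W}_{1}$. By construction $\mathbf{A}\mathbf{v}_{j}=\sigma_{j}\mathbf{u}_{j}$ for $j\leq r$, while for $j>r$ we have $\|\mathbf{A}\mathbf{v}_{j}\|^{2}=\mathbf{v}_{j}^{T}\mathbf{A}^{T}\mathbf{A}\mathbf{v}_{j}=d_{j}=0$, so $\mathbf{A}\mathbf{v}_{j}=\mathbf{0}$. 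Collecting columns yields $\mathbf{A}\mathbf{W}_{2}=\mathbf{W}_{1}\boldsymbol{\Lambda}$, and right-multiplying by $\mathbf{W}_{2}^{T}$ gives the desired factorization $\mathbf{A}=\mathbf{W}_{1}\boldsymbol{\Lambda}\mathbf{W}_{2}^{T}$.

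The deep analytic content is entirely absorbed into the spectral theorem, which I invoke as a black box; everything else is linear-algebraic bookkeeping. The only genuinely subtle points concern rank deficiency: when some $\sigma_{i}$ vanish, the formula $\mathbf{u}_{i}=\sigma_{i}^{-1}\mathbf{A}\mathbf{v}_{i}$ is undefined and the corresponding columns of $\mathbf{W}_{1}$ must instead be supplied by the Gram-Schmidt extension; and when $m<n$, the surplus right singular vectors must be shown to lie in the null space of $\mathbf{A}$. Both issues are handled uniformly by the identity $\|\mathbf{A}\mathbf{v}_{j}\|^{2}=d_{j}$, so the main obstacle is really just careful treatment of these degenerate cases rather than any nontrivial estimate.
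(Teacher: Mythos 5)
Your proposal is correct: reducing the SVD to the spectral theorem for the symmetric positive semi-definite Gram matrix $\mathbf{A}^{T}\mathbf{A}$, defining $\mathbf{u}_{i}=\sigma_{i}^{-1}\mathbf{A}\mathbf{v}_{i}$ on the nonzero singular values, extending by Gram--Schmidt, and verifying $\mathbf{A}\mathbf{W}_{2}=\mathbf{W}_{1}\boldsymbol{\Lambda}$ via $\left\Vert \mathbf{A}\mathbf{v}_{j}\right\Vert ^{2}=d_{j}$ is the canonical existence argument, and your handling of the rank-deficient and $m<n$ cases is sound. Note, however, that the paper does not prove this statement at all; it is quoted as a standard background fact with a citation to a matrix-analysis text, so there is no in-paper proof to compare against --- your argument simply supplies the standard textbook proof that the citation points to.
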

\begin{lem}[Laurent-Massart \cite{RefWorks:369}]
\label{lem:laurent_massart} For a matrix-variate random variable
$\mathcal{N}\sim\mathcal{MVG}_{m,n}(\mathbf{0},\mathbf{I}_{m},\mathbf{I}_{n})$,
$\delta\in[0,1]$, and $\zeta(\delta)=2\sqrt{-mn\ln\delta}-2\ln\delta+mn$,
the following inequality holds: 
\[
\Pr[\left\Vert \mathcal{N}\right\Vert _{F}^{2}\leq\zeta(\delta)^{2}]\geq1-\delta,
\]
where $\left\Vert \cdot\right\Vert _{F}$ is the Frobenius norm of
a matrix.
\end{lem}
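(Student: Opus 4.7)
The plan is to reduce $\left\Vert \mathcal{N}\right\Vert _{F}^{2}$ to a chi-squared random variable and then invoke the classical Laurent--Massart tail bound. First, I would verify that when $\mathbf{M}=\mathbf{0}$, $\boldsymbol{\Sigma}=\mathbf{I}_{m}$, and $\boldsymbol{\Psi}=\mathbf{I}_{n}$, the density in Definition \ref{def:tmvg_dist} factorizes as
\[
p_{\mathcal{N}}(\mathbf{N}) = (2\pi)^{-mn/2}\exp\!\left\{-\tfrac{1}{2}\,\mathrm{tr}(\mathbf{N}^{T}\mathbf{N})\right\} = \prod_{i,j}\frac{1}{\sqrt{2\pi}}\exp\!\left\{-\tfrac{1}{2}\mathbf{N}_{ij}^{2}\right\},
\]
so the entries $\mathcal{N}_{ij}$ are mutually independent standard normals. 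Consequently, $\left\Vert \mathcal{N}\right\Vert _{F}^{2}=\sum_{i,j}\mathcal{N}_{ij}^{2}$ is a chi-squared random variable with $mn$ degrees of freedom.

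Next, I would invoke the classical Laurent--Massart concentration inequality \cite{RefWorks:369}: for any $X\sim\chi_{d}^{2}$ and any $t\geq 0$,
\[
\Pr\!\left[X \geq d + 2\sqrt{dt} + 2t\right] \leq e^{-t}.
\]
Setting $d=mn$ and $t=-\ln\delta$, which is non-negative for $\delta\in(0,1]$, yields
\[
\Pr\!\left[\left\Vert \mathcal{N}\right\Vert _{F}^{2} \leq mn + 2\sqrt{-mn\ln\delta} - 2\ln\delta\right] \geq 1-\delta,
\]
and the expression inside the brackets is exactly $\zeta(\delta)$.

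To recover the stated form with $\zeta(\delta)^{2}$, I would note that $\zeta(\delta)\geq mn\geq 1$ for every $\delta\in(0,1]$ and every $m,n\geq 1$, so $\zeta(\delta)^{2}\geq\zeta(\delta)$; the looser bound in the statement then follows immediately from the tighter inequality above. The boundary case $\delta=0$ holds vacuously since $\zeta(0)=+\infty$.

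The only mildly delicate step is the first one, namely confirming that the $\mathcal{MVG}$ density with identity row- and column-covariances collapses to a product of univariate standard normal densities; this hinges on the Kronecker-structured covariance implicit in Definition \ref{def:tmvg_dist} degenerating to $\mathbf{I}_{mn}$. Once this reduction is in hand, the rest is a routine application of a well-known concentration bound, and I do not anticipate any deeper obstacle.
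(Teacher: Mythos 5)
Your proposal is correct and matches the intended justification: the paper states Lemma \ref{lem:laurent_massart} as a recited result, citing Laurent--Massart without proof, and your reduction of $\left\Vert \mathcal{N}\right\Vert _{F}^{2}$ to a $\chi_{mn}^{2}$ random variable followed by the classical tail bound with $t=-\ln\delta$ is exactly that citation spelled out. Your further observation that the stated inequality uses $\zeta(\delta)^{2}$ rather than the sharper $\zeta(\delta)$, and that the weaker form follows because $\zeta(\delta)\geq mn\geq1$, correctly handles the only discrepancy between the classical inequality and the paper's statement.
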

\begin{lem}[Merikoski-Sarria-Tarazaga \cite{RefWorks:288}]
\label{lem:singular_bound} The non-increasingly ordered singular
values of a matrix $\mathbf{A}\in\mathbb{R}^{m\times n}$ have the
values of 
\[
0\leq\sigma_{i}\leq\frac{\left\Vert \mathbf{A}\right\Vert _{F}}{\sqrt{i}},
\]
where $\left\Vert \cdot\right\Vert _{F}$ is the Frobenius norm of
a matrix.
\end{lem}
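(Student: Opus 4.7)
The plan is to leverage the singular value decomposition together with the unitary invariance of the Frobenius norm, and then exploit the non-increasing ordering of the singular values to convert a sum bound into a per-coordinate bound. The lower bound $0 \le \sigma_i$ is nothing but the convention that singular values are defined as non-negative (they are square roots of eigenvalues of $\mathbf{A}^T\mathbf{A}$), so I would dispatch that in a single sentence.

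For the upper bound, I would first write $\mathbf{A} = \mathbf{W}_1 \boldsymbol{\Lambda} \mathbf{W}_2^T$ via Theorem~\ref{thm:svd}, with $\boldsymbol{\Lambda}$ containing the $\sigma_j$ on its diagonal. Since $\mathbf{W}_1$ and $\mathbf{W}_2$ are unitary, the Frobenius norm is invariant, giving the classical identity
\[
\|\mathbf{A}\|_F^2 \;=\; \|\boldsymbol{\Lambda}\|_F^2 \;=\; \sum_{j=1}^{\min(m,n)} \sigma_j^2.
\]
I would verify the first equality by expanding $\|\mathbf{A}\|_F^2 = \mathrm{tr}(\mathbf{A}^T \mathbf{A})$ and using cyclicity of the trace together with $\mathbf{W}_k^T \mathbf{W}_k = \mathbf{I}$.

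Next, I would discard all but the top $i$ singular values to obtain the one-sided bound
\[
\|\mathbf{A}\|_F^2 \;\ge\; \sum_{j=1}^{i} \sigma_j^2,
\]
and then apply the ordering hypothesis $\sigma_1 \ge \sigma_2 \ge \cdots \ge \sigma_i$ termwise to conclude $\sum_{j=1}^{i}\sigma_j^2 \ge i\,\sigma_i^2$. Rearranging and taking square roots delivers $\sigma_i \le \|\mathbf{A}\|_F/\sqrt{i}$.

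There is really no substantive obstacle here; the argument is a short chain of elementary inequalities once SVD is available. The only thing to be slightly careful about is the index range: the bound is only meaningful for $1 \le i \le \min(m,n)$, and outside that range one can adopt the standard convention $\sigma_i = 0$, which trivially satisfies the stated inequality. I would mention this edge case briefly to keep the statement fully rigorous.
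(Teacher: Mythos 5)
Your proof is correct: the chain $\left\Vert \mathbf{A}\right\Vert _{F}^{2}=\sum_{j}\sigma_{j}^{2}\geq\sum_{j=1}^{i}\sigma_{j}^{2}\geq i\,\sigma_{i}^{2}$, obtained from the SVD and the unitary invariance of the Frobenius norm, is exactly the standard elementary argument for this bound, and your handling of the nonnegativity and of indices beyond $\min\{m,n\}$ is fine. Note that the paper itself does not prove this lemma at all \textendash{} it is simply cited from Merikoski\textendash Sarria\textendash Tarazaga \cite{RefWorks:288} as a background result \textendash{} so your write-up supplies a correct self-contained derivation consistent with how the lemma is used later in the paper.
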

\begin{lem}[von Neumann \cite{RefWorks:402}]
\label{lem:v_neumann}Let $\mathbf{A},\mathbf{B}\in\mathbb{R}^{m\times n}$,
and let $\sigma_{i}(\mathbf{A})$ and $\sigma_{i}(\mathbf{B})$ be
the non-increasingly ordered singular values of $\mathbf{A}$ and
$\mathbf{B}$, respectively. Then
\[
\mathrm{tr}(\mathbf{A}\mathbf{B}^{T})\leq\Sigma_{i=1}^{r}\sigma_{i}(\mathbf{A})\sigma_{i}(\mathbf{B}),
\]
where $r=\min\{m,n\}$. 
\end{lem}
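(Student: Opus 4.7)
\medskip

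\noindent\textbf{Proof Proposal.} The plan is the classical reduction to diagonal matrices via SVD, followed by a doubly stochastic / rearrangement argument. First I would apply Theorem~\ref{thm:svd} and write $\mathbf{A} = \mathbf{U}_A \boldsymbol{\Lambda}_A \mathbf{V}_A^T$ and $\mathbf{B} = \mathbf{U}_B \boldsymbol{\Lambda}_B \mathbf{V}_B^T$, where $\boldsymbol{\Lambda}_A$ and $\boldsymbol{\Lambda}_B$ are the $m \times n$ diagonal matrices of singular values. Using cyclicity of the trace,
\[
\mathrm{tr}(\mathbf{A}\mathbf{B}^T) = \mathrm{tr}\bigl(\boldsymbol{\Lambda}_A \, \mathbf{P} \, \boldsymbol{\Lambda}_B^T \, \mathbf{Q}\bigr),
\]
where $\mathbf{P} = \mathbf{V}_A^T \mathbf{V}_B \in \mathbb{R}^{n\times n}$ and $\mathbf{Q} = \mathbf{U}_B^T \mathbf{U}_A \in \mathbb{R}^{m\times m}$ are both orthogonal. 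Since $\boldsymbol{\Lambda}_A$ and $\boldsymbol{\Lambda}_B$ are (rectangular) diagonal, expanding the trace yields
\[
\mathrm{tr}(\mathbf{A}\mathbf{B}^T) \;=\; \sum_{i=1}^{r} \sigma_i(\mathbf{A}) \sum_{j=1}^{r} \sigma_j(\mathbf{B})\, P_{ij} Q_{ji},
\]
with $r = \min\{m,n\}$, because rows or columns of $\boldsymbol{\Lambda}_A, \boldsymbol{\Lambda}_B$ beyond index $r$ are zero.

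Next I would bound the coefficients $P_{ij} Q_{ji}$. Applying $|P_{ij} Q_{ji}| \le \tfrac{1}{2}(P_{ij}^2 + Q_{ji}^2)$ and defining $D_{ij} = \tfrac{1}{2}(P_{ij}^2 + Q_{ji}^2)$ for $1 \le i,j \le r$, I would verify that $\mathbf{D}$ (suitably extended or truncated to an $r \times r$ matrix) is doubly sub-stochastic: each row and column sums to at most $1$, since $\mathbf{P}$ and $\mathbf{Q}$ are orthogonal and their row/column norms equal $1$. One then completes $\mathbf{D}$ to a doubly stochastic matrix $\widetilde{\mathbf{D}}$ by adding nonnegative slack, noting this only enlarges the upper bound because singular values are nonnegative.

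Then I would invoke the Birkhoff--von Neumann theorem to write $\widetilde{\mathbf{D}} = \sum_k \alpha_k \boldsymbol{\Pi}_k$ as a convex combination of permutation matrices $\boldsymbol{\Pi}_k$ corresponding to permutations $\pi_k$ of $\{1,\dots,r\}$. This gives
\[
\mathrm{tr}(\mathbf{A}\mathbf{B}^T) \;\le\; \sum_{k} \alpha_k \sum_{i=1}^{r} \sigma_i(\mathbf{A})\, \sigma_{\pi_k(i)}(\mathbf{B}).
\]
Because the sequences $\{\sigma_i(\mathbf{A})\}$ and $\{\sigma_i(\mathbf{B})\}$ are both non-increasing and nonnegative, the rearrangement inequality implies $\sum_i \sigma_i(\mathbf{A}) \sigma_{\pi_k(i)}(\mathbf{B}) \le \sum_i \sigma_i(\mathbf{A}) \sigma_i(\mathbf{B})$ for every $k$, and since $\sum_k \alpha_k = 1$ the desired bound follows.

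The main obstacle I anticipate is the doubly stochastic bookkeeping: because $\mathbf{A}$ and $\mathbf{B}$ are rectangular, the matrices $\mathbf{P}$ and $\mathbf{Q}$ live in different-sized spaces, so I have to be careful about which $r \times r$ submatrix carries the trace and why truncating still yields a (sub)doubly stochastic object. Once that is settled, the Birkhoff decomposition and the rearrangement inequality constitute a short, standard finish.
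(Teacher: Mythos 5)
The paper itself does not prove this lemma: it is recited as a known matrix-analysis result and attributed directly to von Neumann \cite{RefWorks:402}, so there is no internal proof to compare yours against. Your sketch is a correct rendition of the standard Birkhoff-based proof of the von Neumann trace inequality. The SVD reduction and the expansion $\mathrm{tr}(\mathbf{A}\mathbf{B}^{T})=\sum_{i,j\le r}\sigma_{i}(\mathbf{A})\sigma_{j}(\mathbf{B})P_{ij}Q_{ji}$ are right, and truncating the orthogonal matrices $\mathbf{P}$ and $\mathbf{Q}$ to the leading $r\times r$ block indeed leaves row and column sums of squares at most one, so $D_{ij}=\tfrac12(P_{ij}^{2}+Q_{ji}^{2})$ is doubly substochastic; note also that passing from $P_{ij}Q_{ji}$ to $|P_{ij}Q_{ji}|$ and then to $D_{ij}$ uses the nonnegativity of the singular values, which you should state explicitly. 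The one step that needs real support rather than a wave is ``complete $\mathbf{D}$ to a doubly stochastic matrix by adding nonnegative slack'': this is the (true, but not immediate) lemma that every doubly substochastic matrix is dominated entrywise by a doubly stochastic matrix of the same size, which you should either cite or prove (a short induction on the total row/column deficiency works). With that pinned down, the Birkhoff decomposition and the rearrangement bound $\sum_{i}\sigma_{i}(\mathbf{A})\sigma_{\pi(i)}(\mathbf{B})\le\sum_{i}\sigma_{i}(\mathbf{A})\sigma_{i}(\mathbf{B})$ for non-increasing nonnegative sequences finish the argument, so your proposal is a sound self-contained alternative to simply citing the literature as the paper does.
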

\begin{lem}[Trace magnitude bound \cite{RefWorks:278}]
\label{lem:abs_trace_bound}Let $\mathbf{A}\in\mathbb{R}^{m\times n}$,
and let $\sigma_{i}(\mathbf{A})$ be the non-increasingly ordered
singular values of $\mathbf{A}$. Then
\[
\left|\mathrm{tr}(\mathbf{A})\right|\leq\sum_{i=1}^{r}\sigma_{i}(\mathbf{A}),
\]
where $r=\min\{m,n\}$.
\end{lem}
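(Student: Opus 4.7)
The plan is to derive the trace magnitude bound as a short corollary of von Neumann's inequality (Lemma \ref{lem:v_neumann}). The key observation is that $\mathrm{tr}(\mathbf{A})$ itself can be rewritten as $\mathrm{tr}(\mathbf{A}\mathbf{B}^{T})$ for a judiciously chosen companion matrix $\mathbf{B}$ whose singular values are all equal to $1$. The natural choice is a (possibly rectangular) identity: let $r=\min\{m,n\}$ and take $\tilde{\mathbf{I}}\in\mathbb{R}^{m\times n}$ with entries $\tilde{I}_{ii}=1$ for $i\leq r$ and zeros elsewhere. Then $\mathrm{tr}(\mathbf{A}\tilde{\mathbf{I}}^{T})=\sum_{i=1}^{r}A_{ii}$, which agrees with the convention of the trace of a rectangular matrix, and the $r$ non-zero singular values of $\tilde{\mathbf{I}}$ are all $1$.

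With this setup, applying Lemma \ref{lem:v_neumann} directly gives
\[
\mathrm{tr}(\mathbf{A})=\mathrm{tr}(\mathbf{A}\tilde{\mathbf{I}}^{T})\leq\sum_{i=1}^{r}\sigma_{i}(\mathbf{A})\,\sigma_{i}(\tilde{\mathbf{I}})=\sum_{i=1}^{r}\sigma_{i}(\mathbf{A}).
\]
To upgrade this one-sided inequality into an absolute-value bound, I would apply exactly the same argument to $-\mathbf{A}$, noting that singular values are unchanged by a sign flip, i.e., $\sigma_{i}(-\mathbf{A})=\sigma_{i}(\mathbf{A})$ for every $i$. This yields $-\mathrm{tr}(\mathbf{A})=\mathrm{tr}(-\mathbf{A})\leq\sum_{i=1}^{r}\sigma_{i}(\mathbf{A})$, and combining the two one-sided estimates produces the claimed $|\mathrm{tr}(\mathbf{A})|\leq\sum_{i=1}^{r}\sigma_{i}(\mathbf{A})$.

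I do not expect a substantial obstacle here: the entire argument is one invocation of von Neumann's trace inequality preceded by the trivial rewriting $\mathbf{A}=\mathbf{A}\tilde{\mathbf{I}}^{T}$. The only minor point worth pausing on is the non-square case, where one should verify that $\tilde{\mathbf{I}}$ really does have singular values $(1,\ldots,1,0,\ldots)$ with exactly $r$ ones; this follows from its SVD being itself (it is already diagonal with non-negative non-increasing entries along the main diagonal). If the lemma is only needed for square $\mathbf{A}$, the choice $\mathbf{B}=\mathbf{I}_{m}$ gives the same proof with no dimensional bookkeeping at all.
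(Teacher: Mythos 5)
Your argument is correct. The paper itself offers no proof of this lemma -- it is stated as a cited result from the matrix-analysis literature (Lemma \ref{lem:abs_trace_bound} is imported by reference, like the other lemmas in that background section) -- so there is no in-paper derivation to compare against; what you give is a valid self-contained proof built on von Neumann's inequality (Lemma \ref{lem:v_neumann}), which the paper does state. The details check out: with $\tilde{\mathbf{I}}\in\mathbb{R}^{m\times n}$ having ones in the first $r$ diagonal positions, one has $\mathrm{tr}(\mathbf{A}\tilde{\mathbf{I}}^{T})=\sum_{i=1}^{r}A_{ii}=\mathrm{tr}(\mathbf{A})$ under the rectangular-trace convention, $\tilde{\mathbf{I}}$ has exactly $r$ unit singular values (its SVD is itself, and $\tilde{\mathbf{I}}\tilde{\mathbf{I}}^{T}$ is a rank-$r$ orthogonal projection), and the sign-flip step $\sigma_{i}(-\mathbf{A})=\sigma_{i}(\mathbf{A})$ legitimately upgrades the one-sided bound to the absolute-value statement. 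For completeness, note there is an equally short direct route that bypasses von Neumann: writing $\mathbf{A}=\mathbf{W}_{1}\boldsymbol{\Lambda}\mathbf{W}_{2}^{T}$ via Theorem \ref{thm:svd} gives $\mathrm{tr}(\mathbf{A})=\sum_{i=1}^{r}\sigma_{i}(\mathbf{A})\,\mathbf{w}_{2,i}^{T}\mathbf{w}_{1,i}$, and Cauchy--Schwarz bounds each factor $\left|\mathbf{w}_{2,i}^{T}\mathbf{w}_{1,i}\right|\leq1$; your route trades this elementary computation for a single invocation of a stronger inequality, which is fine since the paper already assumes Lemma \ref{lem:v_neumann}.
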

\begin{lem}[Hadamard's inequality \cite{RefWorks:473}]
 \label{lem:hadamard_ineq} Let $\mathbf{A}\in\mathbb{R}^{m\times m}$
be non-singular and $\mathbf{a}_{i}$ be the $i^{th}$ column vector
of $\mathbf{A}$ . Then, 
\[
\left|\mathbf{A}\right|^{2}\leq\prod_{i=1}^{m}\left\Vert \mathbf{a}_{i}\right\Vert ^{2},
\]
where $\left|\cdot\right|$ is the matrix determinant \cite{RefWorks:208},
and $\left\Vert \cdot\right\Vert $ is the $L_{2}$-norm of a vector.
\end{lem}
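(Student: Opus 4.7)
The plan is to prove Hadamard's inequality via a QR-type decomposition of $\mathbf{A}$, which makes the geometric content of the inequality transparent: the volume of the parallelepiped spanned by the columns is at most the product of the edge lengths, with equality when those edges are mutually orthogonal. Since $\mathbf{A}$ is non-singular, its columns $\mathbf{a}_1,\dots,\mathbf{a}_m$ form a basis of $\mathbb{R}^m$, so Gram--Schmidt orthonormalization produces an orthonormal basis $\mathbf{q}_1,\dots,\mathbf{q}_m$ together with scalars $R_{ji}$ such that $\mathbf{a}_i = \sum_{j=1}^{i} R_{ji}\,\mathbf{q}_j$, with $R_{ii}>0$. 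Assembling these into matrices yields $\mathbf{A} = \mathbf{Q}\mathbf{R}$ with $\mathbf{Q}$ orthogonal and $\mathbf{R}$ upper triangular with strictly positive diagonal.

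From this factorization, the first step is to rewrite the determinant. Since $\mathbf{Q}^T\mathbf{Q}=\mathbf{I}_m$, one has $|\det\mathbf{Q}|=1$, and because $\mathbf{R}$ is triangular, $\det\mathbf{R}=\prod_{i=1}^{m}R_{ii}$. Hence
\[
|\mathbf{A}|^{2} \;=\; (\det \mathbf{R})^{2} \;=\; \prod_{i=1}^{m} R_{ii}^{2}.
\]
The second step is to bound each diagonal entry by the norm of the corresponding column of $\mathbf{A}$. Using $\mathbf{a}_i = \mathbf{Q}\mathbf{r}_i$ where $\mathbf{r}_i$ is the $i$-th column of $\mathbf{R}$, and the fact that $\mathbf{Q}$ preserves Euclidean norms, I get $\|\mathbf{a}_i\|^2 = \|\mathbf{r}_i\|^2 = \sum_{j=1}^{i} R_{ji}^2 \geq R_{ii}^2$, simply by dropping the non-negative terms with $j<i$.

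Combining the two steps, multiplying the column-wise inequalities gives
\[
\prod_{i=1}^{m}\|\mathbf{a}_i\|^2 \;\geq\; \prod_{i=1}^{m} R_{ii}^2 \;=\; |\mathbf{A}|^2,
\]
which is exactly the claimed bound. Honestly, there is no deep obstacle here; the main thing to be careful about is invoking Gram--Schmidt correctly so that the diagonal entries of $\mathbf{R}$ are positive and equal to $\|\mathbf{q}_i^{\perp}\text{-component of }\mathbf{a}_i\|$. An alternative route, if one prefers to avoid QR, is to apply the inequality $\det(\mathbf{B})\leq\prod_i B_{ii}$ to the Gram matrix $\mathbf{B}=\mathbf{A}^T\mathbf{A}$ (whose diagonal entries are precisely $\|\mathbf{a}_i\|^2$ and whose determinant is $|\mathbf{A}|^2$), deducing the latter by normalizing to unit diagonal and using AM--GM on the eigenvalues; but the QR argument above is shorter and more self-contained.
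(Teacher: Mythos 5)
Your proof is correct. Note that the paper does not actually prove this lemma --- it is recited as a known matrix-algebra fact with a citation, so there is no in-paper argument to compare against. Your QR/Gram--Schmidt derivation is the classical proof and is sound in every step: non-singularity guarantees the columns are a basis so that $\mathbf{A}=\mathbf{Q}\mathbf{R}$ with $R_{ii}>0$; the identity $\left|\mathbf{A}\right|^{2}=(\det\mathbf{R})^{2}=\prod_{i}R_{ii}^{2}$ uses $(\det\mathbf{Q})^{2}=1$ correctly (and squaring also disposes of the sign of the determinant in the lemma's statement); and the bound $\left\Vert \mathbf{a}_{i}\right\Vert ^{2}=\left\Vert \mathbf{r}_{i}\right\Vert ^{2}=\sum_{j\leq i}R_{ji}^{2}\geq R_{ii}^{2}$ follows from orthogonal invariance of the Euclidean norm. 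Multiplying over $i$ gives the claim. The alternative you sketch --- applying $\det\mathbf{B}\leq\prod_{i}B_{ii}$ to the Gram matrix $\mathbf{B}=\mathbf{A}^{T}\mathbf{A}$ --- is the other standard route and is essentially equivalent; either would serve as a self-contained justification of the cited lemma.
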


\section{MVG Mechanism: Differential Privacy with Matrix-Valued Query}

Matrix-valued query functions are different from their scalar counterparts
in terms of the vital information contained in how the elements are
arranged in the matrix. To fully exploit these structural characteristics
of matrix-valued query functions, we present a novel mechanism for
matrix-valued query functions: the \emph{Matrix-Variate Gaussian (MVG)
mechanism}.

\subsection{Definitions}

First, let us introduce the sensitivity of the matrix-valued query
function used in the MVG mechanism. 
\begin{defn}[Sensitivity]
\label{def:sensitivity}Given a matrix-valued query function $f(\mathbf{X})\in\mathbb{R}^{m\times n}$,
define the $L_{2}$-sensitivity as, 
\[
s_{2}(f)=\sup_{d(\mathbf{X}_{1},\mathbf{X}_{2})=1}\left\Vert f(\mathbf{X}_{1})-f(\mathbf{X}_{2})\right\Vert _{F},
\]
where $\left\Vert \cdot\right\Vert _{F}$ is the Frobenius norm \cite{RefWorks:208}. 
\end{defn}
Then, we present the MVG mechanism as follows. 
\begin{defn}[MVG mechanism]
\label{def:mvg_mech}Given a matrix-valued query function $f(\mathbf{X})\in\mathbb{R}^{m\times n}$,
and a matrix-valued random variable $\mathcal{Z}\sim\mathcal{MVG}_{m,n}(\mathbf{0},\boldsymbol{\Sigma},\boldsymbol{\Psi})$,
the \emph{MVG mechanism} is defined as, 
\[
\mathcal{MVG}(f(\mathbf{X}))=f(\mathbf{X})+\mathcal{Z},
\]
where $\boldsymbol{\Sigma}$ is the row-wise covariance matrix, and
$\boldsymbol{\Psi}$ is the column-wise covariance matrix. 
\end{defn}
Note that so far, we have not specified how to pick $\boldsymbol{\Sigma}$
and $\boldsymbol{\Psi}$ according to the sensitivity $s_{2}(f)$
in the MVG mechanism. We discuss the explicit form of $\boldsymbol{\Sigma}$
and $\boldsymbol{\Psi}$ next.

As the additive matrix-valued noise of the MVG mechanism is drawn
from $\mathcal{MVG}_{m,n}(\mathbf{0},\boldsymbol{\Sigma},\boldsymbol{\Psi})$,
the parameters to be designed for the mechanism are the covariance
matrices $\boldsymbol{\Sigma}$ and $\boldsymbol{\Psi}$. In the following
discussion, we derive the sufficient conditions on $\boldsymbol{\Sigma}$
and $\boldsymbol{\Psi}$ such that the MVG mechanism preserves $(\epsilon,\delta)$-differential
privacy. Furthermore, since one of the motivations of the MVG mechanism
is to facilitate the exploitation of the structural characteristics
of the matrix-value query, we demonstrate how the structural knowledge
about the matrix-value query can improve the sufficient condition
for the MVG mechanism. The term \emph{improve} here certainly requires
further specification, and we provide such clarification when the
context is appropriate later in this section.

Hence, the subsequent discussion proceeds as follows. First, we present
a sufficient condition for the values of $\boldsymbol{\Sigma}$ and
$\boldsymbol{\Psi}$ to ensure that the MVG mechanism preserves $(\epsilon,\delta)$-differential
privacy \emph{without assuming structural knowledge} about the matrix-valued
query. Second, we present an alternative sufficient condition for
$\boldsymbol{\Sigma}$ and $\boldsymbol{\Psi}$ to ensure that the
MVG mechanism preserves $(\epsilon,\delta)$-differential privacy
with the assumption that the matrix-value query is \emph{symmetric
positive semi-definite (PSD)}. Finally, we rigorously prove that,
by incorporating the knowledge of positive semi-definiteness about
the matrix-value query, we improve the sufficient condition to guarantee
$(\epsilon,\delta)$-differential privacy with the MVG mechanism.

\subsection{Differential Privacy Analysis for General Matrix-Valued Query (No
Structural Assumption) \label{subsec:Differential-Privacy-Analysis_general}}

\begin{table}
\begin{centering}
\begin{tabular}{|>{\centering}m{3.5cm}|>{\centering}m{10cm}|}
\hline 
{}{}$\mathbf{X}$  & {}{}database/dataset whose columns are data records and rows are
attributes/features.\tabularnewline
\hline 
{}{}$\mathcal{MVG}_{m,n}(\mathbf{0},\boldsymbol{\Sigma},\boldsymbol{\Psi})$  & {}{} $m\times n$ matrix-variate Gaussian distribution with zero
mean, the row-wise covariance $\boldsymbol{\Sigma}$, and the column-wise
covariance $\boldsymbol{\Psi}$.\tabularnewline
\hline 
{}{}$f(\mathbf{X})\in\mathbb{R}^{m\times n}$  & {}{}matrix-valued query function\tabularnewline
\hline 
 {}{}$r$  & {}{}$\min\{m,n\}$\tabularnewline
\hline 
 {}{}$H_{r}$  & {}{}generalized harmonic numbers of order $r$\tabularnewline
\hline 
{}{}$H_{r,1/2}$  & {}{}generalized harmonic numbers of order $r$ of $1/2$\tabularnewline
\hline 
 {}{}$\gamma$  & {}{}$\sup_{\mathbf{X}}\left\Vert f(\mathbf{X})\right\Vert _{F}$\tabularnewline
\hline 
 {}{}$\zeta(\delta)$  & {}{}$2\sqrt{-mn\ln\delta}-2\ln\delta+mn$\tabularnewline
\hline 
{}{}$\boldsymbol{\sigma}(\boldsymbol{\Sigma}^{-1})$  & {}{}vector of non-increasing singular values of $\boldsymbol{\Sigma}^{-1}$ \tabularnewline
\hline 
{}{}$\boldsymbol{\sigma}(\boldsymbol{\Psi}^{-1})$  & {}{}vector of non-increasing singular values of $\boldsymbol{\Psi}^{-1}$ \tabularnewline
\hline 
\end{tabular}
\par\end{centering}
\caption{Notations for the differential privacy analysis. \label{tab:Notations}}
\end{table}

First, we consider the most general differential privacy analysis
of the MVG mechanism. More specifically, we do not make explicit structural
assumption about the matrix-query function in this analysis. The following
theorem presents the key result under this general setting.
\begin{thm}
\label{thm:design_general} Let $\boldsymbol{\sigma}(\boldsymbol{\Sigma}^{-1})=[\sigma_{1}(\boldsymbol{\Sigma}^{-1}),\ldots,\sigma_{m}(\boldsymbol{\Sigma}^{-1})]^{T}$
and $\boldsymbol{\sigma}(\boldsymbol{\Psi}^{-1})=[\sigma_{1}(\boldsymbol{\Psi}^{-1}),\ldots,\sigma_{n}(\mathbf{\boldsymbol{\Psi}}^{-1})]^{T}$
be the vectors of non-increasingly ordered singular values of $\boldsymbol{\Sigma}^{-1}$
and $\boldsymbol{\Psi}^{-1}$, respectively, and let the relevant
variables be defined according to Table \ref{tab:Notations}. Then,
the MVG mechanism guarantees $(\epsilon,\delta)$-differential privacy
if $\boldsymbol{\Sigma}$ and $\boldsymbol{\Psi}$ satisfy the following
condition, 
\begin{equation}
\left\Vert \boldsymbol{\sigma}(\boldsymbol{\Sigma}^{-1})\right\Vert _{2}\left\Vert \boldsymbol{\sigma}(\boldsymbol{\Psi}^{-1})\right\Vert _{2}\leq\frac{(-\beta+\sqrt{\beta^{2}+8\alpha\epsilon})^{2}}{4\alpha^{2}},\label{eq.sufficient_condition}
\end{equation}
where $\alpha=[H_{r}+H_{r,1/2}]\gamma^{2}+2H_{r}\gamma s_{2}(f)$,
and $\beta=2(mn)^{1/4}\zeta(\delta)H_{r}s_{2}(f)$. 
\end{thm}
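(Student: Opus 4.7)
The plan is to show that when $\mathbf{Y}$ is drawn from the MVG mechanism applied to $\mathbf{X}_{1}$, the log-likelihood ratio $L:=\ln[p(\mathbf{Y}\mid\mathbf{X}_{1})/p(\mathbf{Y}\mid\mathbf{X}_{2})]$ satisfies $L\leq\epsilon$ with probability at least $1-\delta$, which by Definition \ref{def:differential_privacy} yields $(\epsilon,\delta)$-differential privacy. The MVG density from Definition \ref{def:tmvg_dist} has normalizer $(2\pi)^{mn/2}|\boldsymbol{\Psi}|^{m/2}|\boldsymbol{\Sigma}|^{n/2}$, which is independent of the mean and therefore cancels in the ratio, so $2L$ equals a difference of two trace-quadratic forms in the exponent. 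Writing $\mathbf{Y}=f(\mathbf{X}_{1})+\mathbf{Z}$ with $\mathbf{Z}\sim\mathcal{MVG}_{m,n}(\mathbf{0},\boldsymbol{\Sigma},\boldsymbol{\Psi})$ and $\boldsymbol{\Delta}:=f(\mathbf{X}_{1})-f(\mathbf{X}_{2})$, and using trace cyclicity together with $\mathrm{tr}(\mathbf{M})=\mathrm{tr}(\mathbf{M}^{T})$, I would rearrange the expansion into a sum of traces of the common form $\mathrm{tr}[\boldsymbol{\Psi}^{-1}\mathbf{A}^{T}\boldsymbol{\Sigma}^{-1}\mathbf{B}]$ with $\mathbf{A},\mathbf{B}\in\{\mathbf{Y},f(\mathbf{X}_{1}),\boldsymbol{\Delta}\}$.

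Each such trace is bounded by a three-step chain. First, Lemma \ref{lem:abs_trace_bound} or Lemma \ref{lem:v_neumann} converts it into a sum of singular values of a product of the constituent matrices. Second, Lemma \ref{lem:singular_bound} ($\sigma_{i}(\mathbf{C})\leq\|\mathbf{C}\|_{F}/\sqrt{i}$) is applied to those singular values, producing either $\sum_{i=1}^{r}1/i=H_{r}$ (when Lemma \ref{lem:singular_bound} is used twice, as naturally follows Lemma \ref{lem:v_neumann}) or $\sum_{i=1}^{r}1/\sqrt{i}=H_{r,1/2}$ (when it is used once, after Lemma \ref{lem:abs_trace_bound}). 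Third, submultiplicativity of the Frobenius norm factors out $\|\boldsymbol{\Sigma}^{-1}\|_{F}\|\boldsymbol{\Psi}^{-1}\|_{F}=\|\boldsymbol{\sigma}(\boldsymbol{\Sigma}^{-1})\|_{2}\|\boldsymbol{\sigma}(\boldsymbol{\Psi}^{-1})\|_{2}$, leaving only $\|\mathbf{A}\|_{F}\|\mathbf{B}\|_{F}$. Those last factors are controlled by $\|f(\mathbf{X}_{1})\|_{F}\leq\gamma$, by $\|\boldsymbol{\Delta}\|_{F}\leq s_{2}(f)$ from Definition \ref{def:sensitivity} (with the looser $\|\boldsymbol{\Delta}\|_{F}\leq 2\gamma$ available to absorb $s_{2}(f)^{2}$ contributions into $\gamma s_{2}(f)$ ones), and by $\|\mathbf{Y}\|_{F}\leq\gamma+\|\mathbf{Z}\|_{F}$ via the triangle inequality. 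The random term $\|\mathbf{Z}\|_{F}$ is then bounded with probability $\geq 1-\delta$ by invoking Lemma \ref{lem:laurent_massart} on the standardized noise $\mathbf{N}=\boldsymbol{\Sigma}^{-1/2}\mathbf{Z}\boldsymbol{\Psi}^{-1/2}\sim\mathcal{MVG}_{m,n}(\mathbf{0},\mathbf{I},\mathbf{I})$, which gives $\|\mathbf{N}\|_{F}\leq\zeta(\delta)$, and then transferring this bound back to $\mathbf{Z}$ through a final application of Lemma \ref{lem:singular_bound}; the $(mn)^{1/4}$ factor appearing in $\beta$ emerges from this transfer.

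Collecting every contribution and grouping by powers of $x:=\|\boldsymbol{\sigma}(\boldsymbol{\Sigma}^{-1})\|_{2}\|\boldsymbol{\sigma}(\boldsymbol{\Psi}^{-1})\|_{2}$ produces $2L\leq\alpha x+\beta\sqrt{x}$ with $\alpha$ and $\beta$ exactly as in the statement. Requiring $2L\leq 2\epsilon$ is then the quadratic inequality $\alpha x+\beta\sqrt{x}\leq 2\epsilon$ in the variable $\sqrt{x}$, whose positive root is precisely the right-hand side of (\ref{eq.sufficient_condition}); Definition \ref{def:differential_privacy} concludes the argument. The main obstacle is the bookkeeping in the middle step: for each of the three trace terms one must choose between Lemma \ref{lem:v_neumann} and Lemma \ref{lem:abs_trace_bound}, and must decide for each occurrence of $\boldsymbol{\Delta}$ whether to use $s_{2}(f)$ or $2\gamma$, so that the resulting coefficients line up term-by-term to produce exactly the $(H_{r}+H_{r,1/2})\gamma^{2}+2H_{r}\gamma s_{2}(f)$ structure of $\alpha$ and the $(mn)^{1/4}\zeta(\delta)H_{r}s_{2}(f)$ structure of $\beta$; making these choices consistently — so that the sum of the pointwise trace bounds assembles into the clean quadratic form above — is the delicate piece of the proof.
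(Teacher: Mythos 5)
Your high-level skeleton is the same as the paper's (bound the likelihood ratio on the high-probability event from Lemma \ref{lem:laurent_massart}, reduce to a trace inequality, bound each trace with Lemma \ref{lem:v_neumann} or Lemma \ref{lem:abs_trace_bound} together with Lemma \ref{lem:singular_bound}, and solve a quadratic in $\sqrt{x}$ for $x:=\|\boldsymbol{\sigma}(\boldsymbol{\Sigma}^{-1})\|_{2}\|\boldsymbol{\sigma}(\boldsymbol{\Psi}^{-1})\|_{2}$), but there is a genuine gap in exactly the step that must produce the $\beta\sqrt{x}$ term. In your plan you first factor $\|\boldsymbol{\Sigma}^{-1}\|_{F}\|\boldsymbol{\Psi}^{-1}\|_{F}=x$ out of the noise-dependent trace, leaving $\|\mathbf{Y}\|_{F}\|\boldsymbol{\Delta}\|_{F}$, then bound $\|\mathbf{Y}\|_{F}\leq\gamma+\|\mathbf{Z}\|_{F}$ and transfer the Laurent--Massart bound $\|\mathbf{N}\|_{F}\leq\zeta(\delta)$ back to $\|\mathbf{Z}\|_{F}$. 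That transfer costs a factor $\|\boldsymbol{\Sigma}^{1/2}\|_{F}\|\boldsymbol{\Psi}^{1/2}\|_{F}$, i.e.\ it involves the singular values of $\boldsymbol{\Sigma}$ and $\boldsymbol{\Psi}$ themselves rather than of their inverses; this quantity is not constrained by the hypothesis of the theorem and, for fixed $x$, can be made arbitrarily large (send one singular value of $\boldsymbol{\Sigma}^{-1}$ to zero). The resulting contribution is of the form $H_{r}s_{2}(f)\,x\,\|\boldsymbol{\Sigma}^{1/2}\|_{F}\|\boldsymbol{\Psi}^{1/2}\|_{F}\,\zeta(\delta)$, which neither collapses to $\beta\sqrt{x}$ nor is bounded by any function of $x$; a ``final application of Lemma \ref{lem:singular_bound}'' cannot repair this, and the $(mn)^{1/4}$ factor does not arise from such a transfer. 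So the claimed inequality $2L\leq\alpha x+\beta\sqrt{x}$ does not follow from the steps as described.

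The missing idea is to perform the cancellation \emph{inside} the trace before taking any norms: write $\mathcal{Z}=\mathbf{W}_{\boldsymbol{\Sigma}}\boldsymbol{\Lambda}_{\boldsymbol{\Sigma}}^{1/2}\mathcal{N}\boldsymbol{\Lambda}_{\boldsymbol{\Psi}}^{1/2}\mathbf{W}_{\boldsymbol{\Psi}}^{T}$, so that $\boldsymbol{\Psi}^{-1}\mathcal{Z}^{T}\boldsymbol{\Sigma}^{-1}$ collapses to $\mathbf{W}_{\boldsymbol{\Psi}}\boldsymbol{\Lambda}_{\boldsymbol{\Psi}}^{-1/2}\mathcal{N}^{T}\boldsymbol{\Lambda}_{\boldsymbol{\Sigma}}^{-1/2}\mathbf{W}_{\boldsymbol{\Sigma}}^{T}$ and only the half powers of the inverse covariances survive. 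Then Lemma \ref{lem:v_neumann} and Lemma \ref{lem:singular_bound} give a bound through $\|\boldsymbol{\Lambda}_{\boldsymbol{\Sigma}}^{-1/2}\|_{F}\|\boldsymbol{\Lambda}_{\boldsymbol{\Psi}}^{-1/2}\|_{F}\|\mathcal{N}\|_{F}$, and the norm relation $\|\boldsymbol{\sigma}(\boldsymbol{\Sigma}^{-1})\|_{1}^{1/2}\leq m^{1/4}\|\boldsymbol{\sigma}(\boldsymbol{\Sigma}^{-1})\|_{2}^{1/2}$ (and its analogue for $\boldsymbol{\Psi}$) is what simultaneously produces the $(mn)^{1/4}$ factor and the $\sqrt{x}$ dependence, with Lemma \ref{lem:laurent_massart} applied directly to $\|\mathcal{N}\|_{F}$ and never to $\|\mathcal{Z}\|_{F}$. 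With that replacement, your bookkeeping assembles as in the paper: the two $\boldsymbol{\Delta}$-cross terms each contribute $\gamma H_{r}s_{2}(f)x+(mn)^{1/4}\zeta(\delta)H_{r}s_{2}(f)\sqrt{x}$, the two quadratic query terms contribute $[H_{r}+H_{r,1/2}]\gamma^{2}x$ (the $H_{r,1/2}$ coming from Lemma \ref{lem:abs_trace_bound} followed by a single use of Lemma \ref{lem:singular_bound}), and the quadratic-root step you describe then yields exactly the stated condition.
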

\begin{proof}
The MVG mechanism guarantees differential privacy if for every pair
of neighboring datasets $\{\mathbf{X}_{1},\mathbf{X}_{2}\}$ and all
possible measurable sets $\mathbf{S}\subseteq\mathbb{R}^{m\times n}$,
\[
\Pr\left[f(\mathbf{X}_{1})+\mathcal{Z}\in\mathbf{S}\right]\leq\exp(\epsilon)\Pr\left[f(\mathbf{X}_{2})+\mathcal{Z}\in\mathbf{S}\right].
\]
The proof now follows by observing that (cf. Section \ref{subsec:affine_tx}),
\[
\mathcal{Z}=\mathbf{W}_{\boldsymbol{\Sigma}}\boldsymbol{\Lambda}_{\boldsymbol{\Sigma}}^{1/2}\mathcal{N}\boldsymbol{\Lambda}_{\boldsymbol{\Psi}}^{1/2}\mathbf{W}_{\boldsymbol{\Psi}}^{T},
\]
and defining the following events: 
\[
\mathbf{R}_{1}=\{\mathcal{N}:\|\mathcal{N}\|_{F}^{2}\le\zeta(\delta)^{2}\},\,\mathbf{R}_{2}=\{\mathcal{N}:\|\mathcal{N}\|_{F}^{2}>\zeta(\delta)^{2}\},
\]
where $\zeta(\delta)$ is defined in Theorem \ref{lem:laurent_massart}.
Next, observe that 
\begin{align*}
\Pr\left[f(\mathbf{X}_{1})+\mathcal{Z}\in\mathbf{S}\right] & =\Pr\left[\left(\{f(\mathbf{X}_{1})+\mathcal{Z}\in\mathbf{S}\}\cap\mathbf{R}_{1}\right)\cup\left(\{f(\mathbf{X}_{1})+\mathcal{Z}\in\mathbf{S}\}\cap\mathbf{R}_{2}\right)\right]\\
 & \le\Pr\left[\{f(\mathbf{X}_{1})+\mathcal{Z}\in\mathbf{S}\}\cap\mathbf{R}_{1}\right]+\Pr\left[\{f(\mathbf{X}_{1})+\mathcal{Z}\in\mathbf{S}\}\cap\mathbf{R}_{2}\right],
\end{align*}
where the last inequality follows from the union bound. By Theorem
\ref{lem:laurent_massart} and the definition of the set $\mathbf{R}_{2}$,
we have, 
\begin{align*}
\Pr\left[\{f(\mathbf{X}_{1})+\mathcal{Z}\in\mathbf{S}\}\cap\mathbf{R}_{2}\right]\le\Pr\left[\mathbf{R}_{2}\right]=1-\Pr\left[\mathbf{R}_{1}\right]\le\delta.
\end{align*}

In the rest of the proof, we find sufficient conditions for the following
inequality to hold: 
\begin{align*}
\Pr\left[f(\mathbf{X}_{1})+\mathcal{Z}\in(\mathbf{S}\cap\mathbf{R}_{1})\right]\le\exp(\epsilon)\Pr\left[f(\mathbf{X}_{2})+\mathcal{Z}\in\mathbf{S}\right].
\end{align*}
this would complete the proof of differential privacy guarantee.

Using the definition of $\mathcal{MVG}_{m,n}(\mathbf{0},\boldsymbol{\Sigma},\boldsymbol{\Psi})$
(Definition \ref{def:tmvg_dist}), this is satisfied if we have, 
\begin{align*}
\int_{\mathbf{S}\cap\mathbf{R}_{1}}\exp\{-\frac{1}{2}\mathrm{tr}[\boldsymbol{\Psi}^{-1}(\mathbf{Y}-f(\mathbf{X}_{1}))^{T}\boldsymbol{\Sigma}^{-1}(\mathbf{Y}-f(\mathbf{X}_{1}))]\}d\mathbf{Y} & \leq\\
\exp(\epsilon)\cdot\int_{\mathbf{S}\cap\mathbf{R}_{1}}\exp\{-\frac{1}{2}\mathrm{tr}[\boldsymbol{\Psi}^{-1}(\mathbf{Y}-f(\mathbf{X}_{2}))^{T}\boldsymbol{\Sigma}^{-1}(\mathbf{Y}-f(\mathbf{X}_{2}))]\}d\mathbf{Y}.
\end{align*}
By inserting $\frac{\exp\{-\frac{1}{2}\mathrm{tr}[\boldsymbol{\Psi}^{-1}(\mathbf{Y}-f(\mathbf{X}_{2}))^{T}\boldsymbol{\Sigma}^{-1}(\mathbf{Y}-f(\mathbf{X}_{2}))]\}}{\exp\{-\frac{1}{2}\mathrm{tr}[\boldsymbol{\Psi}^{-1}(\mathbf{Y}-f(\mathbf{X}_{2}))^{T}\boldsymbol{\Sigma}^{-1}(\mathbf{Y}-f(\mathbf{X}_{2}))]\}}$
inside the integral on the left side, it suffices to show that 
\[
\frac{\exp\{-\frac{1}{2}\mathrm{tr}[\boldsymbol{\Psi}^{-1}(\mathbf{Y}-f(\mathbf{X}_{1}))^{T}\boldsymbol{\Sigma}^{-1}(\mathbf{Y}-f(\mathbf{X}_{1}))]\}}{\exp\{-\frac{1}{2}\mathrm{tr}[\boldsymbol{\Psi}^{-1}(\mathbf{Y}-f(\mathbf{X}_{2}))^{T}\boldsymbol{\Sigma}^{-1}(\mathbf{Y}-f(\mathbf{X}_{2}))]\}}\leq\exp(\epsilon),
\]
for all $\mathbf{Y}\in\mathbf{S}\cap\mathbf{R}_{1}$. With some algebraic
manipulations, the left hand side of this condition can be expressed
as, 
\begin{align*}
= & \exp\{-\frac{1}{2}\mathrm{tr}[\boldsymbol{\Psi}^{-1}\mathbf{Y}{}^{T}\mathbf{\boldsymbol{\Sigma}}^{-1}(f(\mathbf{X}_{2})-f(\mathbf{X}_{1}))+\boldsymbol{\Psi}^{-1}(f(\mathbf{X}_{2})-f(\mathbf{X}_{1}))^{T}\boldsymbol{\Sigma}^{-1}\mathbf{Y}\\
 & -\boldsymbol{\Psi}^{-1}f(\mathbf{X}_{2})^{T}\boldsymbol{\Sigma}^{-1}f(\mathbf{X}_{2})+\boldsymbol{\Psi}^{-1}f(\mathbf{X}_{1})^{T}\boldsymbol{\Sigma}^{-1}f(\mathbf{X}_{1})]\}\\
= & \exp\{\frac{1}{2}\mathrm{tr}[\boldsymbol{\Psi}^{-1}\mathbf{Y}^{T}\mathbf{\boldsymbol{\Sigma}}^{-1}\boldsymbol{\Delta}+\boldsymbol{\Psi}^{-1}\boldsymbol{\Delta}^{T}\boldsymbol{\Sigma}^{-1}\mathbf{Y}\\
 & +\boldsymbol{\Psi}^{-1}f(\mathbf{X}_{2})^{T}\boldsymbol{\Sigma}^{-1}f(\mathbf{X}_{2})-\boldsymbol{\Psi}^{-1}f(\mathbf{X}_{1})^{T}\boldsymbol{\Sigma}^{-1}f(\mathbf{X}_{1})]\},
\end{align*}
where $\boldsymbol{\Delta}=f(\mathbf{X}_{1})-f(\mathbf{X}_{2})$.
This quantity has to be bounded by $\leq\exp(\epsilon)$, so we present
the following \emph{characteristic equation}, which has to be satisfied
for all possible neighboring $\{\mathbf{X}_{1},\mathbf{X}_{2}\}$
and all $\mathbf{Y}\in\mathbf{S}\cap\mathbf{R}_{1}$, for the MVG
mechanism to guarantee $(\epsilon,\delta)$-differential privacy:
\begin{equation}
\mathrm{tr}[\boldsymbol{\Psi}^{-1}\mathcal{Y}^{T}\boldsymbol{\Sigma}^{-1}\boldsymbol{\Delta}+\boldsymbol{\Psi}^{-1}\boldsymbol{\Delta}^{T}\boldsymbol{\Sigma}^{-1}\mathcal{Y}+\boldsymbol{\Psi}^{-1}f(\mathbf{X}_{2})^{T}\boldsymbol{\Sigma}^{-1}f(\mathbf{X}_{2})-\boldsymbol{\Psi}^{-1}f(\mathbf{X}_{1})^{T}\boldsymbol{\Sigma}^{-1}f(\mathbf{X}_{1})]\leq2\epsilon\label{eq:characteristic_eq}
\end{equation}
Specifically, we want to show that this inequality holds with probability
$1-\delta$.

From the characteristic equation, the proof analyzes the four terms
in the sum separately since the trace is additive.

\emph{The first term}: $\mathrm{tr}[\boldsymbol{\Psi}^{-1}\mathcal{Y}^{T}\boldsymbol{\Sigma}^{-1}\boldsymbol{\Delta}]$.
First, let us denote $\mathcal{Y}=f(\mathbf{X})+\mathcal{Z}$, where
$f(\mathbf{X})$ and $\mathcal{Z}$ are any possible instances of
the query and the noise, respectively. Then, we can rewrite the first
term as, $\mathrm{tr}[\boldsymbol{\Psi}^{-1}f(\mathbf{X})^{T}\boldsymbol{\Sigma}^{-1}\boldsymbol{\Delta}]+\mathrm{tr}[\boldsymbol{\Psi}^{-1}\mathcal{Z}^{T}\boldsymbol{\Sigma}^{-1}\boldsymbol{\Delta}]$.
The earlier part can be bounded from Lemma \ref{lem:v_neumann}: 
\[
\mathrm{tr}[\boldsymbol{\Psi}^{-1}f(\mathbf{X})^{T}\boldsymbol{\Sigma}^{-1}\boldsymbol{\Delta}]\leq\sum_{i=1}^{r}\sigma_{i}(\boldsymbol{\Psi}^{-1}f(\mathbf{X})^{T})\sigma_{i}(\boldsymbol{\Delta}^{T}\boldsymbol{\Sigma}^{-1}).
\]
Lemma \ref{lem:singular_bound} can then be used to bound each singular
value. In more detail, 
\[
\sigma_{i}(\boldsymbol{\Psi}^{-1}f(\mathbf{X})^{T})\leq\frac{\left\Vert \boldsymbol{\Psi}^{-1}f(\mathbf{X})^{T}\right\Vert _{F}}{\sqrt{i}}\leq\frac{\left\Vert \boldsymbol{\Psi}^{-1}\right\Vert _{F}\left\Vert f(\mathbf{X})\right\Vert _{F}}{\sqrt{i}},
\]
where the last inequality is via the sub-multiplicative property of
a matrix norm \cite{RefWorks:290}. It is well-known that $\bigl\Vert\boldsymbol{\Psi}^{-1}\bigr\Vert_{F}=\bigl\Vert\boldsymbol{\sigma}(\boldsymbol{\Psi}^{-1})\bigr\Vert_{2}$
(cf. \cite[p. 342]{RefWorks:208}), and since $\gamma=\sup_{\mathbf{X}}\bigl\Vert f(\mathbf{X})\bigr\Vert_{F}$,
\[
\sigma_{i}(\boldsymbol{\Psi}^{-1}f(\mathbf{X})^{T})\leq\gamma\left\Vert \boldsymbol{\sigma}(\boldsymbol{\Psi}^{-1})\right\Vert _{2}/\sqrt{i}.
\]
Applying the same steps to the other singular value, and using Definition
\ref{def:sensitivity}, we can write, 
\[
\sigma_{i}(\boldsymbol{\Delta}^{T}\boldsymbol{\Sigma}^{-1})\leq s_{2}(f)\left\Vert \boldsymbol{\sigma}(\boldsymbol{\Sigma}^{-1})\right\Vert _{2}/\sqrt{i}.
\]
Substituting the two singular value bounds, the earlier part of the
first term can then be bounded by,
\begin{equation}
\mathrm{tr}[\boldsymbol{\Psi}^{-1}f(\mathbf{X})^{T}\boldsymbol{\Sigma}^{-1}\boldsymbol{\Delta}]\leq\gamma s_{2}(f)H_{r}\left\Vert \boldsymbol{\sigma}(\boldsymbol{\Sigma}^{-1})\right\Vert _{2}\left\Vert \boldsymbol{\sigma}(\boldsymbol{\Psi}^{-1})\right\Vert _{2}.\label{eq:first_term_part1}
\end{equation}

The latter part of the first term is more complicated since it involves
$\mathcal{Z}$, so we will derive the bound in more detail. First,
let us define $\mathcal{N}$ to be drawn from $\mathcal{MVG}_{m,n}(\mathbf{0},\mathbf{I}_{m},\mathbf{I}_{n})$,
so we can write $\mathcal{Z}$ in terms of $\mathcal{N}$ using affine
transformation \cite{RefWorks:279}: $\mathcal{Z}=\mathbf{B}_{\boldsymbol{\Sigma}}\mathcal{N}\mathbf{B}_{\boldsymbol{\Psi}}^{T}$.
To specify $\mathbf{B}_{\boldsymbol{\Sigma}}$ and $\mathbf{B}_{\boldsymbol{\Psi}}$,
we solve the following linear equations, respectively, 
\begin{align*}
\mathbf{B}_{\boldsymbol{\Sigma}}\mathbf{B}_{\boldsymbol{\Sigma}}^{T} & =\boldsymbol{\Sigma};\\
\mathbf{B}_{\boldsymbol{\Psi}}\mathbf{B}_{\boldsymbol{\Psi}}^{T} & =\boldsymbol{\Psi}.
\end{align*}
This can be readily solved with SVD (cf. \cite[p. 440]{RefWorks:208});
hence, $\mathbf{B}_{\boldsymbol{\Sigma}}=\mathbf{W}_{\boldsymbol{\Sigma}}\boldsymbol{\Lambda}_{\boldsymbol{\Sigma}}^{\frac{1}{2}}$,
and $\mathbf{B}_{\boldsymbol{\Psi}}=\mathbf{W}_{\boldsymbol{\Psi}}\boldsymbol{\Lambda}_{\boldsymbol{\Psi}}^{\frac{1}{2}}$,
where $\boldsymbol{\Sigma}=\mathbf{W}_{\boldsymbol{\Sigma}}\boldsymbol{\Lambda}_{\boldsymbol{\Sigma}}\mathbf{W}_{\boldsymbol{\Sigma}}^{T}$,
and $\boldsymbol{\Psi}=\mathbf{W}_{\boldsymbol{\Psi}}\boldsymbol{\Lambda}_{\boldsymbol{\Psi}}\mathbf{W}_{\boldsymbol{\Psi}}^{T}$
from SVD. Therefore, $\mathcal{Z}$ can be written as, 
\[
\mathcal{Z}=\mathbf{W}_{\boldsymbol{\Sigma}}\boldsymbol{\Lambda}_{\boldsymbol{\Sigma}}^{1/2}\mathcal{N}\boldsymbol{\Lambda}_{\boldsymbol{\Psi}}^{1/2}\mathbf{W}_{\boldsymbol{\Psi}}^{T}.
\]
Substituting into the latter part of the first term yields, 
\[
\mathrm{tr}[\boldsymbol{\Psi}^{-1}\mathcal{Z}^{T}\boldsymbol{\Sigma}^{-1}\boldsymbol{\Delta}]=\mathrm{tr}[\mathbf{W}_{\boldsymbol{\Psi}}\boldsymbol{\Lambda}_{\boldsymbol{\Psi}}^{-1/2}\mathcal{N}\boldsymbol{\Lambda}_{\boldsymbol{\Sigma}}^{-1/2}\mathbf{W}_{\boldsymbol{\Sigma}}^{T}\boldsymbol{\Delta}].
\]
This can be bounded by Lemma \ref{lem:v_neumann} as, 
\[
\mathrm{tr}[\boldsymbol{\Psi}^{-1}\mathcal{Z}^{T}\boldsymbol{\Sigma}^{-1}\boldsymbol{\Delta}]\leq\sum_{i=1}^{r}\sigma_{i}(\mathbf{W}_{\boldsymbol{\Psi}}\boldsymbol{\Lambda}_{\boldsymbol{\Psi}}^{-1/2}\mathcal{N}\boldsymbol{\Lambda}_{\boldsymbol{\Sigma}}^{-1/2}\mathbf{W}_{\boldsymbol{\Sigma}}^{T})\sigma_{i}(\boldsymbol{\Delta}).
\]
The two singular values can then be bounded by Lemma \ref{lem:singular_bound}.
For the first singular value, 
\begin{align*}
\sigma_{i}(\mathbf{W}_{\boldsymbol{\Psi}}\boldsymbol{\Lambda}_{\boldsymbol{\Psi}}^{-1/2}\mathcal{N}\boldsymbol{\Lambda}_{\boldsymbol{\Sigma}}^{-1/2}\mathbf{W}_{\boldsymbol{\Sigma}}^{T}) & \leq\frac{\left\Vert \mathbf{W}_{\boldsymbol{\Psi}}\boldsymbol{\Lambda}_{\boldsymbol{\Psi}}^{-1/2}\mathcal{N}\boldsymbol{\Lambda}_{\boldsymbol{\Sigma}}^{-1/2}\mathbf{W}_{\boldsymbol{\Sigma}}^{T}\right\Vert _{F}}{\sqrt{i}}\\
 & \leq\frac{\left\Vert \boldsymbol{\Lambda}_{\boldsymbol{\Sigma}}^{-1/2}\right\Vert _{F}\left\Vert \boldsymbol{\Lambda}_{\boldsymbol{\Psi}}^{-1/2}\right\Vert _{F}\left\Vert \mathcal{N}\right\Vert _{F}}{\sqrt{i}}.
\end{align*}
By definition, $\left\Vert \boldsymbol{\Lambda}_{\boldsymbol{\Sigma}}^{-1/2}\right\Vert _{F}=\sqrt{\mathrm{tr}(\boldsymbol{\Lambda}_{\boldsymbol{\Sigma}}^{-1})}=\left\Vert \boldsymbol{\sigma}(\boldsymbol{\Sigma}^{-1})\right\Vert _{1}^{1/2}$,
where $\left\Vert \cdot\right\Vert _{1}$ is the 1-norm. By norm relation,
$\left\Vert \boldsymbol{\sigma}(\boldsymbol{\Sigma}^{-1})\right\Vert _{1}^{1/2}\leq m^{1/4}\left\Vert \boldsymbol{\sigma}(\boldsymbol{\Sigma}^{-1})\right\Vert _{2}^{1/2}$.
With similar derivation for $\bigl\Vert\boldsymbol{\Lambda}_{\boldsymbol{\Psi}}^{-1/2}\bigr\Vert_{F}$
and with Theorem \ref{lem:laurent_massart}, the singular value can
be bounded with probability $1-\delta$ as, 
\[
\sigma_{i}(\mathbf{W}_{\boldsymbol{\Psi}}\boldsymbol{\Lambda}_{\boldsymbol{\Psi}}^{-\frac{1}{2}}\mathcal{N}\boldsymbol{\Lambda}_{\boldsymbol{\Sigma}}^{-\frac{1}{2}}\mathbf{W}_{\boldsymbol{\Sigma}}^{T})\leq\frac{(mn)^{\frac{1}{4}}\zeta(\delta)\left\Vert \boldsymbol{\sigma}(\boldsymbol{\Sigma}^{-1})\right\Vert _{2}^{\frac{1}{2}}\left\Vert \boldsymbol{\sigma}(\boldsymbol{\Psi}^{-1})\right\Vert _{2}^{\frac{1}{2}}}{\sqrt{i}}.
\]
Meanwhile, the other singular value can be readily bounded with Lemma
\ref{lem:singular_bound} as $\sigma_{i}(\boldsymbol{\Delta})\leq s_{2}(f)/\sqrt{i}$.
Hence, the latter part of the first term is bounded with probability
$\geq1-\delta$ as,
\begin{equation}
\mathrm{tr}[\boldsymbol{\Psi}^{-1}\mathcal{Z}^{T}\boldsymbol{\Sigma}^{-1}\boldsymbol{\Delta}]\leq(mn)^{\frac{1}{4}}\zeta(\delta)H_{r}s_{2}(f)\left\Vert \boldsymbol{\sigma}(\boldsymbol{\Sigma}^{-1})\right\Vert _{2}^{\frac{1}{2}}\left\Vert \boldsymbol{\sigma}(\boldsymbol{\Psi}^{-1})\right\Vert _{2}^{\frac{1}{2}}.\label{eq:first_term_part2}
\end{equation}
 Since the parameter $(\left\Vert \boldsymbol{\sigma}(\boldsymbol{\Sigma}^{-1})\right\Vert _{2}\left\Vert \boldsymbol{\sigma}(\boldsymbol{\Psi}^{-1})\right\Vert _{2})^{1/2}$
appears a lot in the derivation, let us define 
\[
\phi=(\left\Vert \boldsymbol{\sigma}(\boldsymbol{\Sigma}^{-1})\right\Vert _{2}\left\Vert \boldsymbol{\sigma}(\boldsymbol{\Psi}^{-1})\right\Vert _{2})^{1/2}.
\]
Finally, combining Eq. (\ref{eq:first_term_part1}) and (\ref{eq:first_term_part2})
yields the bound for the first term, 
\begin{equation}
\mathrm{tr}[\boldsymbol{\Psi}^{-1}\mathcal{Y}^{T}\boldsymbol{\Sigma}^{-1}\boldsymbol{\Delta}]\leq\gamma H_{r}s_{2}(f)\phi^{2}+(mn)^{1/4}\zeta(\delta)H_{r}s_{2}(f)\phi.\label{eq:first_term_bound}
\end{equation}

\emph{The second term}: $\mathrm{tr}[\boldsymbol{\Psi}^{-1}\boldsymbol{\Delta}^{T}\boldsymbol{\Sigma}^{-1}\mathcal{Y}]$.
By following the same steps as in the first term, it can be shown
that the second term has the exact same bound as the first terms,
i.e. 
\begin{equation}
\mathrm{tr}[\boldsymbol{\Psi}^{-1}\boldsymbol{\Delta}^{T}\boldsymbol{\Sigma}^{-1}\mathcal{Y}]\leq\gamma H_{r}s_{2}(f)\phi^{2}+(mn)^{1/4}\zeta(\delta)H_{r}s_{2}(f)\phi.\label{eq:second_term_bound}
\end{equation}

\emph{The third term}: $\mathrm{tr}[\boldsymbol{\Psi}^{-1}f(\mathbf{X}_{2})^{T}\boldsymbol{\Sigma}^{-1}f(\mathbf{X}_{2})]$.
Applying Lemma \ref{lem:v_neumann} and \ref{lem:singular_bound},
we can readily bound it as, 
\[
\mathrm{tr}[\boldsymbol{\Psi}^{-1}f(\mathbf{X}_{2})^{T}\boldsymbol{\Sigma}^{-1}f(\mathbf{X}_{2})]\leq\gamma^{2}H_{r}\phi^{2}.
\]

\emph{The fourth term}: $-\mathrm{tr}[\boldsymbol{\Psi}^{-1}f(\mathbf{X}_{1})^{T}\boldsymbol{\Sigma}^{-1}f(\mathbf{X}_{1})]$.
Since this term has the negative sign, we consider the absolute value
instead. Using Lemma \ref{lem:abs_trace_bound}, 
\[
\left|\mathrm{tr}[\boldsymbol{\Psi}^{-1}f(\mathbf{X}_{1})^{T}\boldsymbol{\Sigma}^{-1}f(\mathbf{X}_{1})]\right|\leq\sum_{i=1}^{r}\sigma_{i}(\boldsymbol{\Psi}^{-1}f(\mathbf{X}_{1})^{T}\boldsymbol{\Sigma}^{-1}f(\mathbf{X}_{1})).
\]
Then, using the singular value bound in Lemma \ref{lem:singular_bound},
\[
\sigma_{i}(\boldsymbol{\Psi}^{-1}f(\mathbf{X}_{1})^{T}\boldsymbol{\Sigma}^{-1}f(\mathbf{X}_{1}))\leq\frac{\left\Vert \boldsymbol{\Psi}^{-1}\right\Vert _{F}\left\Vert f(\mathbf{X}_{1})\right\Vert _{F}^{2}\left\Vert \boldsymbol{\Sigma}^{-1}\right\Vert _{F}}{\sqrt{i}}.
\]
Hence, the fourth term can be bounded by, 
\[
\left|\mathrm{tr}[\boldsymbol{\Psi}^{-1}f(\mathbf{X}_{1})^{T}\boldsymbol{\Sigma}^{-1}f(\mathbf{X}_{1})]\right|\leq\gamma^{2}H_{r,1/2}\phi^{2}.
\]

\emph{Four terms combined:} by combining the four terms and rearranging
them, the characteristic equation becomes, 
\begin{equation}
\alpha\phi^{2}+\beta\phi\leq2\epsilon.\label{eq:quad_eq_general}
\end{equation}
This is a quadratic equation, of which the solution is $\phi\in[\frac{-\beta-\sqrt{\beta^{2}+8\alpha\epsilon}}{2\alpha},\frac{-\beta+\sqrt{\beta^{2}+8\alpha\epsilon}}{2\alpha}]$.
Since we know $\phi\geq0$, due to the axiom of the norm, we only
have the one-sided solution, 
\[
\phi\leq\frac{-\beta+\sqrt{\beta^{2}+8\alpha\epsilon}}{2\alpha},
\]
which immediately implies the criterion in Theorem \ref{thm:design_general}. 
\end{proof}
\begin{figure}
\begin{centering}
\includegraphics[scale=0.5]{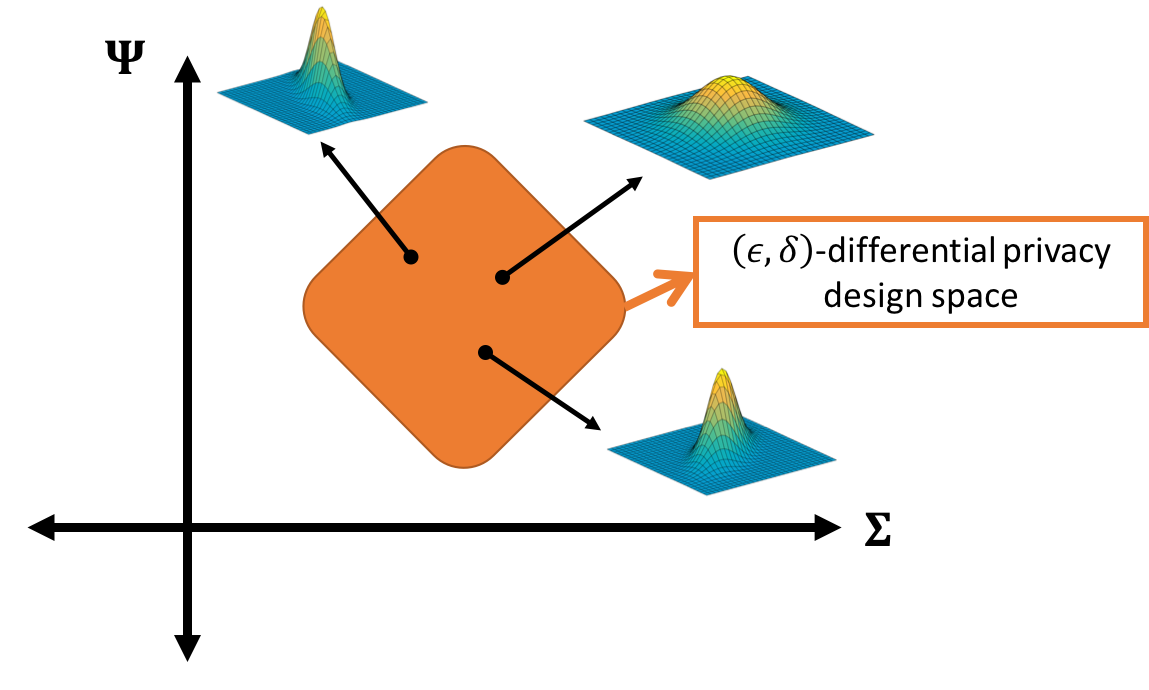} 
\par\end{centering}
\caption{A conceptual display of the MVG design space. The illustration visualizes
the design space coordinated by the two design parameters of $\mathcal{MVG}_{m,n}(\mathbf{0},\boldsymbol{\Sigma},\boldsymbol{\Psi})$.
Each point on the space corresponds to an instance of $\mathcal{MVG}_{m,n}(\mathbf{0},\boldsymbol{\Sigma},\boldsymbol{\Psi})$.
From this perspective, Theorem \ref{thm:design_general} suggests
that any instance of $\mathcal{MVG}_{m,n}(\mathbf{0},\boldsymbol{\Sigma},\boldsymbol{\Psi})$
in the (conceptual) shaded area would preserve $(\epsilon,\delta)$-differential
privacy. \label{fig:A-conceptual-display}}
\end{figure}
\begin{rem}
In Theorem \ref{thm:design_general}, we assume that the Frobenius
norm of the query function is bounded for all possible datasets by
$\gamma$. This assumption is valid in practice because real-world
data are rarely unbounded (cf. \cite{RefWorks:372}), and it is a
common assumption in the analysis of differential privacy for multi-dimensional
query functions (cf. \cite{RefWorks:195,RefWorks:178,RefWorks:338,RefWorks:249}). 
\end{rem}
\begin{rem}
The values of the generalized harmonic numbers \textendash{} $H_{r}$,
and $H_{r,1/2}$ \textendash{} can be obtained from the table lookup
for a given value of $r$, or can easily be computed recursively \cite{RefWorks:289}. 
\end{rem}
The sufficient condition in Theorem \ref{thm:design_general} yields
an important observation: the privacy guarantee by the MVG mechanism
depends \emph{only on the singular values} of $\boldsymbol{\Sigma}$
and $\boldsymbol{\Psi}$ through their norm. In other words, we may
have multiple instances of $\mathcal{MVG}_{m,n}(\mathbf{0},\boldsymbol{\Sigma},\boldsymbol{\Psi})$
that yield the exact same privacy guarantee (cf. Figure \ref{fig:A-conceptual-display}).
This phenomenon gives rise to an interesting novel concept of \emph{directional
noise}, which will be discussed in Section \ref{sec:Directional-Noise}.

We emphasize again that, in Theorem \ref{thm:design_general}, we
derive the sufficient condition for the MVG mechanism to guarantee
$(\epsilon,\delta)$-differential privacy \emph{without} making structural
assumption about the matrix-valued query function. In the next two
sections, we illustrate how incorporating the knowledge about the
intrinsic structural characteristic of the matrix-valued query function
of interest can yield an alternative sufficient condition. Then, we
prove that such alternative sufficient condition can provide the better
utility, when compared to the analysis without using the structural
knowledge.

\subsection{Differential Privacy Analysis for Symmetric Positive Semi-Definite
(PSD) Matrix-Valued Query \label{subsec:Differential-Privacy-Analysis_psd}}

To provide a concrete example of how the structural characteristics
of the matrix-valued query function can be exploited via the MVG mechanism,
we consider a matrix-valued query function that is \emph{symmetric
positive semi-definite (PSD)}. To avoid being cumbersome, we will
drop the explicit 'symmetric' in the subsequent references, but the
readers should keep in mind that we work with symmetric matrices here.
First, let us define a positive semi-definite matrix in our context.
\begin{defn}
A symmetric matrix $\mathbf{X}\in\mathbb{R}^{n\times n}$ is \emph{positive
semi-definite (PSD)} if $\mathbf{v}^{T}\mathbf{X}\mathbf{v}\geq0$
for all non-zero $\mathbf{v}\in\mathbb{R}^{n}$.
\end{defn}
Conceptually, we can think of a positive semi-definite matrix in matrix
analysis as the similar notion to a non-negative number in scalar-valued
analysis. More importantly, positive semi-definite matrices occur
regularly in practical settings. Examples of positive semi-definite
matrices in practice include the maximum likelihood estimate of the
covariance matrix \cite[chapter 7]{RefWorks:208}, the Hessian matrix
\cite[chapter 7]{RefWorks:208}, the kernel matrix in machine learning
\cite{RefWorks:33,RefWorks:469}, and the Laplacian matrix of a graph
\cite{RefWorks:329}.

With respect to differential privacy analysis, the assumption of positive
semi-definiteness on the query function is only applicable if it holds
for every possible instance of the datasets. Fortunately, this is
true for all of the aforementioned matrix-valued query functions because
the positive semi-definiteness is the \emph{intrinsic} nature of such
functions. In other words, if a user queries the maximum likelihood
estimate of the covariance matrix of the dataset, the (non-private)
matrix-valued query answer would always be positive semi-definite
regardless of the dataset from which it is computed. The same property
applies to other examples given. We refer to this type of property
as \emph{intrinsic} to the matrix-valued query function since it holds
due only to the nature of the query function regardless of the nature
of the dataset. This is clearly crucial in differential privacy analysis
as differential privacy considers the worst-case scenario, so any
assumption made would only be valid if it applies even in such scenario.

Before presenting the main result, we emphasize that the intrinsic
nature phenomenon is not unique to positive semi-definite matrix.
In other words, there are many other structural properties of the
matrix-valued query function that are also intrinsic. For example,
the adjacency matrix for an undirected graph is always symmetric \cite{RefWorks:329},
and the bi-stochastic matrix always has all non-negative entries with
each row and column sums up to one \cite{RefWorks:348}. Therefore,
the idea of exploiting structural characteristics of the matrix-valued
query function is very applicable in practice under the setting of
privacy-aware analysis.

Returning to the main result, we consider the MVG mechanism on a query
function that is symmetric positive semi-definite. Due to the definitive
symmetry of the query function output, it is reasonable to impose
the design choice $\boldsymbol{\Sigma}=\boldsymbol{\Psi}$ on the
MVG mechanism. The rationale is that, since the query function is
symmetric, its row-wise covariance and column-wise covariance are
necessarily equal if we view the query output as a random variable.
Hence, it is reasonable to employ the matrix-valued noise with the
same symmetry. As a result, this helps restrict our design space to
that with $\boldsymbol{\Sigma}=\boldsymbol{\Psi}$. With this setting,
we present the following theorem which states the sufficient condition
for the MVG mechanism to preserve $(\epsilon,\delta)$-differential
privacy when the query function is \emph{symmetric positive semi-definite}.
\begin{thm}
\label{thm:mvg_for_psd} Given a symmetric positive semi-definite
(PSD) matrix-valued query function $f(\mathbf{X})\in\mathbb{R}^{r\times r}$,
let $\boldsymbol{\sigma}(\boldsymbol{\Sigma}^{-1})=[\sigma_{1}(\boldsymbol{\Sigma}^{-1}),\ldots,\sigma_{r}(\boldsymbol{\Sigma}^{-1})]^{T}$
be the vectors of non-increasingly ordered singular values of $\boldsymbol{\Sigma}^{-1}$,
let $\boldsymbol{\Psi}=\boldsymbol{\Sigma}$, and let the relevant
variables be defined according to Table \ref{tab:Notations}. Then,
the MVG mechanism guarantees $(\epsilon,\delta)$-differential privacy
if $\boldsymbol{\Sigma}$ satisfy the following condition,
\begin{equation}
\left\Vert \boldsymbol{\sigma}(\boldsymbol{\Sigma}^{-1})\right\Vert _{2}^{2}\leq\frac{(-\beta+\sqrt{\beta^{2}+8\omega\epsilon})^{2}}{4\omega^{2}},\label{eq.sufficient_condition_psd}
\end{equation}
where $\omega=4H_{r}\gamma s_{2}(f)$, and $\beta=2r{}^{1/2}\zeta(\delta)H_{r}s_{2}(f)$. 
\end{thm}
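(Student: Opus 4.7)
The plan is to mirror the four-term decomposition of the characteristic inequality~\eqref{eq:characteristic_eq} used in the proof of Theorem~\ref{thm:design_general}, and to invoke the symmetric PSD structure only where it yields a genuinely tighter bound --- in the combined treatment of the last two trace terms. The setup is identical: define the high-probability event $\mathbf{R}_1 = \{\mathcal{N}:\|\mathcal{N}\|_F^2 \le \zeta(\delta)^2\}$, dispose of its complement via Lemma~\ref{lem:laurent_massart}, and reduce the differential-privacy condition to bounding the four trace terms on $\mathbf{S}\cap\mathbf{R}_1$. Observing that $\boldsymbol{\Psi}=\boldsymbol{\Sigma}$ and $m=n=r$, the quantity $\phi=(\|\boldsymbol{\sigma}(\boldsymbol{\Sigma}^{-1})\|_2\|\boldsymbol{\sigma}(\boldsymbol{\Psi}^{-1})\|_2)^{1/2}$ collapses to $\|\boldsymbol{\sigma}(\boldsymbol{\Sigma}^{-1})\|_2$ and $(mn)^{1/4}$ becomes $r^{1/2}$.

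For the first and second trace terms, which contain $\mathcal{Y}$, the bounds derived in~\eqref{eq:first_term_bound} and~\eqref{eq:second_term_bound} in the general proof carry over unchanged; their sum contributes
\[
2\gamma H_r s_2(f)\,\phi^2 + 2r^{1/2}\zeta(\delta)H_r s_2(f)\,\phi.
\]

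The crucial step --- where the PSD hypothesis is actually exploited --- is a joint bound on the third and fourth terms, rather than bounding each in isolation as was done in Theorem~\ref{thm:design_general}. Since $f(\mathbf{X}_1)$, $f(\mathbf{X}_2)$, and $\boldsymbol{\Sigma}^{-1}$ are all symmetric, the cyclic property of the trace gives $\mathrm{tr}[\mathbf{M}\mathbf{A}\mathbf{M}\mathbf{B}]=\mathrm{tr}[\mathbf{M}\mathbf{B}\mathbf{M}\mathbf{A}]$ for $\mathbf{M}=\boldsymbol{\Sigma}^{-1}$, $\mathbf{A}=f(\mathbf{X}_2)$, $\mathbf{B}=f(\mathbf{X}_1)$, which in turn validates the algebraic identity
\[
\mathrm{tr}[\mathbf{M}\mathbf{A}\mathbf{M}\mathbf{A}] - \mathrm{tr}[\mathbf{M}\mathbf{B}\mathbf{M}\mathbf{B}] = \mathrm{tr}[\mathbf{M}(\mathbf{A}-\mathbf{B})\mathbf{M}(\mathbf{A}+\mathbf{B})].
\]
Thus the combined third-plus-fourth term equals $-\mathrm{tr}[\boldsymbol{\Sigma}^{-1}\boldsymbol{\Delta}\,\boldsymbol{\Sigma}^{-1}(f(\mathbf{X}_1)+f(\mathbf{X}_2))]$. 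Applying Lemma~\ref{lem:v_neumann} to this trace (with a sign flip that preserves the bound, since singular values are sign-invariant), followed by Lemma~\ref{lem:singular_bound} together with the sub-multiplicative property of the Frobenius norm on each singular value, and using $\|\boldsymbol{\Delta}\|_F\le s_2(f)$ together with $\|f(\mathbf{X}_1)+f(\mathbf{X}_2)\|_F\le 2\gamma$, bounds the combined contribution by $2\gamma s_2(f)H_r\,\phi^2$. Crucially, the $\gamma^2$ scaling appearing in Theorem~\ref{thm:design_general} has been traded for a $\gamma s_2(f)$ scaling, which is tighter in any realistic setting where $s_2(f)\ll\gamma$.

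Summing all four bounds yields the quadratic inequality $\omega\phi^2 + \beta\phi \le 2\epsilon$ with $\omega = 4H_r\gamma s_2(f)$ and $\beta = 2r^{1/2}\zeta(\delta)H_r s_2(f)$. Since $\phi\ge 0$ by the norm axiom, the one-sided solution gives $\phi\le(-\beta+\sqrt{\beta^2+8\omega\epsilon})/(2\omega)$, and squaring both sides produces the sufficient condition~\eqref{eq.sufficient_condition_psd}. The main obstacle is the combined third-plus-fourth identity: one has to use both the symmetry of $f(\mathbf{X}_i)$ (guaranteed by the PSD hypothesis) and the design choice $\boldsymbol{\Psi}=\boldsymbol{\Sigma}$ to execute the cross-term cancellation, and then verify that the resulting scaling genuinely improves on the naive bound obtained by applying Theorem~\ref{thm:design_general} directly. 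The remaining algebra is routine and parallels the general proof.
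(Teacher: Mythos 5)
Your proposal is correct and follows essentially the same route as the paper: the same characteristic-equation setup with the Laurent--Massart event, the first two trace bounds carried over from Theorem \ref{thm:design_general} with $\phi=\left\Vert \boldsymbol{\sigma}(\boldsymbol{\Sigma}^{-1})\right\Vert _{2}$ and $(mn)^{1/4}=r^{1/2}$, the same difference-of-squares identity $\mathrm{tr}[\mathbf{A}^{2}-\mathbf{B}^{2}]=\mathrm{tr}[(\mathbf{A}-\mathbf{B})(\mathbf{A}+\mathbf{B})]$ (justified via the cyclic/commutative property of the trace, exactly as in the paper) for the combined third and fourth terms yielding $2\gamma s_{2}(f)H_{r}\phi^{2}$, and the same one-sided quadratic solution giving $\omega=4H_{r}\gamma s_{2}(f)$ and $\beta=2r^{1/2}\zeta(\delta)H_{r}s_{2}(f)$. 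No gaps.
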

\begin{proof}
The proof starts from the same characteristic equation (Eq. \eqref{eq:characteristic_eq})
as in Theorem \ref{thm:design_general}. However, since the query
function is symmetric, $f(\mathbf{X})=f(\mathbf{X})^{T}$. Furthermore,
since we impose $\boldsymbol{\Psi}=\boldsymbol{\Sigma}$, the characteristic
equation can be simplified as, 
\begin{equation}
\mathrm{tr}[\boldsymbol{\Sigma}^{-1}\mathcal{Y}^{T}\boldsymbol{\Sigma}^{-1}\boldsymbol{\Delta}+\boldsymbol{\Sigma}^{-1}\boldsymbol{\Delta}^{T}\boldsymbol{\Sigma}^{-1}\mathcal{Y}+\boldsymbol{\Sigma}^{-1}f(\mathbf{X}_{2})\boldsymbol{\Sigma}^{-1}f(\mathbf{X}_{2})-\boldsymbol{\Sigma}^{-1}f(\mathbf{X}_{1})\boldsymbol{\Sigma}^{-1}f(\mathbf{X}_{1})]\leq2\epsilon.\label{eq:characteristic_eq_psd}
\end{equation}
Again, this condition needs to be met with probability $\geq1-\delta$
for the MVG mechanism to preserve $(\epsilon,\delta)$-differential
privacy.

First, consider the \emph{first term}: $\mathrm{tr}[\boldsymbol{\Sigma}^{-1}\mathcal{Y}^{T}\boldsymbol{\Sigma}^{-1}\boldsymbol{\Delta}]$.
This term is exactly the same as the first term in the proof of Theorem
\ref{thm:design_general}, i.e. the first term of Eq. \eqref{eq:characteristic_eq}.
Hence, we readily have the upper bound on this term as 
\begin{equation}
\mathrm{tr}[\boldsymbol{\Sigma}^{-1}\mathcal{Y}^{T}\boldsymbol{\Sigma}^{-1}\boldsymbol{\Delta}]\leq\gamma s_{2}(f)H_{r}\left\Vert \boldsymbol{\sigma}(\boldsymbol{\Sigma}^{-1})\right\Vert _{2}^{2}+m^{1/2}\zeta(\delta)H_{r}s_{2}(f)\left\Vert \boldsymbol{\sigma}(\boldsymbol{\Sigma}^{-1})\right\Vert _{2},\label{eq:first_term_bound_psd}
\end{equation}
with probability $\geq1-\delta$. We note two minor differences between
Eq. \eqref{eq:first_term_bound} and Eq. \eqref{eq:first_term_bound_psd}.
First, the factor of $(mn)^{1/4}$ becomes $m^{1/2}$. This is simply
due to the fact that $n=m$ in the current setup with a PSD query
function. Second, the variable $\phi=(\left\Vert \boldsymbol{\sigma}(\boldsymbol{\Sigma}^{-1})\right\Vert _{2}\left\Vert \boldsymbol{\sigma}(\boldsymbol{\Psi}^{-1})\right\Vert _{2})^{1/2}$
in Eq. \eqref{eq:first_term_bound} becomes simply $\left\Vert \boldsymbol{\sigma}(\boldsymbol{\Sigma}^{-1})\right\Vert _{2}$
in Eq. \eqref{eq:first_term_bound_psd}. This is due to the fact that
$\boldsymbol{\Psi}=\boldsymbol{\Sigma}$ with the current PSD setting.
Apart from these, Eq. \eqref{eq:first_term_bound} and Eq. \eqref{eq:first_term_bound_psd}
are equivalent.

Second, consider the \emph{second term}: $\mathrm{tr}[\boldsymbol{\Sigma}^{-1}\boldsymbol{\Delta}^{T}\boldsymbol{\Sigma}^{-1}\mathcal{Y}]$.
Again, the second term of Eq. \eqref{eq:characteristic_eq_psd} is
the same as that of Eq. \eqref{eq:characteristic_eq}. Hence, we can
readily write,
\begin{equation}
\mathrm{tr}[\boldsymbol{\Sigma}^{-1}\boldsymbol{\Delta}^{T}\boldsymbol{\Sigma}^{-1}\mathcal{Y}]\leq\gamma H_{r}s_{2}(f)\left\Vert \boldsymbol{\sigma}(\boldsymbol{\Sigma}^{-1})\right\Vert _{2}^{2}+m{}^{1/2}\zeta(\delta)H_{r}s_{2}(f)\left\Vert \boldsymbol{\sigma}(\boldsymbol{\Sigma}^{-1})\right\Vert _{2},\label{eq:second_term_bound_psd}
\end{equation}
with probability $\geq1-\delta$. Again, we note the same two differences
between Eq. \eqref{eq:second_term_bound} and Eq. \eqref{eq:second_term_bound_psd}
as those between Eq. \eqref{eq:first_term_bound} and Eq. \eqref{eq:first_term_bound_psd}.

Next, consider the \emph{third term} and \emph{fourth term} combined:
$\mathrm{tr}[\boldsymbol{\Sigma}^{-1}f(\mathbf{X}_{2})\boldsymbol{\Sigma}^{-1}f(\mathbf{X}_{2})-\boldsymbol{\Sigma}^{-1}f(\mathbf{X}_{1})\boldsymbol{\Sigma}^{-1}f(\mathbf{X}_{1})]$.
Let us denote for a moment $\mathbf{A}=\boldsymbol{\Sigma}^{-1}f(\mathbf{X}_{2})$
and $\mathbf{B}=\boldsymbol{\Sigma}^{-1}f(\mathbf{X}_{1})$. Then,
this combined term can be re-written as, $\mathrm{tr}[\mathbf{A}^{2}-\mathbf{B}^{2}]$.
Next, we show that 
\[
\mathrm{tr}[\mathbf{A}^{2}-\mathbf{B}^{2}]=\mathrm{tr}[(\mathbf{A}-\mathbf{B})(\mathbf{A}+\mathbf{B})]
\]
by starting from the right hand side and proceeding to equate it to
the left hand side.
\begin{align*}
\mathrm{tr}([\mathbf{A}-\mathbf{B}][\mathbf{A}+\mathbf{B}])= & \mathrm{tr}(\mathbf{A}\mathbf{A}+\mathbf{A}\mathbf{B}-\mathbf{B}\mathbf{A}-\mathbf{B}\mathbf{B})\\
= & \mathrm{tr}(\mathbf{A}^{2})+\mathrm{tr}(\mathbf{A}\mathbf{B})-\mathrm{tr}(\mathbf{B}\mathbf{A})-\mathrm{tr}(\mathbf{B}^{2})\\
= & \mathrm{tr}(\mathbf{A}^{2})+\mathrm{tr}(\mathbf{A}\mathbf{B})-\mathrm{tr}(\mathbf{A}\mathbf{B})-\mathrm{tr}(\mathbf{B}^{2})\\
= & \mathrm{tr}(\mathbf{A}^{2})-\mathrm{tr}(\mathbf{B}^{2})\\
= & \mathrm{tr}(\mathbf{A}^{2}-\mathbf{B}^{2}),
\end{align*}
whereas the first-to-second line uses the additive property of the
trace, and the second-to-third line uses the commutative property
of the trace. Therefore, from this equation, we can write 
\begin{align*}
 & \mathrm{tr}[\boldsymbol{\Sigma}^{-1}f(\mathbf{X}_{2})\boldsymbol{\Sigma}^{-1}f(\mathbf{X}_{2})-\boldsymbol{\Sigma}^{-1}f(\mathbf{X}_{1})\boldsymbol{\Sigma}^{-1}f(\mathbf{X}_{1})]\\
= & \mathrm{tr}[(\boldsymbol{\Sigma}^{-1}f(\mathbf{X}_{2})-\boldsymbol{\Sigma}^{-1}f(\mathbf{X}_{1}))(\boldsymbol{\Sigma}^{-1}f(\mathbf{X}_{2})+\boldsymbol{\Sigma}^{-1}f(\mathbf{X}_{1}))]\\
= & \mathrm{tr}[\boldsymbol{\Sigma}^{-1}(f(\mathbf{X}_{2})-f(\mathbf{X}_{1}))\boldsymbol{\Sigma}^{-1}(f(\mathbf{X}_{2})+f(\mathbf{X}_{1}))]\\
= & \mathrm{tr}[\boldsymbol{\Sigma}^{-1}\tilde{\boldsymbol{\Delta}}\boldsymbol{\Sigma}^{-1}(f(\mathbf{X}_{2})+f(\mathbf{X}_{1}))],
\end{align*}
where $\tilde{\boldsymbol{\Delta}}=-\boldsymbol{\Delta}$. Then, we
can use Lemma \ref{lem:v_neumann} to write,
\[
\mathrm{tr}[\boldsymbol{\Sigma}^{-1}\tilde{\boldsymbol{\Delta}}\boldsymbol{\Sigma}^{-1}(f(\mathbf{X}_{2})+f(\mathbf{X}_{1}))]\leq\sum_{i=1}^{r}\sigma_{i}(\boldsymbol{\Sigma}^{-1}\tilde{\boldsymbol{\Delta}})\sigma_{i}(\boldsymbol{\Sigma}^{-1}(f(\mathbf{X}_{2})+f(\mathbf{X}_{1}))).
\]
Next, we use Lemma \ref{lem:singular_bound} to bound the two sets
of singular values as follows. For the first set of singular values,
\[
\sigma_{i}(\boldsymbol{\Sigma}^{-1}\tilde{\boldsymbol{\Delta}})\leq\frac{\left\Vert \boldsymbol{\Sigma}^{-1}\tilde{\boldsymbol{\Delta}}\right\Vert _{F}}{\sqrt{i}}\leq\frac{\left\Vert \boldsymbol{\Sigma}^{-1}\right\Vert _{F}\left\Vert \tilde{\boldsymbol{\Delta}}\right\Vert _{F}}{\sqrt{i}}\leq\frac{\left\Vert \boldsymbol{\sigma}(\boldsymbol{\Sigma}^{-1})\right\Vert _{2}s_{2}(f)}{\sqrt{i}},
\]
whereas the last step follows from the fact that $\left\Vert \tilde{\boldsymbol{\Delta}}\right\Vert _{F}=\left\Vert -\boldsymbol{\Delta}\right\Vert _{F}=\left\Vert \boldsymbol{\Delta}\right\Vert _{F}\leq s_{2}(f)$.
For the second set of singular values,
\begin{align*}
\sigma_{i}(\boldsymbol{\Sigma}^{-1}(f(\mathbf{X}_{2})+f(\mathbf{X}_{1}))) & \leq\frac{\left\Vert \boldsymbol{\Sigma}^{-1}(f(\mathbf{X}_{2})+f(\mathbf{X}_{1}))\right\Vert _{F}}{\sqrt{i}}\\
 & \leq\frac{\left\Vert \boldsymbol{\Sigma}^{-1}\right\Vert _{F}\left\Vert (f(\mathbf{X}_{2})+f(\mathbf{X}_{1}))\right\Vert _{F}}{\sqrt{i}}\\
 & \leq\frac{\left\Vert \boldsymbol{\Sigma}^{-1}\right\Vert _{F}(\left\Vert f(\mathbf{X}_{2})\right\Vert _{F}+\left\Vert f(\mathbf{X}_{1})\right\Vert _{F})}{\sqrt{i}}\\
 & \leq\frac{\left\Vert \boldsymbol{\sigma}(\boldsymbol{\Sigma}^{-1})\right\Vert _{2}2\gamma}{\sqrt{i}},
\end{align*}
whereas the second-to-third line follows from the triangular inequality.
Then, we combine the two bounds on the two sets of singular values
to get a bound for the third term and fourth term combined as,
\[
\mathrm{tr}[\boldsymbol{\Sigma}^{-1}f(\mathbf{X}_{2})\boldsymbol{\Sigma}^{-1}f(\mathbf{X}_{2})-\boldsymbol{\Sigma}^{-1}f(\mathbf{X}_{1})\boldsymbol{\Sigma}^{-1}f(\mathbf{X}_{1})]\leq2\gamma s_{2}(f)H_{r}\left\Vert \boldsymbol{\sigma}(\boldsymbol{\Sigma}^{-1})\right\Vert _{2}.
\]

\emph{Four terms combined}: by combining the four terms and rearranging
them, the characteristic equation becomes, 
\begin{equation}
\omega\left\Vert \boldsymbol{\sigma}(\boldsymbol{\Sigma}^{-1})\right\Vert _{2}^{2}+\beta\left\Vert \boldsymbol{\sigma}(\boldsymbol{\Sigma}^{-1})\right\Vert _{2}\leq2\epsilon.\label{eq:quad_equation_psd}
\end{equation}
This is a quadratic equation, of which the solution is $\left\Vert \boldsymbol{\sigma}(\boldsymbol{\Sigma}^{-1})\right\Vert _{2}\in[\frac{-\beta-\sqrt{\beta^{2}+8\omega\epsilon}}{2\omega},\frac{-\beta+\sqrt{\beta^{2}+8\omega\epsilon}}{2\omega}]$.
Since we know that $\left\Vert \boldsymbol{\sigma}(\boldsymbol{\Sigma}^{-1})\right\Vert _{2}\geq0$
due to the axiom of the norm, we only have the one-sided solution,
\[
\left\Vert \boldsymbol{\sigma}(\boldsymbol{\Sigma}^{-1})\right\Vert _{2}\leq\frac{-\beta+\sqrt{\beta^{2}+8\omega\epsilon}}{2\omega},
\]
which immediately implies the criterion in Theorem \ref{thm:mvg_for_psd}. 
\end{proof}
The sufficient condition in Theorem \ref{thm:mvg_for_psd} shares
the similar observation to that in Theorem \ref{thm:design_general},
i.e. the privacy guarantee by the MVG mechanism with a positive semi-definite
query function depends \emph{only on the singular values} of $\boldsymbol{\Sigma}$
(and $\boldsymbol{\Psi}$, effectively). However, the two theorems
differ slightly in that one is a function of $\alpha$, while the
other is a function of $\omega$. This is clearly the result of the
PSD assumption made by Theorem \ref{thm:mvg_for_psd}, but not by
Theorem \ref{thm:design_general}. In the next section, we claim that,
if the matrix-valued query function of interest is positive semi-definite,
it is more beneficial to apply Theorem \ref{thm:mvg_for_psd} than
Theorem \ref{thm:design_general} in most practical cases. This fosters
the notion that exploiting the structure of the matrix-valued query
function is attractive.

\subsection{Comparative Analysis on the Benefit of Exploiting Structural Characteristics
of Matrix-Valued Query Functions \label{subsec:Comparative-Analysis}}

In Section \ref{subsec:Differential-Privacy-Analysis_general} and
Section \ref{subsec:Differential-Privacy-Analysis_psd}, we discuss
two sufficient conditions for the MVG mechanism to guarantee $(\epsilon,\delta)$-differential
privacy in Theorem \ref{thm:design_general} and Theorem \ref{thm:mvg_for_psd},
respectively. The two theorems differ in one significant way \textendash{}
Theorem \ref{thm:design_general} does not utilize the structural
characteristic of the matrix-valued query function, whereas Theorem
\ref{thm:mvg_for_psd} does. More specifically, Theorem \ref{thm:mvg_for_psd}
utilizes the positive semi-definiteness of the matrix-valued query
function.

In this section, we establish the claim that such utilization can
be beneficial to the MVG mechanism for privacy-aware data analysis.
To establish the benefit notion, we first observe that both Theorem
\ref{thm:design_general} and Theorem \ref{thm:mvg_for_psd} put an
upper-bound on the singular values $\boldsymbol{\sigma}(\boldsymbol{\Sigma}^{-1})$
and $\boldsymbol{\sigma}(\boldsymbol{\Psi}^{-1})$. Let us consider
only $\boldsymbol{\sigma}(\boldsymbol{\Sigma}^{-1})$ for the moment.
It is well-known that the singular values of an inverted matrix is
the inverse of the singular values, i.e. $\sigma_{i}(\boldsymbol{\Sigma}^{-1})=1/\sigma_{i}(\boldsymbol{\Sigma})$.
Hence, we can write
\begin{equation}
\left\Vert \boldsymbol{\sigma}(\boldsymbol{\Sigma}^{-1})\right\Vert _{2}=\sqrt{\sum_{i=1}^{m}\frac{1}{\sigma_{i}^{2}(\boldsymbol{\Sigma})}}.\label{eq:singular_vector_decomposed}
\end{equation}
This representation of $\left\Vert \boldsymbol{\sigma}(\boldsymbol{\Sigma}^{-1})\right\Vert _{2}$
provides a very intuitive view of the sufficient conditions in Theorem
\ref{thm:design_general} and Theorem \ref{thm:mvg_for_psd} as follows.

In the MVG mechanism, the privacy preservation is achieved via the
addition of the matrix noise. Higher level of privacy guarantee requires
more perturbation. For additive noise, the notion of ``more'' here
can be quantified by the \emph{variance} of the noise. Specifically,
the noise with higher variance provides more perturbation. The last
connection we need to make is that between the notion of the noise
variance and the singular values of the covariance $\sigma_{i}(\boldsymbol{\Sigma})$.
In Section \ref{subsec:dir_noise_as_noniid}, we provide the detail
of this connection. Here, it suffices to say that each singular value
of the covariance matrix, $\sigma_{i}(\boldsymbol{\Sigma})$, corresponds
to a component of the overall variance of the matrix-variate Gaussian
noise. Therefore, intuitively, \emph{the larger the singular values
are, the larger the overall variance becomes, i.e. the higher the
perturbation is.} However, there is clearly a tradeoff. Although higher
variance can provide better privacy protection, it can also inadvertently
hurt the utility of the MVG mechanism. In signal-processing terminology,
this degradation of the utility can be described by the reduction
in the \emph{signal-to-noise ratio (SNR)} due to the increase in the
perturbation level, i.e. the increase in noise variance. Hence, our
notion of \emph{better} MVG mechanism, and, more broadly, \emph{better}
mechanism for differential privacy, corresponds to that of \emph{higher
SNR} as follows.
\begin{ax}
\label{axm:low_variance_high_snr}A differential-private mechanism
has the higher signal-to-noise ratio (SNR) if it employs the perturbation
with the lower variance.
\end{ax}
With this axiom and the given intuition about $\sigma_{i}(\boldsymbol{\Sigma})$,
let us revisit Eq. \eqref{eq:singular_vector_decomposed}. To achieve
low noise variance, $\sigma_{i}(\boldsymbol{\Sigma})$ needs to be
as low as possible. From Eq. \eqref{eq:singular_vector_decomposed},
it is clear that low $\sigma_{i}(\boldsymbol{\Sigma})$ corresponds
to large $\left\Vert \boldsymbol{\sigma}(\boldsymbol{\Sigma}^{-1})\right\Vert _{2}$.
Hence, from Axiom \ref{axm:low_variance_high_snr}, it is desirable
to have $\left\Vert \boldsymbol{\sigma}(\boldsymbol{\Sigma}^{-1})\right\Vert _{2}$
as large as possible, while still preserving differential privacy.
However, \emph{the sufficient conditions in Theorem \ref{thm:design_general}
and Theorem \ref{thm:mvg_for_psd} put the (different) constraints
on how large $\left\Vert \boldsymbol{\sigma}(\boldsymbol{\Sigma}^{-1})\right\Vert _{2}$
can be in order to preserve $(\epsilon,\delta)$-differential privacy}.
Then, clearly, for fixed $(\epsilon,\delta)$, the better mechanism
according to Axiom \ref{axm:low_variance_high_snr} is the one with
the higher upper-bound on $\left\Vert \boldsymbol{\sigma}(\boldsymbol{\Sigma}^{-1})\right\Vert _{2}$.
Therefore, we establish the following axiom for our comparative analysis.
\begin{ax}
\label{axm:better_mvg}For the MVG mechanism, the ($\epsilon,\delta$)-differential
privacy sufficient condition with the higher SNR is the one with the
larger upper-bound on $\left\Vert \boldsymbol{\sigma}(\boldsymbol{\Sigma}^{-1})\right\Vert _{2}$
and $\left\Vert \boldsymbol{\sigma}(\boldsymbol{\Psi}^{-1})\right\Vert _{2}$
for any fixed $(\epsilon,\delta)$.
\end{ax}
Then, we can now return to the main objective of this section, i.e.
\emph{to establish the benefit of exploiting structural characteristics
of matrix-valued query functions}. Recall that the main difference
in the setup of Theorem \ref{thm:design_general} and Theorem \ref{thm:mvg_for_psd}
is that the former does not utilize the structural characteristic
of the matrix-valued query function, while the latter utilizes the
positive semi-definiteness characteristic. As a result, Theorem \ref{thm:design_general}
and Theorem \ref{thm:mvg_for_psd} have different sufficient conditions.
More specifically, taken into the fact that $\boldsymbol{\Psi}=\boldsymbol{\Sigma}$
in the setting of Theorem \ref{thm:mvg_for_psd}, the two sufficient
conditions only differ by the upper-bound they constrain on how large
$\left\Vert \boldsymbol{\sigma}(\boldsymbol{\Sigma}^{-1})\right\Vert _{2}$
and $\left\Vert \boldsymbol{\sigma}(\boldsymbol{\Psi}^{-1})\right\Vert _{2}$
can be. Therefore, to compare the two sufficient conditions, we only
need to compare their upper-bounds. The following theorem presents
the main result of this comparison.
\begin{thm}
\label{thm:psd_is_tighter_than_general}Let $f(\mathbf{X})\in\mathbb{R}^{r\times r}$
be a symmetric positive semi-definite matrix-valued query function.
Then, the MVG mechanism on $f(\mathbf{X})$ implemented by Theorem
\ref{thm:mvg_for_psd} \textendash{} which utilizes the PSD characteristic
\textendash{} has higher SNR than that implemented by Theorem \ref{thm:design_general}
\textendash{} which does not utilize the PSD characteristic, if one
of the following conditions is met:
\begin{equation}
s_{2}(f)\leq\gamma\label{eq:psd_better_condition_1}
\end{equation}
or
\begin{equation}
r>12.\label{eq:psd_better_condition_2}
\end{equation}
\end{thm}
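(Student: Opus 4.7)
The plan is to reduce the SNR comparison to a single scalar inequality on the parameters $\alpha$ and $\omega$, and then dispatch the two stated cases. First, specialize Theorem \ref{thm:design_general} to $\boldsymbol{\Psi}=\boldsymbol{\Sigma}$ and $m=n=r$: both theorems then bound $\|\boldsymbol{\sigma}(\boldsymbol{\Sigma}^{-1})\|_2^2$ by an expression of the form
\[
g(x)=\frac{(-\beta+\sqrt{\beta^2+8x\epsilon})^2}{4x^2},
\]
with identical $\beta=2r^{1/2}\zeta(\delta)H_r s_2(f)$, but evaluated at $x=\alpha$ in Theorem \ref{thm:design_general} and at $x=\omega$ in Theorem \ref{thm:mvg_for_psd}. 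Rationalizing the numerator gives the equivalent form $g(x)=16\epsilon^2/(\beta+\sqrt{\beta^2+8x\epsilon})^2$, which is manifestly strictly decreasing in $x>0$. By Axiom \ref{axm:better_mvg}, Theorem \ref{thm:mvg_for_psd} delivers the higher SNR if and only if $\omega\le\alpha$, which after expanding becomes
\[
(H_r+H_{r,1/2})\,\gamma\;\ge\;2H_r\, s_2(f).
\]

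Case~1 ($s_2(f)\le\gamma$) is then immediate: since $1/\sqrt{i}\ge 1/i$ for every $i\ge 1$, we have $H_{r,1/2}\ge H_r$, so $2H_r s_2(f)\le 2H_r\gamma\le(H_r+H_{r,1/2})\gamma$.

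For Case~2 ($r>12$) the plan is to exploit positive semi-definiteness to sharpen the naive triangle-inequality bound $s_2(f)\le 2\gamma$. Because $f(\mathbf{X}_1),f(\mathbf{X}_2)\succeq 0$, the trace inner product $\mathrm{tr}(f(\mathbf{X}_1)f(\mathbf{X}_2))$ is non-negative (diagonalize one argument; the conjugated other argument is still PSD, hence has non-negative diagonal). Therefore
\[
\|f(\mathbf{X}_1)-f(\mathbf{X}_2)\|_F^2=\|f(\mathbf{X}_1)\|_F^2+\|f(\mathbf{X}_2)\|_F^2-2\,\mathrm{tr}(f(\mathbf{X}_1)f(\mathbf{X}_2))\le 2\gamma^2,
\]
so $s_2(f)\le\sqrt{2}\,\gamma$. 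Plugging this into the reduced inequality, the remaining task is to show $H_{r,1/2}/H_r\ge 2\sqrt{2}-1\approx 1.828$ for $r>12$. A direct numerical check gives $H_{12,1/2}/H_{12}\approx 1.808$ (just below) while $H_{13,1/2}/H_{13}\approx 1.852$ (above); and $H_{r,1/2}/H_r$ is monotonically non-decreasing in $r$ (the one-step criterion $H_r\sqrt{r+1}\ge H_{r,1/2}$ follows from $H_{r,1/2}\le 2\sqrt{r}-1$ together with $H_r\ge 2$ for $r\ge 4$, the remaining small cases being checked by hand). This gives the claim for all $r>12$.

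The substantive step, and the one I expect to be the only obstacle, is the PSD-specific sharpening $s_2(f)\le 2\gamma\leadsto s_2(f)\le\sqrt{2}\,\gamma$ via $\mathrm{tr}(f(\mathbf{X}_1)f(\mathbf{X}_2))\ge 0$: without it, Case~2 would demand $H_{r,1/2}/H_r\ge 3$, which only holds for $r$ of order several dozen; with it, the required ratio drops to $2\sqrt{2}-1$ and the threshold lands at the clean value $r>12$ stated in the theorem.
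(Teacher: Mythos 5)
Your proposal is correct and follows essentially the same route as the paper's proof: both reduce the SNR comparison to $\alpha\geq\omega$, expand this to $(H_{r}+H_{r,1/2})\gamma\geq2H_{r}s_{2}(f)$, and then handle condition \eqref{eq:psd_better_condition_1} via $H_{r,1/2}\geq H_{r}$ and condition \eqref{eq:psd_better_condition_2} via the PSD-specific bound $s_{2}(f)\leq\sqrt{2}\gamma$ (from $\mathrm{tr}[f(\mathbf{X}_{1})f(\mathbf{X}_{2})]\geq0$) together with the numerical threshold at $r=12$. Your handling of the quadratic bound (rationalizing to $16\epsilon^{2}/(\beta+\sqrt{\beta^{2}+8x\epsilon})^{2}$ and invoking monotonicity in $x$) and your explicit monotonicity argument for $H_{r,1/2}/H_{r}$ are in fact slightly cleaner than the paper's corresponding steps, but they serve the same purpose.
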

\begin{proof}
Recall that the two upper-bounds are $(-\beta+\sqrt{\beta^{2}+8\alpha\epsilon})^{2}/4\alpha^{2}$
and $(-\beta+\sqrt{\beta^{2}+8\omega\epsilon})^{2}/4\omega^{2}$ for
Theorem \ref{thm:design_general} and Theorem \ref{thm:mvg_for_psd},
respectively. Recall also that both upper-bounds are derived from
their respective quadratic equations (Eq. \eqref{eq:quad_eq_general}
and Eq. \eqref{eq:quad_equation_psd}). The proof is considerably
simpler if we start from these two quadratic equations. 

We first restate the two quadratic equations again here for Theorem
\ref{thm:design_general} and Theorem \ref{thm:mvg_for_psd}, respectively:
\begin{align}
\alpha\phi^{2}+\beta\phi-2\epsilon & \leq0\label{eq:quad_general_2}\\
\omega\phi^{2}+\beta\phi-2\epsilon & \leq0.\label{eq:quad_psd_2}
\end{align}
Note that in the current setup, $f(\mathbf{X})\in\mathbb{R}^{r\times r}$
and $\boldsymbol{\Psi}=\boldsymbol{\Sigma}$, so $\phi=\left\Vert \boldsymbol{\sigma}(\boldsymbol{\Sigma}^{-1})\right\Vert _{2}$
in both theorems. Then, we use a lesser-known formula for finding
roots of a quadratic equation, which has appeared in the \emph{Muller's
method} for root-finding algorithm \cite{RefWorks:470} and \emph{Vieta's
formulas} for polynomial coefficient relations \cite{RefWorks:471,RefWorks:472}.
\[
\phi\in\left[\frac{4\epsilon}{\beta+\sqrt{\beta^{2}+8\alpha\epsilon}},\frac{4\epsilon}{\beta-\sqrt{\beta^{2}+8\alpha\epsilon}}\right],
\]
and 
\[
\phi\in\left[\frac{4\epsilon}{\beta+\sqrt{\beta^{2}+8\omega\epsilon}},\frac{4\epsilon}{\beta-\sqrt{\beta^{2}+8\omega\epsilon}}\right],
\]
for Eq. \eqref{eq:quad_general_2} and Eq. \eqref{eq:quad_psd_2},
respectively. Since we know that $\phi\geq0$ in both cases due to
the axiom of the norm, we have a one-sided solution for both as,
\begin{equation}
\phi\leq\frac{4\epsilon}{\beta-\sqrt{\beta^{2}+8\alpha\epsilon}},\label{eq:sol2_general}
\end{equation}
and 
\begin{equation}
\phi\leq\frac{4\epsilon}{\beta-\sqrt{\beta^{2}+8\omega\epsilon}},\label{eq:sol2_psd}
\end{equation}
for Eq. \eqref{eq:quad_general_2} and Eq. \eqref{eq:quad_psd_2},
respectively. These two solutions are particularly simple to compare
since they are similar except for the term $\alpha$ and $\omega$
in the denominator. Recall from Axiom \ref{axm:better_mvg} that the
better MVG mechanism is the one with the larger upper-bound. Hence,
from Eq. \eqref{eq:sol2_general} and Eq. \eqref{eq:sol2_psd}, we
want to show that,
\[
\frac{4\epsilon}{\beta-\sqrt{\beta^{2}+8\omega\epsilon}}\geq\frac{4\epsilon}{\beta-\sqrt{\beta^{2}+8\alpha\epsilon}}.
\]
The denominator is always positive since $\epsilon,\theta,\alpha>0$,
so we can simplify this condition as follows.
\begin{align*}
\frac{4\epsilon}{\beta-\sqrt{\beta^{2}+8\omega\epsilon}} & \geq\frac{4\epsilon}{\beta-\sqrt{\beta^{2}+8\alpha\epsilon}}\\
\beta-\sqrt{\beta^{2}+8\alpha\epsilon} & \geq\beta-\sqrt{\beta^{2}+8\omega\epsilon}\\
\beta^{2}+8\alpha\epsilon & \geq\beta^{2}+8\omega\epsilon\\
\alpha & \geq\omega.
\end{align*}
Substitute in the definition of $\alpha$ and $\omega$ and we have,
\begin{align}
[H_{r}+H_{r,1/2}]\gamma^{2}+2H_{r}\gamma s_{2}(f) & \geq4H_{r}\gamma s_{2}(f)\nonumber \\{}
[H_{r}+H_{r,1/2}]\gamma^{2} & \geq2H_{r}\gamma s_{2}(f)\nonumber \\{}
[H_{r}+H_{r,1/2}]\gamma & \geq2H_{r}s_{2}(f).\label{eq:condition_for_better}
\end{align}
From the inequality in Eq. \eqref{eq:condition_for_better}, we can
take two routes to ensure that this inequality is satisfied, and that
gives rise to the two either-or conditions in Theorem \ref{thm:psd_is_tighter_than_general}.
We explore each route separately next.

\emph{Route I.} First, let us consider the relationship between $\gamma$
and $s_{2}(f)$ for PSD query functions, and show that, for PSD matrix-valued
query functions, $s_{2}(f)\leq\sqrt{2}\gamma$. From the definition
of the Frobenius norm,
\begin{align*}
s_{2}(f)= & \sup_{d(\mathbf{X}_{1},\mathbf{X}_{2})=1}\left\Vert \mathbf{X}_{1}-\mathbf{X}_{2}\right\Vert _{F}\\
= & \sup_{d(\mathbf{X}_{1},\mathbf{X}_{2})=1}\sqrt{\mathrm{tr}[(\mathbf{X}_{1}-\mathbf{X}_{2})(\mathbf{X}_{1}-\mathbf{X}_{2})^{T}]}\\
= & \sup_{d(\mathbf{X}_{1},\mathbf{X}_{2})=1}\sqrt{\mathrm{tr}[\mathbf{X}_{1}\mathbf{X}_{1}^{T}-\mathbf{X}_{1}\mathbf{X}_{2}^{T}-\mathbf{X}_{2}\mathbf{X}_{1}^{T}+\mathbf{X}_{2}\mathbf{X}_{2}^{T}]}\\
= & \sup_{d(\mathbf{X}_{1},\mathbf{X}_{2})=1}\sqrt{\mathrm{tr}[\mathbf{X}_{1}\mathbf{X}_{1}^{T}]-2\mathrm{tr}[\mathbf{X}_{1}\mathbf{X}_{2}^{T}]+\mathrm{tr}[\mathbf{X}_{2}\mathbf{X}_{2}^{T}]}
\end{align*}
Since both $\mathbf{X}_{1}$ and $\mathbf{X}_{2}$ are PSD, $\mathbf{X}_{1}\mathbf{X}_{2}^{T}$
is also PSD \cite{RefWorks:468}. Hence, it follows from a property
of the trace and the PSD matrix that, 
\[
\mathrm{tr}[\mathbf{X}_{1}\mathbf{X}_{2}^{T}]=\sum\lambda_{i}(\mathbf{X}_{1}\mathbf{X}_{2}^{T})\geq0.
\]
With this fact, along with the fact that $\left\Vert \mathbf{X}\right\Vert _{F}=\sqrt{\mathrm{tr}(\mathbf{X}\mathbf{X}^{T})}\leq\gamma$,
it immediately follows that,
\[
s_{2}(f)\leq\sqrt{\mathrm{tr}[\mathbf{X}_{1}\mathbf{X}_{1}^{T}]+\mathrm{tr}[\mathbf{X}_{2}\mathbf{X}_{2}^{T}]}\leq\sqrt{2\gamma^{2}}=\sqrt{2}\gamma.
\]
Next, let us return to the inequality in Eq. \eqref{eq:condition_for_better}
and substitute in $s_{2}(f)\leq\sqrt{2}\gamma$.
\begin{align}
[H_{r}+H_{r,1/2}]\gamma & \geq2\sqrt{2}H_{r}\gamma\nonumber \\
H_{r}+H_{r,1/2} & \geq2\sqrt{2}H_{r}\nonumber \\
H_{r,1/2} & \geq(2\sqrt{2}-1)H_{r}.\label{eq:Hr_inequality_condition}
\end{align}
Using the definition of the harmonic number, we can write
\[
\sum_{i=1}^{r}\frac{1}{\sqrt{r}}\geq(2\sqrt{2}-1)\sum_{i=1}^{r}\frac{1}{r}.
\]
Since $\frac{1}{\sqrt{r}}>\frac{1}{r}$ for all $r>1$, it is clear
that this condition is met for all $r$ greater than a certain threshold.
The threshold can easily be acquired numerically or analytically to
be $r>12$. This completes the proof of the first route.

\emph{Route II}. Second, we use the fact that $\frac{1}{\sqrt{r}}\geq\frac{1}{r}$
for $r>0$. Then, it is clear from the definition of the harmonic
number that $H_{r,1/2}\geq H_{r}$. Hence, we can write the inequality
in Eq. \eqref{eq:condition_for_better} as,
\begin{align*}
[H_{r}+H_{r,1/2}]\gamma & \geq2H_{r}s_{2}(f)\\
2H_{r}\gamma & \geq2H_{r}s_{2}(f)\\
\gamma & \geq s_{2}(f),
\end{align*}
which immediately completes the proof of the second route.
\end{proof}
One final necessary remark regarding this comparative analysis is
to justify the practicality of the conditions in Theorem \ref{thm:psd_is_tighter_than_general}.
We reiterate that \emph{only one of the two conditions} needs to be
met and discuss separately the practicality of each. 

\emph{The first condition}, i.e $s_{2}(f)\leq\gamma$, can be interpret
conceptually as that the sensitivity should not be greater than the
largest value of the query output. Two observations support the pragmatism
of this condition. First, the sensitivity is derived from the change
in the query output when \emph{only one input record} changes. Intuitively,
this suggests that such change should be confined to the largest possible
value of the query output, especially for a positive semi-definite
query function whose output is always non-negative. Second, one of
the premises of differential privacy for statistical query is that
the query can be answered accurately if the sensitivity of such query
is small \cite{RefWorks:195}. Hence, this condition coincides properly
with the primary premise of differential privacy. To concretely illustrate
its practicality, we consider two examples of matrix-valued query
functions as follows.
\begin{example}
\label{exa:cov_exp}Let us consider a dataset consisting of $n$ records
drawn i.i.d. from an unknown $\mathbb{R}^{m}$ distribution with zero
mean, and consider the covariance estimation as the query function
\cite[chapter 7]{RefWorks:208}, i.e. $f(\mathbf{X})=\frac{\mathbf{X}\mathbf{X}^{T}}{n}$.
Let us assume without the loss of generality that $\mathbf{X}\in[-c,c]^{m\times n}$.
Then, the largest query output value can be computed as,
\[
\gamma=\sup_{\mathbf{X}}\left\Vert \frac{\mathbf{X}\mathbf{X}^{T}}{n}\right\Vert _{F}=\sup_{\mathbf{X}}\left\Vert \frac{1}{n}\sum_{i=1}^{n}\mathbf{x}_{i}\mathbf{x}_{i}^{T}\right\Vert _{F}=mc^{2}.
\]
Next, the sensitivity can be computed by considering that $f(\mathbf{X})=\frac{1}{n}\mathbf{X}\mathbf{X}^{T}=\frac{1}{n}\sum_{i=1}^{n}\mathbf{x}_{i}\mathbf{x}_{i}^{T}$.
Hence, for neighboring datasets $\mathbf{X}$ and $\mathbf{X}'$,
the query outputs differ by only one summand and the sensitivity can
be computed as,
\[
s_{2}(f)=\sup_{\mathbf{X},\mathbf{X}'}\frac{\left\Vert \mathbf{x}_{j}\mathbf{x}_{j}^{T}-\mathbf{x}_{j}'\mathbf{x}_{j}'^{T}\right\Vert _{F}}{n}\leq\frac{2\sqrt{\sum_{j=1}^{m^{2}}x_{j}(i)^{4}}}{n}=\frac{2mc^{2}}{n}.
\]
Clearly, $s_{2}(f)\ll\gamma$ for any reasonably-sized dataset.
\end{example}
\begin{example}
\label{exa:kernel_exp}Let us consider the kernel matrix often used
in machine learning \cite{RefWorks:33,RefWorks:469}. Given a kernel
function $k(\mathbf{x}_{i},\mathbf{x}_{j})\in\mathbb{R}$, the kernel
matrix query is $f(\mathbf{X})=\mathbf{G}\in\mathbb{R}^{n\times n}$,
where $[\mathbf{G}]_{ij}=k(\mathbf{x}_{i},\mathbf{x}_{j})$. We note
that the notation $[\cdot]_{ij}$ indicates the $[i^{th},j^{th}]$-element
of the matrix. Let us assume further without the loss of generality
that $k(\mathbf{x}_{i},\mathbf{x}_{j})\leq c$. Then, the largest
query output value can be computed as,
\[
\gamma=\sup_{\mathbf{X}}\left\Vert \mathbf{G}\right\Vert _{F}=\sup\sqrt{\sum_{i,j}k(\mathbf{x}_{i},\mathbf{x}_{j})^{2}}=nc.
\]
Next, the sensitivity can be computed by considering that the $[i^{th},j^{th}]$-element
of the matrix $\mathbf{G}$ only depends on $\mathbf{x}_{i},\mathbf{x}_{j}\in\mathbf{X}$,
but not on any other record. Hence, for neighboring datasets $\mathbf{X}$
and $\mathbf{X}'$, the query outputs differ by only one row and column.
Then, the sensitivity can be computed as,
\[
s_{2}(f)=\sup_{\mathbf{X},\mathbf{X}'}\left\Vert \mathbf{G}-\mathbf{G}'\right\Vert _{F}=\sup\sqrt{2\sum_{i=1}^{n}4k(\mathbf{x}_{k},\mathbf{x}_{i})^{2}-4k(\mathbf{x}_{k},\mathbf{x}_{k})^{2}}=c\sqrt{8n-4}.
\]
From the values of $\gamma$ and $s_{2}(f)$, it can easily be shown
that $s_{2}(f)<\gamma$ for $n>7$. For the kernel matrix used in
machine learning, $n$ corresponds to the number of samples in the
dataset. Clearly, in practice, the dataset has considerably more than
7 samples, so this condition is always met for the kernel matrix query
function.
\end{example}
Next, let us consider \emph{the second condition}, i.e. $r>12$, which
simply states that the dimension of the positive semi-definite matrix-valued
query function should be larger than 12. This can be justified simply
by noting that many real-world positive semi-definite matrix-valued
query functions have dimensions significantly larger than 12. For
example, as seen in Example \ref{exa:cov_exp} and Example \ref{exa:kernel_exp},
the dimensions of the covariance matrix and the kernel matrix depend
on the number of features and data samples, respectively. Both quantities
are generally considerably larger than 12 in practice. Similarly,
the Laplacian matrix of a graph has its dimension depending upon the
number of nodes in the graph, which is usually considerably larger
than 12 \cite{RefWorks:329}.

Finally, we emphasize that \emph{only one of the two conditions},
i.e. $s_{2}(f)\leq\gamma$ or $r>12$, needs to be satisfied in order
for the MVG mechanism to fully exploit the positive semi-definiteness
characteristic of the matrix-valued query function. As we have shown,
this poses little restriction in practice.

As a segue to the next section, we revisit the sufficient conditions
for the MVG mechanism to guarantee ($\epsilon,\delta$)-differential
privacy in Theorem \ref{thm:design_general} and Theorem \ref{thm:mvg_for_psd}.
In both theorems, the sufficient conditions put an upper-bound only
on the singular values of the two covariance matrices $\boldsymbol{\Sigma}$
and $\boldsymbol{\Psi}$. We briefly describe in this section that
each singular value can be interpret as a variance of the noise. In
the next section, we provide a detailed analysis on this interpretation
and how it can be exploited for the application of privacy-aware data
analysis with the MVG mechanism.

\section{Directional Noise for Matrix-Valued Query \label{sec:Directional-Noise}}

Recall from the differential privacy analysis of the MVG mechanism
(Theorem \ref{thm:design_general} and Theorem \ref{thm:mvg_for_psd})
that the sufficient conditions for the MVG mechanism to guarantee
$(\epsilon,\delta)$-differential privacy in both theorems only apply
to the singular values of the two covariance matrices $\boldsymbol{\Sigma}$
and $\boldsymbol{\Psi}$. Here, we investigate the ramification of
this result via the novel notion of \emph{directional noise}.

\subsection{Motivation for Non-i.i.d. Noise for Matrix-Valued Query}

For a matrix-valued query function, the standard method for basic
mechanisms that use additive noise is by adding the \emph{independent
and identically distributed} (i.i.d.) noise to each element of the
matrix query. However, as common in matrix analysis \cite{RefWorks:208},
the matrices involved often have some geometric and algebraic characteristics
that can be exploited. As a result, it is usually the case that only
certain ``parts'' \textendash{} the term which will be defined more
precisely shortly \textendash{} of the matrices contain useful information.
In fact, this is one of the rationales behind many compression techniques
such as the popular principal component analysis (PCA) \cite{RefWorks:33,RefWorks:51,RefWorks:225}.
Due to this reason, adding the same amount of noise to every ``part''
of the matrix query may be highly suboptimal.

\subsection{Directional Noise as a Non-i.i.d. Noise}

\label{subsec:dir_noise_as_noniid}

\begin{figure}
\begin{centering}
\includegraphics[scale=0.37]{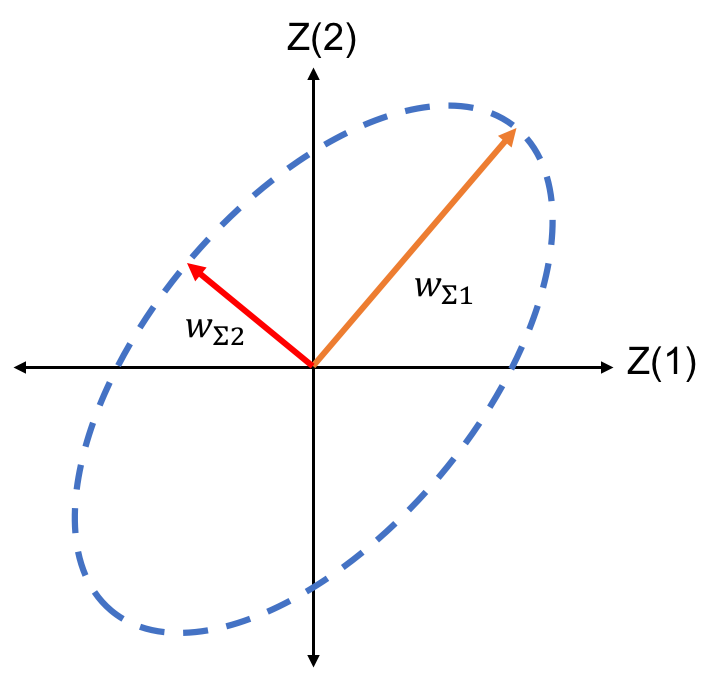}\includegraphics[scale=0.32]{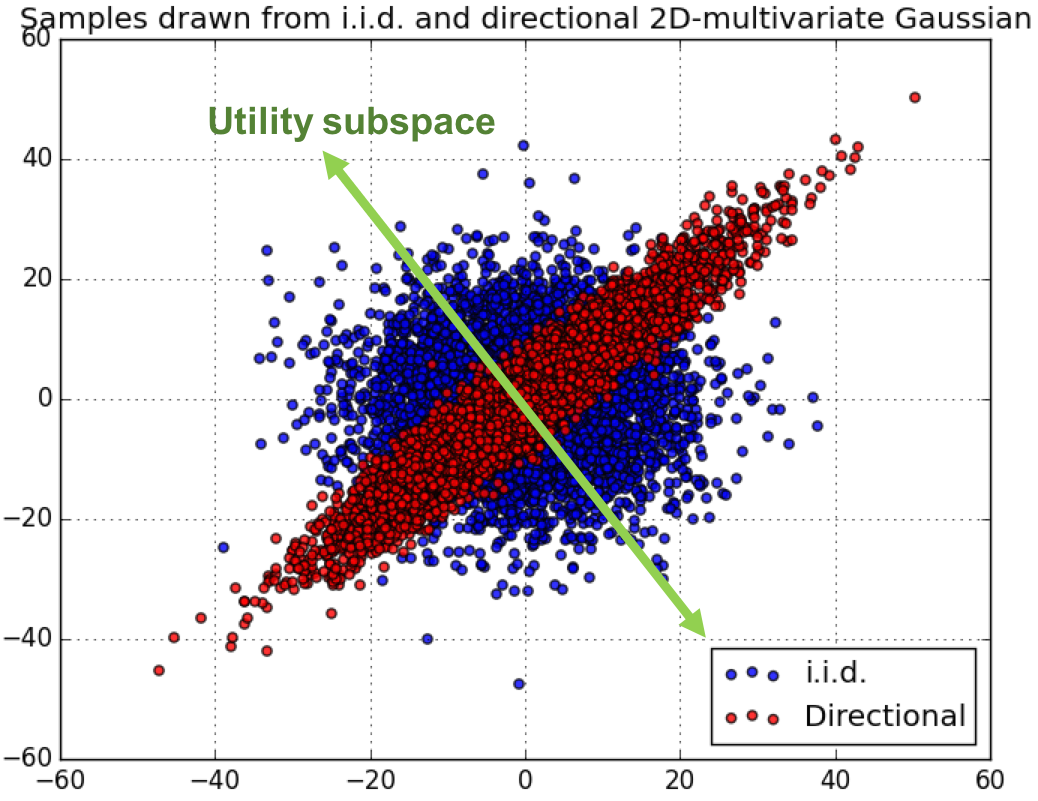}\includegraphics[scale=0.37]{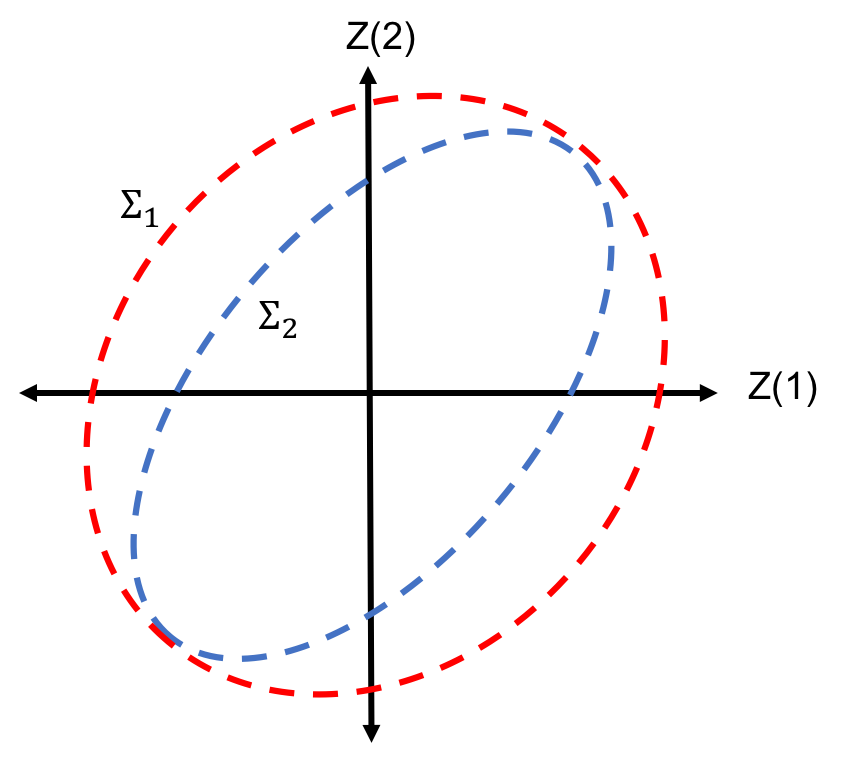}
\par\end{centering}
\caption{(Left) An ellipsoid of equi-density contour of a 2D multivariate Gaussian
distribution. The blue contour represents every point in this 2D space
which has the same probability density under this distribution. The
two arrows indicate the principal axes of this ellipsoid. (Middle)
Directional noise (red) and i.i.d. noise (blue) drawn from a 2D-multivariate
Gaussian distribution. The green line represents a possible utility
subspace that can benefit from this instance of directional noise.
(Right) Graphical comparison of the noise generated from two different
values of $\boldsymbol{\Sigma}$. The dash ellipsoids represent the
equi-density contours corresponding to the noise generated by the
two different values of $\boldsymbol{\Sigma}$. Clearly, from the
signal-to-noise ratio (SNR) point of view, the blue contour ($\boldsymbol{\Sigma}_{2}$)
is more preferable to the red contour ($\boldsymbol{\Sigma}_{1}$)
since its noise has smaller overall variance.\label{fig:dir_noise_samples}}
\end{figure}

Let us elaborate further on the ``parts'' of a matrix. In matrix
analysis, the prevalent paradigm to extract underlying properties
of a matrix is via \emph{matrix factorization} \cite{RefWorks:208}.
This is a family of algorithms and the specific choice depends upon
the application and types of insights it requires. Particularly, in
our application, the factorization that is informative is the singular
value decomposition (SVD) (Theorem \ref{thm:svd}) of the two covariance
matrices of \emph{$\mathcal{MVG}_{m,n}(\mathbf{0},\boldsymbol{\Sigma},\boldsymbol{\Psi})$}.

Consider first the covariance matrix $\boldsymbol{\Sigma}\in\mathbb{R}^{m\times m}$,
and write its SVD as, $\boldsymbol{\Sigma}=\mathbf{W}_{1}\boldsymbol{\Lambda}\mathbf{W}_{2}^{T}$.
It is well-known that, for the covariance matrix, we have the equality
$\mathbf{W}_{1}=\mathbf{W}_{2}$ since it is positive definite (cf.
\cite{RefWorks:291,RefWorks:208}). Hence, let us more concisely write
the SVD of $\boldsymbol{\Sigma}$ as, 
\[
\boldsymbol{\Sigma}=\mathbf{W}_{\boldsymbol{\Sigma}}\boldsymbol{\Lambda}_{\boldsymbol{\Sigma}}\mathbf{W}_{\boldsymbol{\Sigma}}^{T}.
\]
This representation gives us a very useful insight to the noise generated
from \emph{$\mathcal{MVG}_{m,n}(\mathbf{0},\boldsymbol{\Sigma},\boldsymbol{\Psi})$}:
it tells us the \emph{directions} of the noise via the column vectors
of $\mathbf{W}_{\boldsymbol{\Sigma}}$, and \emph{variance} of the
noise in each direction via the singular values in $\boldsymbol{\Lambda}_{\boldsymbol{\Sigma}}$.

For simplicity, let us consider a two-dimensional multivariate Gaussian
distribution, i.e. $m=2$, so there are two column vectors of $\mathbf{W}_{\boldsymbol{\Sigma}}=[\mathbf{w}_{\boldsymbol{\Sigma}1},\mathbf{w}_{\boldsymbol{\Sigma}2}]$.
The geometry of this distribution can be depicted by an ellipsoid,
e.g. the dash contour in Figure \ref{fig:dir_noise_samples}, Left
(cf. \cite[ch. 4]{RefWorks:51}, \cite[ch. 2]{RefWorks:225}). This
ellipsoid is characterized by its two principal axes \textendash{}
the major and the minor axes. It is well-known that the two column
vectors from SVD, i.e. $\mathbf{w}_{\boldsymbol{\Sigma}1}$ and $\mathbf{w}_{\boldsymbol{\Sigma}2}$,
are unit vectors pointing in the directions of the major and minor
axes of this ellipsoid, and more importantly, the length of each axis
is characterized by its corresponding singular value, i.e. $\sigma_{\boldsymbol{\Sigma}1}$
and $\sigma_{\boldsymbol{\Sigma}2}$, respectively (cf. \cite[ch. 4]{RefWorks:51})
(recall from Theorem \ref{thm:svd} that $diag(\boldsymbol{\Lambda}_{\boldsymbol{\Sigma}})=[\sigma_{\boldsymbol{\Sigma}1},\sigma_{\boldsymbol{\Sigma}2}]$).
This is illustrated by Figure \ref{fig:dir_noise_samples}, Left.
Therefore, when we consider the noise generated from this 2D multivariate
Gaussian distribution, we arrive at the following interpretation of
the SVD of its covariance matrix: the noise is distributed toward
the two principal \emph{directions} specified by $\mathbf{w}_{\boldsymbol{\Sigma}1}$
and $\mathbf{w}_{\boldsymbol{\Sigma}2}$, with the \emph{variance}
scaled by the corresponding singular values, $\sigma_{\boldsymbol{\Sigma}1}$
and $\sigma_{\boldsymbol{\Sigma}2}$.

We can extend this interpretation to a more general case with $m>2$,
and also to the other covariance matrix $\boldsymbol{\Psi}$. Then,
we have a full interpretation of \emph{$\mathcal{MVG}_{m,n}(\mathbf{0},\boldsymbol{\Sigma},\boldsymbol{\Psi})$}
as follows. The matrix-valued noise distributed according to \emph{$\mathcal{MVG}_{m,n}(\mathbf{0},\boldsymbol{\Sigma},\boldsymbol{\Psi})$}
has two components: the\emph{ row-wise noise}, and the \emph{column-wise
noise}. The row-wise noise and the column-wise noise are characterized
by the two covariance matrices, $\boldsymbol{\Sigma}$ and $\boldsymbol{\Psi}$,
respectively, as follows.

\subsubsection{For the row-wise noise}
\begin{itemize}
\item The row-wise noise is characterized by $\boldsymbol{\Sigma}$. 
\item SVD of $\boldsymbol{\Sigma}=\mathbf{W}_{\boldsymbol{\Sigma}}\boldsymbol{\Lambda}_{\boldsymbol{\Sigma}}\mathbf{W}_{\boldsymbol{\Sigma}}^{T}$
decomposes the row-wise noise into two components \textendash{} the
directions and the variances of the noise in those directions. 
\item The \emph{directions} of the row-wise noise are specified by the column
vectors of $\mathbf{W}_{\boldsymbol{\Sigma}}$. 
\item The \emph{variance} of each row-wise-noise direction is indicated
by its corresponding singular value in $\boldsymbol{\Lambda}_{\boldsymbol{\Sigma}}$. 
\end{itemize}

\subsubsection{For the column-wise noise}
\begin{itemize}
\item The column-wise noise is characterized by $\boldsymbol{\Psi}$. 
\item SVD of $\boldsymbol{\Psi}=\mathbf{W}_{\boldsymbol{\Psi}}\boldsymbol{\Lambda}_{\boldsymbol{\Psi}}\mathbf{W}_{\boldsymbol{\Psi}}^{T}$
decomposes the column-wise noise into two components \textendash{}
the directions and the variances of the noise in those directions. 
\item The \emph{directions} of the column-wise noise are specified by the
column vectors of $\mathbf{W}_{\boldsymbol{\Psi}}$. 
\item The \emph{variance} of each column-wise-noise direction is indicated
by its corresponding singular value in $\boldsymbol{\Lambda}_{\boldsymbol{\Psi}}$. 
\end{itemize}
Since \emph{$\mathcal{MVG}_{m,n}(\mathbf{0},\boldsymbol{\Sigma},\boldsymbol{\Psi})$}
is fully characterized by its covariances, these two components of
the matrix-valued noise drawn from \emph{$\mathcal{MVG}_{m,n}(\mathbf{0},\boldsymbol{\Sigma},\boldsymbol{\Psi})$}
provide a complete interpretation of the matrix-variate Gaussian noise.

\subsection{Directional Noise via the MVG Mechanism \label{subsec:Dir_noise_via_mvg}}

With the notion of directional noise, we now revisit Theorem \ref{thm:design_general}
and Theorem \ref{thm:mvg_for_psd}. Recall that the sufficient conditions
for the MVG mechanism to preserve $(\epsilon,\delta)$-differential
privacy according to both theorems put the constraint only on the
singular values of $\boldsymbol{\Sigma}$ and $\boldsymbol{\Psi}$.
However, as we discuss in the previous section, the singular values
of $\boldsymbol{\Sigma}$ and $\boldsymbol{\Psi}$ only indicate the
\emph{variance} of the noise in each direction, but \emph{not the
directions they are attributed to}. In other words, Theorem \ref{thm:design_general}
and Theorem \ref{thm:mvg_for_psd} suggest that the MVG mechanism
preserves $(\epsilon,\delta)$-differential privacy \emph{as long
as the overall variances of the noise satisfy a certain threshold,
but these variances can be attributed non-uniformly in any direction.}

This major claim certainly warrants further discussion, and we will
defer it to Section \ref{sec:Practical-Implementation}, where we
present the technical detail on how to actually implement this concept
of directional noise in practical settings. It is important to only
note here that this claim \emph{does not mean} that we can avoid adding
noise in any particular direction altogether. On the contrary, there
is still a minimum amount of noise \emph{required in every direction}
for the MVG mechanism to guarantee differential privacy, but the noise
simply can be attributed unevenly in different directions (cf. Fig.
\ref{fig:dir_noise_samples}, Right, for an example).

Finally, with the understanding of directional noise via the MVG mechanism,
we can revisit the discussion in Sec. \ref{subsec:Comparative-Analysis}.
Conceptually, the sufficient conditions in Theorem \ref{thm:design_general}
and Theorem \ref{thm:mvg_for_psd} impose a threshold on the minimum
overall variance of the noise in every direction. Therefore, intuitively,
to achieve the most utility out of the noisy query output, we want
the threshold to be as low as possible (cf. Figure \ref{fig:dir_noise_samples},
Right). As discussed in Section \ref{subsec:Comparative-Analysis},
since both sufficient conditions put the constraint upon the \emph{inverse}
of the variance, this means that we want the upper-bound of the sufficient
conditions to be as large as possible for maximum utility, and, hence,
the derivation of Theorem \ref{thm:psd_is_tighter_than_general}.

\subsection{Utility Gain via Directional Noise \label{subsec:Utility-Gain-via-dir-noise}}

As the concept of the directional noise is established, one important
remaining question is how we can exploit this to enhance the utility
of differential privacy. The answer to this question depends on the
query function. Here, we present examples for two popular matrix-valued
query functions: the identity query and the covariance matrix.

\subsubsection{Identity Query}

Formally, the identity query is $f(\mathbf{X})=\mathbf{X}$. We illustrate
how to exploit directional noise for enhancing utility via the following
two examples. 
\begin{example}
Consider the personalized warfarin dosing problem \cite{RefWorks:404},
which can be considered as the regression problem with the identity
query. In the i.i.d. noise scheme, every feature used in the warfarin
dosing prediction is equally perturbed. However, domain experts may
have prior knowledge that some features are more critical than the
others, so adding directional noise designed such that the more critical
features are perturbed less can potentially yield better prediction
performance. 
\end{example}
We note from this example that the directions chosen here are among
the \emph{standard basis}, e.g. $\mathbf{e}_{1}=[1,0,\ldots,0]^{T},\mathbf{e}_{2}=[0,1,\ldots,0]^{T}$,
which is one of the simplest forms of directions. Moreover, the directions
in this example are decided based on the domain expertise. When domain
knowledge is unavailable, we may still identify the useful directions
by spending a fraction of the privacy budget on deriving the directions. 
\begin{example}
Consider again the warfarin dosing problem \cite{RefWorks:404}, and
assume that we do not possess any prior knowledge about the predictive
features. We can still learn this information from the data by spending
a small privacy budget on deriving differentially-private principal
components (P.C.) from available differentially-private PCA algorithms
\cite{pca-gauss,RefWorks:249,RefWorks:178,RefWorks:313,RefWorks:194,RefWorks:405,RefWorks:406}.
Each P.C. can then serve as a direction and, with directional noise,
we can selectively add less noise in the highly informative directions
as indicated by PCA. 
\end{example}
As opposed to the previous example, the directions in this example
are not necessary among the standard basis, but can be \emph{any unit
vector}. Nevertheless, this example illustrates how directional noise
can provide additional utility benefit even with no assumption on
the prior knowledge.

Figure \ref{fig:dir_noise_samples}, Middle, illustrates an example
of how directional noise can provide the utility gain over i.i.d.
noise. In the illustration, we assume the data with two features,
and assume that we have obtained the utility direction, e.g. from
PCA, represented by the green line. This can be considered as the
utility subspace we desire to be least perturbed. The many small circles
in the illustration represent how the i.i.d. noise and directional
noise are distributed under the 2D multivariate Gaussian distribution.
Clearly, directional noise can reduce the perturbation experienced
on the utility subspace when compared to the i.i.d. noise.

\subsubsection{Covariance Matrix}

Let us consider a slightly more involved matrix-valued query similar
to Example \ref{exa:cov_exp}: the covariance matrix, i.e. $f(\mathbf{X})=\frac{1}{n}\mathbf{X}\mathbf{X}^{T}$.
Here, we consider the dataset with $n$ samples and $m$ features.
The following example illustrates how we can utilize directional noise
for this query function. 
\begin{example}
Consider the Netflix prize dataset \cite{RefWorks:377,RefWorks:252}.
A popular method for solving the Netflix challenge is via the low-rank
approximation \cite{RefWorks:300}. One way to perform this method
is to query the covariance matrix of the dataset \cite{RefWorks:194,RefWorks:408,RefWorks:178}.
Suppose we use output perturbation to preserve differential privacy,
i.e. $\mathcal{A}(f(\mathbf{X}))=f(\mathbf{X})+\mathcal{Z}$, and
suppose we have prior knowledge from domain experts that some features
are more informative than the others. Since the covariance matrix
has the underlying property that each row and column correspond to
a single feature \cite{RefWorks:51}, we can use this prior knowledge
with directional noise by adding less noise to the rows and columns
corresponding to the informative features. 
\end{example}
This example emphasizes the usefulness of the concept of directional
noise even when the query function is more complex than the simple
identity query.

In the recent three examples, we derive the directions for the noise
either from PCA or prior knowledge. PCA, however, is probably only
suitable to the data-release type of query, e.g. the identity query.
It is, therefore, still unclear how to derive the directions of the
noise for a general matrix-valued query function when prior knowledge
may not be available. We discuss a possible solution to this problem
in the next section.

\subsubsection{Directional Noise for General Matrix-Valued Query}

\label{subsec:Dir._noise_general}

For a general matrix-valued query function, one general possible approach
to derive the directions of the noise is to use the SVD. As discussed
in Section \ref{sec:Directional-Noise}, SVD can decompose a matrix
into its directions and variances. Hence, for a general matrix-valued
query function, we can cast aside a small portion of privacy budget
to derive the directions from the SVD of the query function. Clearly,
the direction-derivation process via SVD needs to be private. Fortunately,
there have been many works on differentially-private SVD \cite{RefWorks:405,RefWorks:406,RefWorks:194}.
We experimentally demonstrate the feasibility of the approach in Section
\ref{sec:Experimental-Results}.

In the next section, we discuss how we implement directional noise
with the MVG mechanism in practice and propose two simple algorithms
for two types of directional noise.

\section{Practical Implementation of MVG Mechanism \label{sec:Practical-Implementation}}

The differential privacy condition in Theorem \ref{thm:design_general}
and Theorem \ref{thm:mvg_for_psd}, even along with the notion of
directional noise in the previous section, still leads to a large
design space for the MVG mechanism. In this section, we present two
simple algorithms to implement the MVG mechanism with two types of
directional noise that can be appropriate for a wide range of real-world
applications. Then, we conclude the section with a discussion on the
sampling algorithms for \emph{$\mathcal{MVG}_{m,n}(\mathbf{0},\boldsymbol{\Sigma},\boldsymbol{\Psi})$}
necessary for the practical implementation of our mechanism.

As discussed in Section \ref{subsec:Dir_noise_via_mvg}, Theorem \ref{thm:design_general}
and Theorem \ref{thm:mvg_for_psd} state that the MVG mechanism satisfies
$(\epsilon,\delta)$-differential privacy as long as the singular
values of the row-wise and column-wise covariance matrices satisfy
the sufficient condition. This provides tremendous flexibility in
the choice of the \emph{directions of the noise}. First, we notice
from the sufficient condition in Theorem \ref{thm:design_general}
that the singular values for $\boldsymbol{\Sigma}$ and $\boldsymbol{\Psi}$
are decoupled, i.e. they can be designed independently, whereas Theorem
\ref{thm:mvg_for_psd} explicitly imposes $\boldsymbol{\Psi}=\boldsymbol{\Sigma}$.
Hence, the \emph{row-wise noise} and \emph{column-wise noise} can
be considered as the \emph{two modes of noise} in the MVG mechanism.
By this terminology, we discuss two types of directional noise: the
\emph{unimodal} and \emph{equi-modal} directional noise.

\subsection{Unimodal Directional Noise \label{subsec:Unimodal-Directional-Noise}}

For the unimodal directional noise, we select \emph{one mode} of the
noise to be directional noise, whereas the other mode of the noise
is set to be i.i.d. For this discussion, we assume that the row-wise
noise is directional noise, while the column-wise noise is i.i.d.
However, the opposite case can be readily analyzed with the similar
analysis. Furthermore, for simplicity, we only consider the differential
privacy guarantee provided by Theorem \ref{thm:design_general} for
the unimodal directional noise.

We note that, apart from simplifying the practical implementation
that we will discuss shortly, this type of directional noise can be
appropriate for many applications. For example, for the identity query,
we may not possess any prior knowledge on the quality of each sample,
so the best strategy would be to consider the i.i.d. column-wise noise
(recall that in our notation, samples are the column vectors).

Formally, the unimodal directional noise sets $\boldsymbol{\Psi}=\mathbf{I}_{n}$,
where $\mathbf{I}_{n}$ is the $n\times n$ identity matrix. This,
consequently, reduces the design space for the MVG mechanism with
directional noise to only the design of $\boldsymbol{\Sigma}$. Next,
consider the left side of the sufficient condition in Theorem \ref{thm:design_general},
and we have 
\begin{equation}
\left\Vert \boldsymbol{\sigma}(\boldsymbol{\Sigma}^{-1})\right\Vert _{2}=\sqrt{\sum_{i=1}^{m}\frac{1}{\sigma_{i}^{2}(\boldsymbol{\Sigma})}}\ \textrm{, and}\ \left\Vert \boldsymbol{\sigma}(\boldsymbol{\Psi}^{-1})\right\Vert _{2}=\sqrt{n}.\label{eq:singular_u}
\end{equation}
If we square both sides of the sufficient condition and re-arrange
it, we get a form of the condition such that the row-wise noise in
each direction is completely decoupled: 
\begin{equation}
\sum_{i=1}^{m}\frac{1}{\sigma_{i}^{2}(\boldsymbol{\Sigma})}\leq\frac{1}{n}\frac{(-\beta+\sqrt{\beta^{2}+8\alpha\epsilon})^{4}}{16\alpha^{4}}.\label{eq:decoupled_condition}
\end{equation}

This form gives a very intuitive interpretation of the directional
noise. First, we note that, to have small noise in the $i^{th}$ direction,
$\sigma_{i}(\boldsymbol{\Sigma})$ has to be small (cf. Section \ref{subsec:dir_noise_as_noniid}).
However, the sum of $1/\sigma_{i}^{2}(\boldsymbol{\Sigma})$ of the
noise in all directions, which should hence be large, is limited by
the quantity on the right side of Eq. (\ref{eq:decoupled_condition}).
This, in fact, explains why even with directional noise, we still
need to add noise in \emph{every direction} to guarantee differential
privacy. Consider the case when we set the noise in one direction
to be zero, and we have $\underset{\sigma\rightarrow0}{\lim}\frac{1}{\sigma}=\infty$,
which immediately violates the sufficient condition in Eq. (\ref{eq:decoupled_condition}).

From Eq. (\ref{eq:decoupled_condition}), the quantity $1/\sigma_{i}^{2}(\boldsymbol{\Sigma})$
is the inverse of the variance of the noise in the $i^{th}$ direction,
so we may think of it as the \emph{precision} measure of the query
answer in that direction. The intuition is that the higher this value
is, the lower the noise added in that direction, and, hence, the more
precise the query value in that direction is. From this description,
the constraint in Eq. (\ref{eq:decoupled_condition}) can be aptly
named as the \emph{precision budget}, and we immediately have the
following theorem. 
\begin{thm}
For the MVG mechanism with $\boldsymbol{\Psi}=\mathbf{I}_{n}$, the
precision budget is $(-\beta+\sqrt{\beta^{2}+8\alpha\epsilon})^{4}/(16\alpha^{4}n)$. 
\end{thm}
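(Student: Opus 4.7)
The plan is to obtain the claimed precision budget as a direct specialization of Theorem \ref{thm:design_general} under the constraint $\boldsymbol{\Psi}=\mathbf{I}_{n}$, followed by an algebraic rearrangement that decouples the per-direction contributions of the row-wise covariance. Since the framing in the paragraphs preceding the theorem has already done most of the work (culminating in Eq. (\ref{eq:decoupled_condition})), the proof amounts to identifying the right-hand side of that inequality as the precision budget.

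First, I would compute $\left\Vert \boldsymbol{\sigma}(\boldsymbol{\Psi}^{-1})\right\Vert _{2}$ under the unimodal choice $\boldsymbol{\Psi}=\mathbf{I}_{n}$. Since $\mathbf{I}_{n}^{-1}=\mathbf{I}_{n}$ has all $n$ singular values equal to one, we get $\left\Vert \boldsymbol{\sigma}(\boldsymbol{\Psi}^{-1})\right\Vert _{2}=\sqrt{n}$. Next, using the elementary fact that the singular values of an invertible matrix and its inverse are reciprocals, $\sigma_{i}(\boldsymbol{\Sigma}^{-1})=1/\sigma_{i}(\boldsymbol{\Sigma})$, I would write $\left\Vert \boldsymbol{\sigma}(\boldsymbol{\Sigma}^{-1})\right\Vert _{2}^{2}=\sum_{i=1}^{m}1/\sigma_{i}^{2}(\boldsymbol{\Sigma})$, exactly as in Eq. (\ref{eq:singular_u}).

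Substituting both quantities into the sufficient condition of Theorem \ref{thm:design_general}, namely $\left\Vert \boldsymbol{\sigma}(\boldsymbol{\Sigma}^{-1})\right\Vert _{2}\left\Vert \boldsymbol{\sigma}(\boldsymbol{\Psi}^{-1})\right\Vert _{2}\leq(-\beta+\sqrt{\beta^{2}+8\alpha\epsilon})^{2}/(4\alpha^{2})$, and squaring both sides (which preserves the inequality since both sides are non-negative) yields
\[
n\sum_{i=1}^{m}\frac{1}{\sigma_{i}^{2}(\boldsymbol{\Sigma})}\leq\frac{(-\beta+\sqrt{\beta^{2}+8\alpha\epsilon})^{4}}{16\alpha^{4}},
\]
which, after dividing by $n$, is precisely Eq. (\ref{eq:decoupled_condition}).

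The final step is interpretive rather than computational: by the discussion preceding the theorem statement, the quantity $1/\sigma_{i}^{2}(\boldsymbol{\Sigma})$ is the inverse of the noise variance in the $i^{\mathrm{th}}$ row-wise direction, and its sum across all $m$ directions represents the total precision allocated by the mechanism. The derived inequality therefore exhibits an upper bound on this total precision equal to $(-\beta+\sqrt{\beta^{2}+8\alpha\epsilon})^{4}/(16\alpha^{4}n)$, which is the precision budget asserted in the theorem. There is no genuine obstacle here; the mild subtlety is simply to verify that the squaring step is valid (both sides are manifestly non-negative) and to make the connection between the abstract naming ``precision budget'' and the concrete algebraic expression unambiguous.
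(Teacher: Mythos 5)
Your proposal is correct and follows essentially the same route as the paper: the paper also specializes the sufficient condition of Theorem \ref{thm:design_general} by computing $\left\Vert \boldsymbol{\sigma}(\boldsymbol{\Psi}^{-1})\right\Vert _{2}=\sqrt{n}$ under $\boldsymbol{\Psi}=\mathbf{I}_{n}$, squares the inequality to obtain the decoupled form in Eq. (\ref{eq:decoupled_condition}), and names its right-hand side the precision budget. No gaps; your explicit remark that squaring is valid because both sides are non-negative is the only (minor) point the paper leaves implicit.
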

Finally, the remaining task is to determine the directions of the
noise and form $\boldsymbol{\Sigma}$ accordingly. To do so systematically,
we first decompose $\boldsymbol{\Sigma}$ by SVD as, 
\[
\boldsymbol{\Sigma}=\mathbf{W}_{\boldsymbol{\Sigma}}\boldsymbol{\Lambda}_{\boldsymbol{\Sigma}}\mathbf{W}_{\boldsymbol{\Sigma}}^{T}.
\]
This decomposition represents $\boldsymbol{\Sigma}$ by two components
\textendash{} the directions of the row-wise noise indicated by $\mathbf{W}_{\boldsymbol{\Sigma}}$,
and the variance of the noise indicated by $\boldsymbol{\Lambda}_{\boldsymbol{\Sigma}}$.
Since the precision budget only puts constraint upon $\boldsymbol{\Lambda}_{\boldsymbol{\Sigma}}$,
this decomposition allows us to freely chose any unitary matrix for
$\mathbf{W}_{\boldsymbol{\Sigma}}$ such that each column of $\mathbf{W}_{\boldsymbol{\Sigma}}$
indicates each independent direction of the noise.

Therefore, we present the following simple approach to design the
MVG mechanism with the unimodal directional noise: under a given precision
budget, allocate more precision to the directions of more importance.

Algorithm \ref{alg:mvg_design_unimodal} formalizes this procedure.
It takes as inputs, among other parameters, the \emph{precision allocation
strategy} $\boldsymbol{\theta}\in(0,1)^{m}$, and the \emph{directions}
$\mathbf{W}_{\boldsymbol{\Sigma}}\in\mathbb{R}^{m\times m}$. The
precision allocation strategy is a vector of size $m$, whose elements,
$\theta_{i}\in(0,1)$, corresponds to the importance of the $i^{th}$
direction indicated by the $i^{th}$ orthonormal column vector of
$\mathbf{W}_{\boldsymbol{\Sigma}}$. The higher the value of $\theta_{i}$,
the more important the $i^{th}$ direction is. Moreover, the algorithm
enforces that $\sum_{i=1}^{m}\theta_{i}\leq1$ to ensure that we do
not overspend the precision budget. The algorithm, then, proceeds
as follows. First, compute $\alpha$ and $\beta$ and, then, the precision
budget $P$. Second, assign precision to each direction based on the
precision allocation strategy. Third, derive the variance of the noise
in each direction accordingly. Then, compute $\boldsymbol{\Sigma}$
from the noise variance and directions, and draw a matrix-valued noise
from $\mathcal{MVG}_{m,n}(\mathbf{0},\boldsymbol{\Sigma},\mathbf{I})$.
Finally, output the query answer with additive matrix noise.

We make a remark here about choosing directions of the noise. As discussed
in Section \ref{sec:Directional-Noise}, any orthonormal set of vectors
can be used as the directions. The simplest instance of such set is
the the standard basis vectors, e.g. $\mathbf{e}_{1}=[1,0,0]^{T},\mathbf{e}_{2}=[0,1,0]^{T},\mathbf{e}_{3}=[0,0,1]^{T}$
for $\mathbb{R}^{3}$.

\begin{algorithm}
\textbf{{}{}{}Input:}{}{}{} (a) privacy parameters: $\epsilon,\delta$;
(b) the query function and its sensitivity: $f(\mathbf{X})\in\mathbb{R}^{m\times n},s_{2}(f)$;
(c) the precision allocation strategy $\boldsymbol{\theta}\in(0,1)^{m}:\left|\boldsymbol{\theta}\right|_{1}=1$;
and (d) the $m$ directions of the row-wise noise $\mathbf{W}_{\boldsymbol{\Sigma}}\in\mathbb{R}^{m\times m}$.

\begin{enumerate}
\item {}{}Compute $\alpha$ and $\beta$ (cf. Theorem \ref{thm:design_general}).
\item {}{}{}Compute the precision budget $P=\frac{(-\beta+\sqrt{\beta^{2}+8\alpha\epsilon})^{4}}{16\alpha^{4}n}$.

\item {}{}{}\textbf{{}for}{} $i=1,\ldots,m$: 
\begin{enumerate}
\item {}{}Set $p_{i}=\theta_{i}P$.
\item {}{}Compute the $i^{th}$ direction's variance, $\sigma_{i}(\boldsymbol{\Sigma})=1/\sqrt{p_{i}}$.
\end{enumerate}
\item {}{}{}Form the diagonal matrix $\boldsymbol{\Lambda}_{\boldsymbol{\Sigma}}=diag([\sigma_{1}(\boldsymbol{\Sigma}),\ldots,\sigma_{m}(\boldsymbol{\Sigma})])$.

\item {}{}{}Derive the covariance matrix: $\boldsymbol{\Sigma}=\mathbf{W}_{\boldsymbol{\Sigma}}\boldsymbol{\Lambda}_{\boldsymbol{\Sigma}}\mathbf{W}_{\boldsymbol{\Sigma}}^{T}$.
\item {}{}{}Draw a matrix-valued noise $\mathbf{Z}$ from $\mathcal{MVG}_{m,n}(\mathbf{0},\boldsymbol{\Sigma},\mathbf{I})$.

\end{enumerate}
\textbf{{}{}{}Output:}{}{}{} $f(\mathbf{X})+\mathbf{Z}$.

\caption{MVG mechanism with unimodal directional noise.\label{alg:mvg_design_unimodal}}
\end{algorithm}

\subsection{Equi-Modal Directional Noise \label{subsec:Equi-Modal-Directional-Noise}}

Next, we consider the type of directional noise of which the row-wise
noise and column-wise noise are distributed identically, which we
call the equi-modal directional noise. We recommend this type of directional
noise for a \emph{symmetric query function}, i.e. $f(\mathbf{X})=f(\mathbf{X})^{T}\in\mathbb{R}^{m\times m}$.
This covers a wide-range of query functions including the covariance
matrix \cite{RefWorks:249,RefWorks:194,RefWorks:178}, the kernel
matrix \cite{RefWorks:33}, the adjacency matrix of an undirected
graph \cite{RefWorks:329}, and the Laplacian matrix \cite{RefWorks:329}.
The motivation for this recommendation is that, for symmetric query
functions, any prior information about the rows would similarly apply
to the columns, so it is reasonable to use identical row-wise and
column-wise noise. Since Theorem \ref{thm:mvg_for_psd} considers
the \emph{symmetric} positive semi-definite matrix-valued query function,
the equi-modal directional noise can be used with both Theorem \ref{thm:design_general}
and Theorem \ref{thm:mvg_for_psd}. However, we note that to use the
equi-modal directional noise with Theorem \ref{thm:mvg_for_psd},
the matrix-valued query function also has to be positive semi-definite.

Formally, this type of directional noise imposes that $\boldsymbol{\Psi}=\boldsymbol{\Sigma}$.
Following a similar derivation to the unimodal type, we have the following
precision budget. 
\begin{thm}
\label{thm:equi_budget} For the MVG mechanism with $\boldsymbol{\Psi}=\boldsymbol{\Sigma}$,
the precision budget is $(-\beta+\sqrt{\beta^{2}+8\alpha\epsilon})^{2}/(4\alpha^{2})$
with Theorem \ref{thm:design_general}, or $(-\beta+\sqrt{\beta^{2}+8\omega\epsilon})^{2}/(4\omega^{2})$
with Theorem \ref{thm:mvg_for_psd}.
\end{thm}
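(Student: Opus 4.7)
The plan is to mirror the derivation of the precision budget for the unimodal case in Section~\ref{subsec:Unimodal-Directional-Noise}, but substitute the equi-modal constraint $\boldsymbol{\Psi}=\boldsymbol{\Sigma}$ (which implicitly forces $n=m$) instead of $\boldsymbol{\Psi}=\mathbf{I}_n$. The precision budget was defined as the upper bound on $\sum_{i=1}^m 1/\sigma_i^2(\boldsymbol{\Sigma})$, i.e., the total allowable precision across noise directions, so the task reduces to manipulating the sufficient conditions of Theorem~\ref{thm:design_general} and Theorem~\ref{thm:mvg_for_psd} into exactly this form.

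For the branch based on Theorem~\ref{thm:design_general}, I would start from Eq.~\eqref{eq.sufficient_condition} and substitute $\boldsymbol{\Psi}=\boldsymbol{\Sigma}$, which yields
\[
\left\Vert \boldsymbol{\sigma}(\boldsymbol{\Sigma}^{-1})\right\Vert _{2}^{2}\leq\frac{(-\beta+\sqrt{\beta^{2}+8\alpha\epsilon})^{2}}{4\alpha^{2}}.
\]
Applying the identity $\sigma_i(\boldsymbol{\Sigma}^{-1})=1/\sigma_i(\boldsymbol{\Sigma})$ gives $\|\boldsymbol{\sigma}(\boldsymbol{\Sigma}^{-1})\|_2^2 = \sum_{i=1}^m 1/\sigma_i^2(\boldsymbol{\Sigma})$, which is exactly the quantity labeled the precision budget in Section~\ref{subsec:Unimodal-Directional-Noise}. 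Hence the right-hand side $(-\beta+\sqrt{\beta^{2}+8\alpha\epsilon})^{2}/(4\alpha^{2})$ is the precision budget under Theorem~\ref{thm:design_general}.

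For the branch based on Theorem~\ref{thm:mvg_for_psd}, no substitution is needed because that theorem already assumes $\boldsymbol{\Psi}=\boldsymbol{\Sigma}$ in its hypothesis. The sufficient condition in Eq.~\eqref{eq.sufficient_condition_psd} is already written as $\|\boldsymbol{\sigma}(\boldsymbol{\Sigma}^{-1})\|_2^2 \leq (-\beta+\sqrt{\beta^{2}+8\omega\epsilon})^{2}/(4\omega^{2})$, and the same reciprocal-singular-value identity immediately identifies the right-hand side as the precision budget. Note the one structural difference from the unimodal case: no $1/n$ factor appears, because there is no longer an i.i.d. column mode contributing a factor $\sqrt{n}$ to $\|\boldsymbol{\sigma}(\boldsymbol{\Psi}^{-1})\|_2$ as in Eq.~\eqref{eq:singular_u}.

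I do not anticipate any genuine obstacle in this proof, as it is entirely an algebraic rewriting of previously proven sufficient conditions via the substitution $\boldsymbol{\Psi}=\boldsymbol{\Sigma}$ and the elementary observation that singular values of $\boldsymbol{\Sigma}^{-1}$ are reciprocals of those of $\boldsymbol{\Sigma}$. The only point worth flagging explicitly is that the equi-modal constraint is only well-defined when $m=n$, which is consistent with the intended symmetric-query use case discussed in Section~\ref{subsec:Equi-Modal-Directional-Noise}; this justifies why the factor involving $n$ that appeared in the unimodal precision budget is absent here.
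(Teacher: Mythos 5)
Your proposal is correct and follows essentially the same route as the paper, which gives no explicit proof beyond stating that the equi-modal budget follows ``a similar derivation to the unimodal type'': substituting $\boldsymbol{\Psi}=\boldsymbol{\Sigma}$ into the sufficient condition of Theorem \ref{thm:design_general} (or reading off the condition of Theorem \ref{thm:mvg_for_psd}, which already assumes it) turns the left-hand side into $\left\Vert \boldsymbol{\sigma}(\boldsymbol{\Sigma}^{-1})\right\Vert _{2}^{2}=\sum_{i}1/\sigma_{i}^{2}(\boldsymbol{\Sigma})$, so the right-hand side is exactly the precision budget. Your remarks that no $1/n$ factor (and hence no fourth power of the bound) appears, and that the setting implicitly requires $m=n$, are both accurate and consistent with the paper's intent.
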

We emphasize again that the choice between applying Theorem \ref{thm:design_general}
or Theorem \ref{thm:mvg_for_psd} for the precision budget depends
on the particular query function, as discussed in Sec. \ref{subsec:Comparative-Analysis}.
Then, following a similar procedure to the unimodal type, we present
Algorithm \ref{alg:mvg_design_equimodal} for the MVG mechanism with
the equi-modal directional noise. The algorithm follows the same steps
as Algorithm \ref{alg:mvg_design_unimodal}, except it derives the
precision budget from Theorem \ref{thm:equi_budget}, and draws the
noise from $\mathcal{MVG}_{m,m}(\mathbf{0},\boldsymbol{\Sigma},\boldsymbol{\Sigma})$.

\begin{algorithm}
\textbf{{}{}{}Input:}{}{}{} (a) privacy parameters: $\epsilon,\delta$;
(b) the query function and its sensitivity: $f(\mathbf{X})\in\mathbb{R}^{m\times m},s_{2}(f)$;
(c) the precision allocation strategy $\boldsymbol{\theta}\in(0,1)^{m}:\left|\boldsymbol{\theta}\right|_{1}=1$;
and (d) the $m$ noise directions $\mathbf{W}_{\boldsymbol{\Sigma}}\in\mathbb{R}^{m\times m}$.
\begin{enumerate}
\item {}{}{}Compute $\alpha$ and $\beta$ (cf. Theorem \ref{thm:design_general}),
or $\omega$ and $\beta$ (cf. Theorem \ref{thm:mvg_for_psd}) depending
on the query function (cf. Section \ref{subsec:Comparative-Analysis}).
\item {}{}{}Compute the precision budget $P=\frac{(-\beta+\sqrt{\beta^{2}+8\alpha\epsilon})^{2}}{4\alpha^{2}}$
or $P=\frac{(-\beta+\sqrt{\beta^{2}+8\omega\epsilon})^{2}}{4\omega^{2}}$,
according to the choice in step 1.
\item {}{}{}\textbf{{}for}{} $i=1,\ldots,m$: 
\begin{enumerate}
\item {}{}Set $p_{i}=\theta_{i}P$.
\item {}{}Compute the the $i^{th}$ direction's variance, $\sigma_{i}(\boldsymbol{\Sigma})=1/\sqrt{p_{i}}$.
\end{enumerate}
\item {}{}{}Form the diagonal matrix $\boldsymbol{\Lambda}_{\boldsymbol{\Sigma}}=diag([\sigma_{1}(\boldsymbol{\Sigma}),\ldots,\sigma_{m}(\boldsymbol{\Sigma})])$.

\item {}{}{}Derive the covariance matrix: $\boldsymbol{\Sigma}=\mathbf{W}_{\boldsymbol{\Sigma}}\boldsymbol{\Lambda}_{\boldsymbol{\Sigma}}\mathbf{W}_{\boldsymbol{\Sigma}}^{T}$.

\item {}{}{}Draw a matrix-valued noise $\mathbf{Z}$ from $\mathcal{MVG}_{m,m}(\mathbf{0},\boldsymbol{\Sigma},\boldsymbol{\Sigma})$.

\end{enumerate}
\textbf{{}{}{}Output:}{}{}{} $f(\mathbf{X})+\mathbf{Z}$.

\caption{MVG mechanism with equi-modal directional noise.\label{alg:mvg_design_equimodal}}
\end{algorithm}

\subsection{Sampling from $\mathcal{MVG}_{m,n}(\mathbf{0},\boldsymbol{\Sigma},\boldsymbol{\Psi})$
\label{subsec:Sampling-from-mvg}}

One remaining question on the practical implementation of the MVG
mechanism is how to efficiently draw the noise from $\mathcal{MVG}_{m,n}(\mathbf{0},\boldsymbol{\Sigma},\boldsymbol{\Psi})$.
Here, we present two methods to implement the $\mathcal{MVG}_{m,n}(\mathbf{0},\boldsymbol{\Sigma},\boldsymbol{\Psi})$
samplers from currently available samplers of other distributions.
The first is based on the equivalence between $\mathcal{MVG}_{m,n}(\mathbf{0},\boldsymbol{\Sigma},\boldsymbol{\Psi})$
and the multivariate Gaussian distribution, and the second is based
on the affine transformation of the i.i.d. normal distribution.

\subsubsection{Sampling via the Multivariate Gaussian}

This method uses the equivalence between $\mathcal{MVG}_{m,n}(\mathbf{0},\boldsymbol{\Sigma},\boldsymbol{\Psi})$
and the multivariate Gaussian distribution via the vectorization operator
$vec(\cdot)$, and the Kronecker product $\otimes$ \cite{RefWorks:208}.
The relationship is described by the following lemma \cite{RefWorks:282,RefWorks:279,RefWorks:367}. 
\begin{lem}
\label{lem:mvg_mn_equivalence}$\mathcal{X}\sim\mathcal{MVG}_{m,n}(\mathbf{M},\boldsymbol{\Sigma},\boldsymbol{\Psi})$
if and only if $vec(\mathcal{X})\sim\mathcal{N}_{mn}(vec(\mathbf{M}),\boldsymbol{\Psi}\otimes\boldsymbol{\Sigma})$,
where $\mathcal{N}_{mn}(vec(\mathbf{M}),\boldsymbol{\Psi}\otimes\boldsymbol{\Sigma})$
denotes the $mn$-dimensional multivariate Gaussian distribution with
mean $vec(\mathbf{M})$ and covariance $\boldsymbol{\Psi}\otimes\boldsymbol{\Sigma}$. 
\end{lem}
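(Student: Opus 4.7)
The plan is to establish the equivalence by showing that the probability density functions of $\mathcal{MVG}_{m,n}(\mathbf{M},\boldsymbol{\Sigma},\boldsymbol{\Psi})$ and $\mathcal{N}_{mn}(vec(\mathbf{M}),\boldsymbol{\Psi}\otimes\boldsymbol{\Sigma})$ are equal when the former is expressed in terms of $vec(\mathcal{X})$ via the change-of-variables implicit in the vectorization map. Since $vec(\cdot)$ is a linear bijection between $\mathbb{R}^{m\times n}$ and $\mathbb{R}^{mn}$ with Jacobian determinant $1$, matching the two densities pointwise immediately yields both directions of the ``if and only if.''

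First I would recall the three standard Kronecker/vectorization identities that will do all of the work: (i) $vec(\mathbf{A}\mathbf{Y}\mathbf{B}) = (\mathbf{B}^T\otimes\mathbf{A})\,vec(\mathbf{Y})$, (ii) $vec(\mathbf{A})^T vec(\mathbf{B}) = \mathrm{tr}(\mathbf{A}^T\mathbf{B})$, and (iii) for $\boldsymbol{\Sigma}\in\mathbb{R}^{m\times m}$ and $\boldsymbol{\Psi}\in\mathbb{R}^{n\times n}$, $|\boldsymbol{\Psi}\otimes\boldsymbol{\Sigma}| = |\boldsymbol{\Psi}|^{m}|\boldsymbol{\Sigma}|^{n}$ together with $(\boldsymbol{\Psi}\otimes\boldsymbol{\Sigma})^{-1} = \boldsymbol{\Psi}^{-1}\otimes\boldsymbol{\Sigma}^{-1}$. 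These are all classical; I would just cite a standard matrix-analysis reference such as \cite{RefWorks:208}.

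Next I would match the normalizing constants. The multivariate Gaussian normalizer is $(2\pi)^{mn/2}|\boldsymbol{\Psi}\otimes\boldsymbol{\Sigma}|^{1/2}$, and identity (iii) rewrites this as $(2\pi)^{mn/2}|\boldsymbol{\Psi}|^{m/2}|\boldsymbol{\Sigma}|^{n/2}$, which is precisely the normalizer appearing in Definition \ref{def:tmvg_dist}. I would then match the exponents. Writing $\mathbf{Y} = \mathbf{X}-\mathbf{M}$ and applying identity (i) with $\mathbf{A}=\boldsymbol{\Sigma}^{-1}$, $\mathbf{B}=\boldsymbol{\Psi}^{-1}$ (using symmetry of $\boldsymbol{\Psi}^{-1}$ so that $(\boldsymbol{\Psi}^{-1})^T=\boldsymbol{\Psi}^{-1}$), I obtain $(\boldsymbol{\Psi}^{-1}\otimes\boldsymbol{\Sigma}^{-1})\,vec(\mathbf{Y}) = vec(\boldsymbol{\Sigma}^{-1}\mathbf{Y}\boldsymbol{\Psi}^{-1})$. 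Applying identity (ii) then gives
\[
vec(\mathbf{Y})^T(\boldsymbol{\Psi}\otimes\boldsymbol{\Sigma})^{-1}vec(\mathbf{Y}) = \mathrm{tr}(\mathbf{Y}^T\boldsymbol{\Sigma}^{-1}\mathbf{Y}\boldsymbol{\Psi}^{-1}),
\]
and the cyclic property of the trace converts the right-hand side into $\mathrm{tr}[\boldsymbol{\Psi}^{-1}(\mathbf{X}-\mathbf{M})^T\boldsymbol{\Sigma}^{-1}(\mathbf{X}-\mathbf{M})]$, matching Definition \ref{def:tmvg_dist} exactly.

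Since both the exponent and the normalizer agree, the two densities coincide pointwise; combining with the fact that $vec(\cdot)$ is a measure-preserving bijection (unit Jacobian) yields the claimed equivalence in both directions. There is no real obstacle here: the result is essentially bookkeeping with Kronecker-product identities. The only minor subtlety to flag in the write-up is the need to use symmetry of $\boldsymbol{\Psi}$ (and of $\boldsymbol{\Sigma}$) as covariance matrices when invoking identity (i), so that $(\boldsymbol{\Psi}^{-1})^T$ may be replaced by $\boldsymbol{\Psi}^{-1}$; everything else is a direct application of standard linear algebra.
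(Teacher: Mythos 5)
Your proof is correct: the normalizing constants match via $\left|\boldsymbol{\Psi}\otimes\boldsymbol{\Sigma}\right|=\left|\boldsymbol{\Psi}\right|^{m}\left|\boldsymbol{\Sigma}\right|^{n}$, the exponents match via $vec(\mathbf{A}\mathbf{Y}\mathbf{B})=(\mathbf{B}^{T}\otimes\mathbf{A})vec(\mathbf{Y})$ and $vec(\mathbf{A})^{T}vec(\mathbf{B})=\mathrm{tr}(\mathbf{A}^{T}\mathbf{B})$ together with symmetry of the covariances and cyclicity of the trace, and the unit-Jacobian observation for $vec(\cdot)$ legitimately transfers the pointwise equality of densities into the two-way distributional equivalence. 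Note that the paper itself offers no proof of this lemma \textendash{} it is simply quoted from the matrix-variate Gaussian literature (the references cited alongside the lemma) \textendash{} and your density-matching argument is exactly the standard derivation given in those sources, so there is nothing to reconcile; your write-up is a self-contained justification of a fact the paper takes as known.
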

There are many available packages that implement the $\mathcal{N}_{mn}(vec(\mathbf{M}),\boldsymbol{\Psi}\otimes\boldsymbol{\Sigma})$
samplers \cite{RefWorks:227,RefWorks:228,RefWorks:229}, so this relationship
allows us to use them to build a sampler for $\mathcal{MVG}_{m,n}(\mathbf{0},\boldsymbol{\Sigma},\boldsymbol{\Psi})$.
To do so, we take the following steps: 
\begin{enumerate}
\item Convert the desired $\mathcal{MVG}_{m,n}(\mathbf{0},\boldsymbol{\Sigma},\boldsymbol{\Psi})$
into its equivalent $\mathcal{N}_{mn}(vec(\mathbf{M}),\boldsymbol{\Psi}\otimes\boldsymbol{\Sigma})$. 
\item Draw a sample from $\mathcal{N}_{mn}(vec(\mathbf{M}),\boldsymbol{\Psi}\otimes\boldsymbol{\Sigma})$. 
\item Convert the vectorized sample into its matrix form. 
\end{enumerate}
The computational complexity of this sampling method depends on the
multivariate Gaussian sampler used. Plus, the Kronecker product has
an extra complexity of $\mathcal{O}(m^{2}n^{2})$ \cite{RefWorks:451}.

\subsubsection{Sampling via the Affine Transformation of the i.i.d. Normal Noise}

\label{subsec:affine_tx} The second method to implement a sampler
for $\mathcal{MVG}_{m,n}(\mathbf{0},\boldsymbol{\Sigma},\boldsymbol{\Psi})$
is via the affine transformation of samples drawn i.i.d. from the
standard normal distribution, i.e. $\mathcal{N}(0,1)$. The transformation
is described by the following lemma \cite{RefWorks:279}. 
\begin{lem}
Let $\mathcal{N}\in\mathbb{R}^{m\times n}$ be a matrix-valued random
variable whose entries are drawn i.i.d. from the standard normal distribution
$\mathcal{N}(0,1)$. Then, the matrix $\mathcal{Z}=\mathbf{B}_{\boldsymbol{\Sigma}}\mathcal{N}\mathbf{B}_{\boldsymbol{\Psi}}^{T}$
is distributed according to $\mathcal{Z}\sim\mathcal{MVG}_{m,n}(\mathbf{0},\mathbf{B}_{\boldsymbol{\Sigma}}\mathbf{B}_{\boldsymbol{\Sigma}}^{T},\mathbf{B}_{\boldsymbol{\Psi}}\mathbf{B}_{\boldsymbol{\Psi}}^{T})$. 
\end{lem}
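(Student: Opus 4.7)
The plan is to prove this by leveraging Lemma \ref{lem:mvg_mn_equivalence} (the vectorization equivalence between $\mathcal{MVG}_{m,n}$ and the $mn$-dimensional multivariate Gaussian) rather than manipulating the matrix-variate pdf by brute-force change of variables. The point is that $\text{vec}(\cdot)$ converts the statement into one about a linear image of an i.i.d. standard Gaussian vector, which is a textbook fact.

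First I would vectorize the transformation. Using the standard identity $\text{vec}(\mathbf{A}\mathbf{X}\mathbf{B}) = (\mathbf{B}^{T}\otimes\mathbf{A})\,\text{vec}(\mathbf{X})$ and taking $\mathbf{A}=\mathbf{B}_{\boldsymbol{\Sigma}}$, $\mathbf{X}=\mathcal{N}$, $\mathbf{B}=\mathbf{B}_{\boldsymbol{\Psi}}^{T}$, I get
\[
\text{vec}(\mathcal{Z}) = (\mathbf{B}_{\boldsymbol{\Psi}}\otimes\mathbf{B}_{\boldsymbol{\Sigma}})\,\text{vec}(\mathcal{N}).
\]
Since the entries of $\mathcal{N}$ are i.i.d.\ $\mathcal{N}(0,1)$, I have $\text{vec}(\mathcal{N})\sim\mathcal{N}_{mn}(\mathbf{0},\mathbf{I}_{mn})$. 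A linear map of a Gaussian is Gaussian, so $\text{vec}(\mathcal{Z})\sim\mathcal{N}_{mn}(\mathbf{0},\mathbf{C}\mathbf{C}^{T})$ with $\mathbf{C}=\mathbf{B}_{\boldsymbol{\Psi}}\otimes\mathbf{B}_{\boldsymbol{\Sigma}}$.

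Next I would simplify the covariance using the two Kronecker identities $(\mathbf{A}_{1}\otimes\mathbf{B}_{1})(\mathbf{A}_{2}\otimes\mathbf{B}_{2})=(\mathbf{A}_{1}\mathbf{A}_{2})\otimes(\mathbf{B}_{1}\mathbf{B}_{2})$ and $(\mathbf{A}\otimes\mathbf{B})^{T}=\mathbf{A}^{T}\otimes\mathbf{B}^{T}$, which together yield
\[
\mathbf{C}\mathbf{C}^{T} = (\mathbf{B}_{\boldsymbol{\Psi}}\mathbf{B}_{\boldsymbol{\Psi}}^{T})\otimes(\mathbf{B}_{\boldsymbol{\Sigma}}\mathbf{B}_{\boldsymbol{\Sigma}}^{T}).
\]
Thus $\text{vec}(\mathcal{Z})\sim\mathcal{N}_{mn}\bigl(\mathbf{0},\,(\mathbf{B}_{\boldsymbol{\Psi}}\mathbf{B}_{\boldsymbol{\Psi}}^{T})\otimes(\mathbf{B}_{\boldsymbol{\Sigma}}\mathbf{B}_{\boldsymbol{\Sigma}}^{T})\bigr)$. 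Invoking Lemma \ref{lem:mvg_mn_equivalence} in the reverse direction (with row-wise covariance $\mathbf{B}_{\boldsymbol{\Sigma}}\mathbf{B}_{\boldsymbol{\Sigma}}^{T}$ and column-wise covariance $\mathbf{B}_{\boldsymbol{\Psi}}\mathbf{B}_{\boldsymbol{\Psi}}^{T}$) immediately yields $\mathcal{Z}\sim\mathcal{MVG}_{m,n}(\mathbf{0},\mathbf{B}_{\boldsymbol{\Sigma}}\mathbf{B}_{\boldsymbol{\Sigma}}^{T},\mathbf{B}_{\boldsymbol{\Psi}}\mathbf{B}_{\boldsymbol{\Psi}}^{T})$, completing the proof.

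There is no serious obstacle here; the proof is essentially a one-liner once the Kronecker--vec machinery is in place. The only things to be careful about are (i) keeping the order of factors in $\mathbf{B}_{\boldsymbol{\Psi}}\otimes\mathbf{B}_{\boldsymbol{\Sigma}}$ consistent with the convention used for $\boldsymbol{\Psi}\otimes\boldsymbol{\Sigma}$ in Lemma \ref{lem:mvg_mn_equivalence}, and (ii) tacitly restricting to $\mathbf{B}_{\boldsymbol{\Sigma}},\mathbf{B}_{\boldsymbol{\Psi}}$ for which $\mathbf{B}_{\boldsymbol{\Sigma}}\mathbf{B}_{\boldsymbol{\Sigma}}^{T}$ and $\mathbf{B}_{\boldsymbol{\Psi}}\mathbf{B}_{\boldsymbol{\Psi}}^{T}$ are positive definite so that the $\mathcal{MVG}$ density in Definition \ref{def:tmvg_dist} is well defined; if either factor is only positive semi-definite one simply interprets the resulting $\mathcal{MVG}$ as a degenerate Gaussian, and the same vectorization argument still goes through via characteristic functions. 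An alternative would be a direct Jacobian computation using $|\mathbf{B}_{\boldsymbol{\Psi}}\otimes\mathbf{B}_{\boldsymbol{\Sigma}}|=|\mathbf{B}_{\boldsymbol{\Psi}}|^{m}|\mathbf{B}_{\boldsymbol{\Sigma}}|^{n}$ and substitution into the pdf of Definition \ref{def:tmvg_dist}, but the route through Lemma \ref{lem:mvg_mn_equivalence} is considerably cleaner and reuses infrastructure already in the paper.
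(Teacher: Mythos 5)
Your proof is correct. Note that the paper itself never proves this lemma: it is imported from the matrix-variate Gaussian literature (the citation attached to the lemma statement) and is used twice on faith \textendash{} once to build the affine-transformation sampler and once, inside the proof of Theorem \ref{thm:design_general}, to write $\mathcal{Z}=\mathbf{W}_{\boldsymbol{\Sigma}}\boldsymbol{\Lambda}_{\boldsymbol{\Sigma}}^{1/2}\mathcal{N}\boldsymbol{\Lambda}_{\boldsymbol{\Psi}}^{1/2}\mathbf{W}_{\boldsymbol{\Psi}}^{T}$. So there is no in-paper argument to compare against, and your route is exactly the natural one given the infrastructure the paper does state: $\mathrm{vec}(\mathbf{A}\mathbf{X}\mathbf{B})=(\mathbf{B}^{T}\otimes\mathbf{A})\,\mathrm{vec}(\mathbf{X})$ gives $\mathrm{vec}(\mathcal{Z})=(\mathbf{B}_{\boldsymbol{\Psi}}\otimes\mathbf{B}_{\boldsymbol{\Sigma}})\,\mathrm{vec}(\mathcal{N})$, the mixed-product and transpose rules for Kronecker products give the covariance $(\mathbf{B}_{\boldsymbol{\Psi}}\mathbf{B}_{\boldsymbol{\Psi}}^{T})\otimes(\mathbf{B}_{\boldsymbol{\Sigma}}\mathbf{B}_{\boldsymbol{\Sigma}}^{T})$, and Lemma \ref{lem:mvg_mn_equivalence} (whose $\boldsymbol{\Psi}\otimes\boldsymbol{\Sigma}$ ordering you matched correctly) converts this back to the matrix-variate form. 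Your two caveats are also the right ones to flag: the density in Definition \ref{def:tmvg_dist} requires $\mathbf{B}_{\boldsymbol{\Sigma}}\mathbf{B}_{\boldsymbol{\Sigma}}^{T}$ and $\mathbf{B}_{\boldsymbol{\Psi}}\mathbf{B}_{\boldsymbol{\Psi}}^{T}$ to be nonsingular, which holds in the paper's use case since $\boldsymbol{\Sigma},\boldsymbol{\Psi}$ are positive definite covariances and $\mathbf{B}_{\boldsymbol{\Sigma}},\mathbf{B}_{\boldsymbol{\Psi}}$ come from their SVD/Cholesky factors; and the alternative direct Jacobian computation with $\left|\mathbf{B}_{\boldsymbol{\Psi}}\otimes\mathbf{B}_{\boldsymbol{\Sigma}}\right|=\left|\mathbf{B}_{\boldsymbol{\Psi}}\right|^{m}\left|\mathbf{B}_{\boldsymbol{\Sigma}}\right|^{n}$ would work but is strictly more labor for the same conclusion.
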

This transformation consequently allows the conversion between $mn$
samples drawn i.i.d. from $\mathcal{N}(0,1)$ and a sample drawn from
$\mathcal{MVG}_{m,n}(\mathbf{0},\mathbf{B}_{\boldsymbol{\Sigma}}\mathbf{B}_{\boldsymbol{\Sigma}}^{T},\mathbf{B}_{\boldsymbol{\Psi}}\mathbf{B}_{\boldsymbol{\Psi}}^{T})$.
To derive $\mathbf{B}_{\boldsymbol{\Sigma}}$ and $\mathbf{B}_{\boldsymbol{\Psi}}$
from given $\boldsymbol{\Sigma}$ and $\boldsymbol{\Psi}$ for $\mathcal{MVG}_{m,n}(\mathbf{0},\boldsymbol{\Sigma},\boldsymbol{\Psi})$,
we solve the two linear equations: $\mathbf{B}_{\boldsymbol{\Sigma}}\mathbf{B}_{\boldsymbol{\Sigma}}^{T}=\boldsymbol{\Sigma}$,
and $\mathbf{B}_{\boldsymbol{\Psi}}\mathbf{B}_{\boldsymbol{\Psi}}^{T}=\boldsymbol{\Psi}$,
and the solutions of these two equations can be acquired readily via
the Cholesky decomposition or SVD (cf. \cite{RefWorks:208}). We summarize
the steps for this implementation here using SVD: 
\begin{enumerate}
\item Draw $mn$ samples from $\mathcal{N}(0,1)$, and form a matrix $\mathcal{N}$. 
\item Let $\mathbf{B}_{\boldsymbol{\Sigma}}=\mathbf{W}_{\boldsymbol{\Sigma}}\boldsymbol{\Lambda}_{\boldsymbol{\Sigma}}^{1/2}$
and $\mathbf{B}_{\boldsymbol{\Psi}}=\mathbf{W}_{\boldsymbol{\Psi}}\boldsymbol{\Lambda}_{\boldsymbol{\Psi}}^{1/2}$,
where $\mathbf{W}_{\boldsymbol{\Sigma}},\boldsymbol{\Lambda}_{\boldsymbol{\Sigma}}$
and $\mathbf{W}_{\boldsymbol{\Psi}},\boldsymbol{\Lambda}_{\boldsymbol{\Psi}}$
are derived from SVD of $\boldsymbol{\Sigma}$ and $\boldsymbol{\Psi}$,
respectively. 
\item Compute the sample $\mathcal{Z}=\mathbf{B}_{\boldsymbol{\Sigma}}\mathcal{N}\mathbf{B}_{\boldsymbol{\Psi}}^{T}$. 
\end{enumerate}
The complexity of this method depends on that of the $\mathcal{N}(0,1)$
sampler used. Plus, there is an additional $\mathcal{O}(\max\{m^{3},n^{3}\})$
complexity from SVD \cite{RefWorks:451}.

\subsection{Complexity Analysis}

Comparing the complexity of the two sampling methods presented in
Section \ref{subsec:Sampling-from-mvg}, it is apparent that the choice
between the two methods depends primarily on the values of $m$ and
$n$. From this, we can show that Algorithm \ref{alg:mvg_design_unimodal}
is $\mathcal{O}(\min\{\max\{m^{3},n^{3}\},(mn)^{2}\}\})$, while Algorithm
\ref{alg:mvg_design_equimodal} is $\mathcal{O}(m^{3})$ as follows.

For Algorithm \ref{alg:mvg_design_unimodal}, the two most expensive
steps are the matrix multiplication in step 5 and the sampling in
step 6. The multiplication in step 5 is $\mathcal{O}(m^{3})$, while
the sampling in step 6 is $\mathcal{O}(\min\{\max\{m^{3},n^{3}\},(mn)^{2}\}\})$
if we pick the faster sampling methods among the two presented in
Section \ref{subsec:Sampling-from-mvg}. Hence, the complexity of
Algorithm \ref{alg:mvg_design_unimodal} is $\mathcal{O}(\min\{\max\{m^{3},n^{3}\},(mn)^{2}\}\})$.

For Algorithm \ref{alg:mvg_design_equimodal}, the two most expensive
steps are similarly the matrix multiplication in step 5 and the sampling
in step 6. The multiplication in step 5 is $\mathcal{O}(m^{3})$,
while the sampling in step 6 is $\mathcal{O}(m^{3})$ if we use the
affine transformation method (cf. Section \ref{subsec:Sampling-from-mvg}).
Hence, the complexity of Algorithm \ref{alg:mvg_design_equimodal}
is $\mathcal{O}(m^{3})$.

\section{The Design of the Precision Allocation Strategy}

Algorithm \ref{alg:mvg_design_unimodal} and Algorithm \ref{alg:mvg_design_equimodal}
take as an input the precision allocation strategy $\boldsymbol{\theta}\in(0,1)^{m}$:$\bigl|\boldsymbol{\theta}\bigr|_{1}=1$.
As discussed in Section \ref{subsec:Unimodal-Directional-Noise},
elements of $\boldsymbol{\theta}$ are chosen to emphasize how informative
or useful each direction is. The design of $\boldsymbol{\theta}$
to optimize the utility gain via the directional noise is an interesting
topic for future research. For example, one strategy we use in our
experiments is the \emph{binary allocation} strategy, i.e. give most
precision budget to the useful directions in equal amount, and give
the rest of the budget to the other directions in equal amount. This
strategy follows from the intuition that our prior knowledge only
tells us whether the directions are informative or not, but we \emph{do
not know} the granularity of the level of usefulness of these directions.

\subsection{Power-to-Noise Ratio (PNR)}

However, in other situations when we have more granular knowledge
about the directions, the problem of designing the best strategy can
be more challenging. Here, we propose a method to optimize the utility
with the MVG mechanism and directional noise based on maximizing the
\emph{power-to-noise ratio (PNR)} \cite{RefWorks:251,RefWorks:350,RefWorks:434}.
The formulation treats the matrix-valued query function output as
a random variable. Let us denote this matrix-valued random variable
of the query function as $\mathcal{S}\in\mathbb{R}^{m\times n}$.
Then, the output of the MVG mechanism, denoted by $\mathcal{Y}\in\mathbb{R}^{m\times n}$,
can be written as $\mathcal{Y}=\mathcal{S}+\mathcal{Z}$, where $\mathcal{Z}\sim\mathcal{MVG}_{m,n}(\mathbf{0},\boldsymbol{\Sigma},\boldsymbol{\Psi})$.
From this description, the PNR can be defined in statistical sense
based on the covariance of each random variable as (cf. \cite{RefWorks:251,RefWorks:350,RefWorks:434}),
\begin{equation}
PNR=\frac{signal+noise}{noise}=\left|\frac{cov(\mathcal{S})+cov(\mathcal{Z})}{cov(\mathcal{Z})}\right|,\label{eq:pnr_def}
\end{equation}
where the $cov(\cdot)$ operator indicates the covariance of the random
variable, and $\left|\cdot\right|$ is the matrix determinant. We
note that the determinant operation is necessary here since the covariance
of each random variable is a matrix, but the PNR is generally interpret
as a scalar value. We may use either the trace or determinant operator
for this purpose, but for mathematical simplicity which will be clear
later, we adopt the determinant here.

\subsection{Maximizing PNR under ($\epsilon,\delta$)-Differential Privacy Constraint}

\label{subsec:maximizing_PNR}

With the definition of PNR in Eq. \eqref{eq:pnr_def}, we need to
make a connection to the sufficient conditions in Theorem \ref{thm:design_general}
and Theorem \ref{thm:mvg_for_psd} to ensure that the MVG mechanism
guarantees ($\epsilon,\delta$)-differential privacy. For brevity,
throughout the subsequent discussion and unless otherwise stated,
we assume that we implement the MVG mechanism according to Theorem
\ref{thm:design_general}. This presents no limitation to our approach
since the same technique can be readily applied to Theorem \ref{thm:mvg_for_psd}
simply by changing the ($\epsilon,\delta$)-differential privacy constraint
based on its sufficient condition.

First, we can utilize the equivalence in Lemma \ref{lem:mvg_mn_equivalence}
and write the noise term as $cov(\mathcal{Z})=\boldsymbol{\Psi}\otimes\boldsymbol{\Sigma}\in\mathbb{R}^{mn\times mn}$.
Then, we formulate the constrained optimization problem of maximizing
PNR with ($\epsilon,\delta$)-differential privacy constraint as follows.
\begin{problem}
\label{prob:pnr_constrained_opt} Given a matrix-valued query function
with the covariance $cov(\mathcal{S})=\mathbf{K}_{f}\in\mathbb{R}^{mn\times mn}$,
find $\boldsymbol{\Sigma}\in\mathbb{R}^{m\times m}$ and $\boldsymbol{\Psi}\in\mathbb{R}^{n\times n}$
that optimize
\begin{align*}
\max_{\boldsymbol{\Sigma},\boldsymbol{\Psi}}\:\frac{|{\bf K}_{f}+\boldsymbol{\Psi}\otimes\boldsymbol{\Sigma}|}{|\boldsymbol{\Psi}\otimes\boldsymbol{\Sigma}|}\\
\mathrm{s.t.}\ \left\Vert \boldsymbol{\sigma}(\boldsymbol{\Sigma}^{-1})\right\Vert _{2}\left\Vert \boldsymbol{\sigma}(\boldsymbol{\Psi}^{-1})\right\Vert _{2}\leq & \frac{(-\beta+\sqrt{\beta^{2}+8\alpha\epsilon})^{2}}{4\alpha^{2}}.
\end{align*}
\end{problem}
This problem is difficult to solve, but the following relaxation allows
the problem to be solved analytically. Hence, we consider the following
relaxed problem.
\begin{problem}
\label{prob:relaxed_pnr_constrained_opt} Given the matrix-valued
query function with the covariance $cov(\mathcal{S})=\mathbf{K}_{f}\in\mathbb{R}^{mn\times mn}$,
find $\boldsymbol{\Sigma}\in\mathbb{R}^{m\times m}$ and $\boldsymbol{\Psi}\in\mathbb{R}^{n\times n}$
that optimize
\begin{align*}
\max_{\boldsymbol{\Sigma},\boldsymbol{\Psi}}\:\frac{|{\bf K}_{f}+\boldsymbol{\Psi}\otimes\boldsymbol{\Sigma}|}{|\boldsymbol{\Psi}\otimes\boldsymbol{\Sigma}|}\\
\mathrm{s.t.}\ \left\Vert \boldsymbol{\sigma}(\boldsymbol{\Sigma}^{-1})\right\Vert _{1}\left\Vert \boldsymbol{\sigma}(\boldsymbol{\Psi}^{-1})\right\Vert _{1}= & \frac{(-\beta+\sqrt{\beta^{2}+8\alpha\epsilon})^{2}}{4\alpha^{2}}.
\end{align*}
\end{problem}
The fact that the relaxation still satisfies the same differential
privacy as Problem \ref{prob:pnr_constrained_opt} can be readily
verified by the the norm inequality $\|\mathbf{x}\|_{2}\le\|\mathbf{x}\|_{1}$
\cite{RefWorks:208}. 

Before we delve into its solution, we first clarify the connection
between this optimization and the precision allocation strategy. Recall
from Section \ref{sec:Directional-Noise} that the covariances $\boldsymbol{\Sigma}$
and $\boldsymbol{\Psi}$ can be decomposed via the SVD into the directions
and variances. The inverses of the variances then correspond to the
singular values $\boldsymbol{\sigma}(\boldsymbol{\Sigma}^{-1})$ and
$\boldsymbol{\sigma}(\boldsymbol{\Psi}^{-1})$. This, hence, relates
the objective function in Problem \ref{prob:pnr_constrained_opt}
to its inequality constraint. Recall further that, for a given set
of directions of the noise, the precision allocation strategy $\boldsymbol{\theta}$
indicates the importance of each direction. More precisely, high value
of $\theta(i)$ for the $i^{th}$ direction means that the variance
of the noise in that direction would be small, i.e. large $\sigma_{i}(\boldsymbol{\Sigma}^{-1})$
or $\sigma_{i}(\boldsymbol{\Psi}^{-1})$. In other words, for a given
set of noise directions, the design of $\boldsymbol{\theta}$ is equivalently
done via the design of $\boldsymbol{\sigma}(\boldsymbol{\Sigma}^{-1})$
and $\boldsymbol{\sigma}(\boldsymbol{\Psi}^{-1})$. Since the optimization
in Problem \ref{prob:pnr_constrained_opt} or Problem \ref{prob:relaxed_pnr_constrained_opt}
directly concerns with the design of $\boldsymbol{\sigma}(\boldsymbol{\Sigma}^{-1})$
and $\boldsymbol{\sigma}(\boldsymbol{\Psi}^{-1})$, this completes
the connection between the optimization in Problem \ref{prob:pnr_constrained_opt}
or Problem \ref{prob:relaxed_pnr_constrained_opt} and the design
of the precision allocation strategy $\boldsymbol{\theta}$.

Next, we seek for a solution to the optimization in Problem \ref{prob:relaxed_pnr_constrained_opt}
in order to guide us to the optimal design of the precision allocation
strategy. The following theorem summarizes the solution.
\begin{thm}
\label{thm:pnr_max_sol} Consider the optimization in Problem \ref{prob:relaxed_pnr_constrained_opt}.
Let the SVD of ${\bf K}_{f}$ be given by ${\bf K}_{f}={\bf Q}{\bf \Lambda}_{f}{\bf Q}^{T}\in\mathbb{R}^{mn\times mn}$,
and let $\mathbf{\Lambda}_{\mathcal{Z}}\in\mathbb{R}^{mn\times mn}$
be a a diagonal matrix whose diagonal entries are positive and are
given by, 
\begin{equation}
[\mathbf{\Lambda}_{\mathcal{Z}}]_{ii}^{-1}=c-[{\bf \Lambda}_{f}]_{ii}^{-1},\label{eq:max_sigma_design}
\end{equation}
where $[\cdot]_{ii}$ indicates the $[i^{th},i^{th}]$-element of
the matrix, and the scalar $c$ is given by
\[
c=\frac{(-\beta+\sqrt{\beta^{2}+8\alpha\epsilon})^{2}+4\alpha^{2}\mathrm{tr}(\boldsymbol{\Lambda}_{f}^{-1})}{4mn\alpha^{2}}.
\]
Then, the solution to the optimization in Problem \ref{prob:relaxed_pnr_constrained_opt}
is given by,
\begin{equation}
\boldsymbol{\Psi}\otimes\boldsymbol{\Sigma}={\bf Q}^{T}\mathbf{\Lambda}_{\mathcal{Z}}{\bf Q}.\label{eq:opt_solution}
\end{equation}
\end{thm}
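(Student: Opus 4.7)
The plan is to reduce the problem to an optimization over a single PSD matrix $\mathbf{N} = \boldsymbol{\Psi} \otimes \boldsymbol{\Sigma}$, diagonalize with respect to the eigenbasis of $\mathbf{K}_f$, and then solve a convex Lagrangian whose KKT conditions happen to force the optimizer to be diagonal in that basis.

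First I would recast Problem \ref{prob:relaxed_pnr_constrained_opt} in terms of $\mathbf{N} = \boldsymbol{\Psi} \otimes \boldsymbol{\Sigma} \in \mathbb{R}^{mn\times mn}$. Using the identities $|\boldsymbol{\Psi}\otimes\boldsymbol{\Sigma}| = |\boldsymbol{\Sigma}|^n |\boldsymbol{\Psi}|^m$ and $\mathrm{tr}(\boldsymbol{\Psi}^{-1}\otimes\boldsymbol{\Sigma}^{-1}) = \mathrm{tr}(\boldsymbol{\Psi}^{-1})\mathrm{tr}(\boldsymbol{\Sigma}^{-1}) = \|\boldsymbol{\sigma}(\boldsymbol{\Sigma}^{-1})\|_1 \|\boldsymbol{\sigma}(\boldsymbol{\Psi}^{-1})\|_1$, the constraint becomes $\mathrm{tr}(\mathbf{N}^{-1}) = c$ with $c = (-\beta + \sqrt{\beta^{2}+8\alpha\epsilon})^{2}/(4\alpha^{2})$. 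Next, using the SVD $\mathbf{K}_f = \mathbf{Q}\boldsymbol{\Lambda}_f\mathbf{Q}^T$ and the change of variables $\mathbf{M} = \mathbf{Q}^T\mathbf{N}\mathbf{Q}$, the objective rewrites as $|\boldsymbol{\Lambda}_f + \mathbf{M}|/|\mathbf{M}|$ while the trace constraint is preserved, $\mathrm{tr}(\mathbf{M}^{-1})=c$, by unitary invariance of the trace.

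Then I would solve this reduced problem by forming the Lagrangian
\[
L(\mathbf{M},\mu) = \log|\boldsymbol{\Lambda}_f+\mathbf{M}| - \log|\mathbf{M}| - \mu\bigl(\mathrm{tr}(\mathbf{M}^{-1}) - c\bigr),
\]
and setting the matrix derivative to zero:
\[
(\boldsymbol{\Lambda}_f + \mathbf{M})^{-1} - \mathbf{M}^{-1} + \mu\mathbf{M}^{-2} = \mathbf{0}.
\]
Pre- and post-multiplying by $\mathbf{M}$ and then simplifying algebraically, I would reduce this to $\mathbf{M}^{-1} = (1/\mu)\mathbf{I} - \boldsymbol{\Lambda}_f^{-1}$. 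The crucial observation is that the right-hand side is diagonal, so the KKT conditions \emph{force} $\mathbf{M}^{-1}$ (and hence $\mathbf{M}$) to be diagonal in the eigenbasis of $\mathbf{K}_f$; no separate argument is needed to justify restricting to diagonal matrices. Setting $c' = 1/\mu$, this reads $[\mathbf{M}^{-1}]_{ii} = c' - [\boldsymbol{\Lambda}_f]_{ii}^{-1}$, which matches Eq.~\eqref{eq:max_sigma_design} once we rename $\mathbf{M} \to \boldsymbol{\Lambda}_{\mathcal{Z}}$.

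To close the proof, I would determine $c'$ from the equality constraint: summing $[\mathbf{M}^{-1}]_{ii} = c' - [\boldsymbol{\Lambda}_f]_{ii}^{-1}$ over $i=1,\dots,mn$ yields $mn\cdot c' - \mathrm{tr}(\boldsymbol{\Lambda}_f^{-1}) = c$, giving precisely the value of $c$ stated in the theorem. Finally, reversing the change of variables, $\boldsymbol{\Psi}\otimes\boldsymbol{\Sigma} = \mathbf{N} = \mathbf{Q}\boldsymbol{\Lambda}_{\mathcal{Z}}\mathbf{Q}^T$, which recovers Eq.~\eqref{eq:opt_solution} (up to the transpose convention in the theorem statement). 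The main obstacle I expect is the algebraic manipulation that collapses the matrix-valued stationarity condition into the diagonal expression $\mathbf{M}^{-1} = (1/\mu)\mathbf{I} - \boldsymbol{\Lambda}_f^{-1}$; a secondary concern is that the relaxed problem is taken over all PSD $\mathbf{N}$, and one should verify that the optimizer $\mathbf{Q}\boldsymbol{\Lambda}_{\mathcal{Z}}\mathbf{Q}^T$ can be realized (or approximated) by a genuine Kronecker factorization $\boldsymbol{\Psi}\otimes\boldsymbol{\Sigma}$, or else interpret the result as describing the joint covariance $cov(\mathcal{Z})$ directly.
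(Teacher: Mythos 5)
Your proposal is correct in substance but takes a genuinely different route from the paper's. The paper rewrites the objective as $|\mathbf{K}_f|\,\bigl|\mathbf{Q}(\boldsymbol{\Psi}\otimes\boldsymbol{\Sigma})^{-1}\mathbf{Q}^T+\boldsymbol{\Lambda}_f^{-1}\bigr|$, converts the constraint to a trace constraint exactly as you do, then invokes Hadamard's inequality (Lemma \ref{lem:hadamard_ineq}) to bound the determinant by the product of diagonal entries --- with equality iff the matrix is diagonal in the eigenbasis of $\mathbf{K}_f$ --- and finally cites the classical water-filling solution for the resulting scalar problem. You instead work with the single variable $\mathbf{M}=\mathbf{Q}^T\mathbf{N}\mathbf{Q}$ and extract diagonality from the Lagrangian stationarity condition; your key algebraic step is right: left-multiplying $(\boldsymbol{\Lambda}_f+\mathbf{M})^{-1}-\mathbf{M}^{-1}+\mu\mathbf{M}^{-2}=\mathbf{0}$ by $\boldsymbol{\Lambda}_f+\mathbf{M}$, then by $\boldsymbol{\Lambda}_f^{-1}$, and right-multiplying by $\mathbf{M}$ does yield $\mathbf{M}^{-1}=(1/\mu)\mathbf{I}-\boldsymbol{\Lambda}_f^{-1}$, and the trace constraint then gives exactly the water level $c$ of the theorem. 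What your route needs to be fully rigorous is a certificate that this stationary point is the global maximum: neither the objective nor the set $\{\mathrm{tr}(\mathbf{M}^{-1})=d\}$ is convex in $\mathbf{M}$, so first-order conditions alone do not suffice; the standard fix is to substitute $\mathbf{P}=\mathbf{M}^{-1}$, note the objective becomes $\log\bigl|\mathbf{I}+\boldsymbol{\Lambda}_f^{1/2}\mathbf{P}\boldsymbol{\Lambda}_f^{1/2}\bigr|$, which is concave in $\mathbf{P}$ over the affine slice $\mathrm{tr}(\mathbf{P})=d$, $\mathbf{P}\succ 0$, making the KKT point globally optimal --- this plays the role of the paper's Hadamard bound. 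Your interior (no positive-part) solution is consistent with the theorem's standing assumption that all $[\mathbf{\Lambda}_{\mathcal{Z}}]_{ii}$ are positive, which is also how the paper silently drops the $(\cdot)^+$ from its water-filling formula. Finally, your two flagged concerns cost you nothing relative to the paper: the question of whether the optimizer is realizable as a genuine Kronecker product $\boldsymbol{\Psi}\otimes\boldsymbol{\Sigma}$ is left unaddressed in the paper's proof as well, and the transpose discrepancy ($\mathbf{Q}\mathbf{\Lambda}_{\mathcal{Z}}\mathbf{Q}^T$ versus $\mathbf{Q}^T\mathbf{\Lambda}_{\mathcal{Z}}\mathbf{Q}$) traces to the paper's own inconsistent use of the SVD of $\mathbf{K}_f$ and is immaterial since $\mathbf{Q}$ is orthogonal.
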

\begin{proof}
We observe that the optimization in Problem \ref{prob:relaxed_pnr_constrained_opt}
is in the form close to the \emph{water filling problem }in communication
systems design \cite[chapter 9]{RefWorks:344}. Hence, the goal is
to modify our optimization problem into the form solvable by the water
filling algorithm. We seek to find the optimal solution to the following
optimization problem. 
\[
\max_{\left\Vert \boldsymbol{\sigma}(\boldsymbol{\Sigma}^{-1})\right\Vert _{1}\left\Vert \boldsymbol{\sigma}(\boldsymbol{\Psi}^{-1})\right\Vert _{1}=d}\frac{|{\bf K}_{f}+\boldsymbol{\Psi}\otimes\boldsymbol{\Sigma}|}{|\boldsymbol{\Psi}\otimes\boldsymbol{\Sigma}|},
\]
where $d=\frac{(-\beta+\sqrt{\beta^{2}+8\alpha\epsilon})^{2}}{4\alpha^{2}}$.
By using multiplicative property of the determinant and the SVD of
${\bf K}_{f}={\bf Q}{\bf \Lambda}_{f}{\bf Q}^{T}$ we have that, 
\begin{align*}
\frac{\left|{\bf K}_{f}+\boldsymbol{\Psi}\otimes\boldsymbol{\Sigma}\right|}{\left|\boldsymbol{\Psi}\otimes\boldsymbol{\Sigma}\right|} & =\frac{\left|{\bf K}_{f}+\boldsymbol{\Psi}\otimes\boldsymbol{\Sigma}\right|}{\left|\boldsymbol{\Psi}\otimes\boldsymbol{\Sigma}\right|}\\
 & =\left|{\bf K}_{f}(\boldsymbol{\Psi}\otimes\boldsymbol{\Sigma})^{-1}+{\bf I}\right|\\
 & =\left|{\bf K}_{f}\right|\left|(\boldsymbol{\Psi}\otimes\boldsymbol{\Sigma})^{-1}+{\bf K}_{f}^{-1}\right|\\
 & =\left|{\bf K}_{f}\right|\left|(\boldsymbol{\Psi}\otimes\boldsymbol{\Sigma})^{-1}+{\bf Q}^{T}{\bf \Lambda}_{f}^{-1}{\bf Q}\right|\\
 & =\left|{\bf K}_{f}\right|\left|{\bf Q}^{T}\right|\left|{\bf Q}(\boldsymbol{\Psi}\otimes\boldsymbol{\Sigma})^{-1}{\bf Q}^{T}+{\bf Q}^{T}{\bf \Lambda}_{f}^{-1}\right|\left|{\bf Q}\right|\\
 & =\left|{\bf K}_{f}\right|\left|{\bf Q}(\boldsymbol{\Psi}\otimes\boldsymbol{\Sigma})^{-1}{\bf Q}^{T}+{\bf \Lambda}_{f}^{-1}\right|.
\end{align*}

Therefore, find the optimizer of Problem \ref{prob:relaxed_pnr_constrained_opt}
is equivalent to finding the optimizer of 
\begin{align}
 & \max_{\left\Vert \boldsymbol{\sigma}(\boldsymbol{\Psi}^{-1})\right\Vert _{1}\left\Vert \boldsymbol{\sigma}(\boldsymbol{\Sigma}^{-1})\right\Vert _{1}=d}\left|{\bf Q}(\boldsymbol{\Psi}\otimes\boldsymbol{\Sigma})^{-1}{\bf Q}^{T}+{\bf \Lambda}_{f}^{-1}\right|\nonumber \\
\stackrel{(a)}{=} & \max_{\boldsymbol{\Sigma},\boldsymbol{\Psi}:{\rm tr}\left(\boldsymbol{\Psi}^{-1}\otimes\boldsymbol{\Sigma}^{-1}\right)=d}\left|{\bf Q}(\boldsymbol{\Psi}\otimes\boldsymbol{\Sigma})^{-1}{\bf Q}^{T}+{\bf \Lambda}_{f}^{-1}\right|\nonumber \\
\stackrel{(b)}{=} & \max_{{\rm tr}\left({\bf Q}(\boldsymbol{\Psi}\otimes\boldsymbol{\Sigma})^{-1}{\bf Q}^{T}\right)=d}\left|{\bf Q}(\boldsymbol{\Psi}\otimes\boldsymbol{\Sigma})^{-1}{\bf Q}^{T}+{\bf \Lambda}_{f}^{-1}\right|\nonumber \\
\stackrel{(c)}{=} & \max_{\mathbf{\Lambda}_{\mathcal{Z}}:{\rm tr}\left(\mathbf{\Lambda}_{\mathcal{Z}}^{-1}\right)=d}\left|\mathbf{\Lambda}_{\mathcal{Z}}^{-1}+{\bf \Lambda}_{f}^{-1}\right|\nonumber \\
\stackrel{(d)}{\le} & \max_{\mathbf{\Lambda}_{\mathcal{Z}}:{\rm tr}\left(\mathbf{\Lambda}_{\mathcal{Z}}^{-1}\right)=d}\prod_{i=1}^{mn}\left([\mathbf{\Lambda}_{\mathcal{Z}}]_{ii}^{-1}+[{\bf \Lambda}_{f}]_{ii}^{-1}\right),\label{eq:final_opt_eq}
\end{align}
where (in)-equalities follow from the following properties.

\begin{itemize}

\item Step (a) uses the property of the Kronecker product that ${\rm tr}\left(\boldsymbol{\Psi}^{-1}\otimes\boldsymbol{\Sigma}^{-1}\right)={\rm tr}\left(\boldsymbol{\Psi}^{-1}\right)\cdot{\rm tr}\left(\boldsymbol{\Sigma}^{-1}\right)$
\cite{RefWorks:466}, and then uses the property of the trace that
$\mathrm{tr}(\boldsymbol{\Sigma}^{-1})=\left\Vert \boldsymbol{\sigma}(\boldsymbol{\Sigma}^{-1})\right\Vert _{1}$
and $\mathrm{tr}(\boldsymbol{\Psi}^{-1})=\left\Vert \boldsymbol{\sigma}(\boldsymbol{\Psi}^{-1})\right\Vert _{1}$
since $\boldsymbol{\Sigma}$ and $\boldsymbol{\Psi}$ are positive
definite \cite[p. 108]{RefWorks:208}, \cite{RefWorks:454,RefWorks:455}.

\item Step (b) follows from the identity of the Kronecker product
that $(\boldsymbol{\Psi}^{-1}\otimes\boldsymbol{\Sigma}^{-1})=(\boldsymbol{\Psi}\otimes\boldsymbol{\Sigma})^{-1}$
\cite[chapter 4]{RefWorks:278}, and the unitary invariance property
of the trace, i.e. $\mathrm{tr}(\mathbf{A})=\mathrm{tr}(\mathbf{A}\mathbf{Q}^{T}\mathbf{Q})=\mathrm{tr}(\mathbf{Q}\mathbf{A}\mathbf{Q}^{T})$
since ${\bf Q}^{T}{\bf Q}={\bf I}$. 

\item Step (c) follows from the unitary transformation ${\bf Q}(\boldsymbol{\Psi}\otimes\boldsymbol{\Sigma})^{-1}{\bf Q}^{T}=\mathbf{Q}(\mathbf{Q}^{T}\mathbf{\Lambda}_{\mathcal{Z}}\mathbf{Q})^{-1}\mathbf{Q}^{T}=\mathbf{Q}(\mathbf{Q}^{T}\mathbf{\Lambda}_{\mathcal{Z}}^{-1}\mathbf{Q})\mathbf{Q}^{T}=\mathbf{\Lambda}_{\mathcal{Z}}^{-1}$. 

\item Finally, step (d) uses the Hadamard's inequality (Lemma \ref{lem:hadamard_ineq}),
which holds with equality if and only if $\mathbf{\Lambda}_{\mathcal{Z}}^{-1}$
is a diagonal matrix \cite{RefWorks:344}.

\end{itemize}

The solution to the problem in Eq. \eqref{eq:final_opt_eq} is given
by the water filling solution \cite[chapter 9]{RefWorks:344} to be,
\[
[\mathbf{\Lambda}_{\mathcal{Z}}]_{ii}^{-1}=\left(c-[{\bf \Lambda}_{f}]_{ii}^{-1}\right)^{+},
\]
where $c$ is chosen such that 
\[
\sum_{i=1}^{mn}\left(c-[{\bf \Lambda}_{f}]_{ii}^{-1}\right)^{+}=d.
\]
Then, we can solve for $c$ by some elementary algebraic modifications
as
\[
c=\frac{d+\sum_{i=1}^{mn}[\boldsymbol{\Lambda}_{f}]_{ii}^{-1}}{mn}=\frac{d+\mathrm{tr}(\boldsymbol{\Lambda}_{f}^{-1})}{mn}.
\]
Substitute in the definition of $d$ and this concludes the proof.
\end{proof}
The solution in Theorem \ref{thm:pnr_max_sol} has a very intuitive
interpretation. Consider first the decomposition of the solution $\boldsymbol{\Psi}\otimes\boldsymbol{\Sigma}={\bf Q}^{T}\mathbf{\Lambda}_{\mathcal{Z}}{\bf Q}.$
This is a decomposition of the total covariance of the noise $cov(\mathcal{Z})$
into its directions specified by $\mathbf{Q}$ and the corresponding
variances specified by $\boldsymbol{\Lambda}_{\mathcal{Z}}$. Noticeably,
the directions $\mathbf{Q}$ are directly taken from the SVD of the
covariance of the query function, i.e. ${\bf K}_{f}={\bf Q}{\bf \Lambda}_{f}{\bf Q}^{T}$,
and the directional variances $\boldsymbol{\Lambda}_{\mathcal{Z}}$
are a function of $\boldsymbol{\Lambda}_{f}$. This matches our intuition
in Section \ref{subsec:Utility-Gain-via-dir-noise} \textendash{}
given the SVD of the query function, we know which directions of the
matrix-valued query function are more informative than the others,
so we should design the directional noise of the MVG mechanism to
minimize its impact on the informative directions.

In more detail, the solution in Theorem \ref{thm:pnr_max_sol} suggests
the following procedure for designing the MVG mechanism with directional
noise to maximize the utility.
\begin{enumerate}
\item Pick the noise directions from the SVD of the matrix-valued query
function.
\item To determine the noise variance for each direction, consider the singular
values of the matrix-valued query function as follows.
\begin{enumerate}
\item Suppose for the direction $\mathbf{q}_{i}$, the singular value of
the query function \textendash{} which indicates how informative that
direction is \textendash{} is $\sigma_{i}(\mathbf{K}_{f})$. Then,
we design the noise variance in that direction, $\sigma_{i}(\mathcal{Z})$
to be proportional to the inverse of the singular value of the query
function, i.e. $\sigma_{i}(\mathcal{Z})\propto\sigma_{i}^{-1}(\mathbf{K}_{f})$.
\item Furthermore, since the overall noise variance is constrained by the
($\epsilon,\delta$)-differential privacy condition, we design the
scalar $c$ to ensure that $\boldsymbol{\Lambda}_{\mathcal{Z}}$ satisfies
the constraint.
\item Finally, the proportional $\sigma_{i}(\mathcal{Z})\propto\sigma_{i}^{-1}(\mathbf{K}_{f})$
is achieved by Eq. \eqref{eq:max_sigma_design}.
\end{enumerate}
\item Compose the overall covariance of the noise via Eq. \eqref{eq:opt_solution}.
\end{enumerate}

\section{Experimental Setups}

\label{sec:Experiments}

\begin{table}
\begin{centering}
\begin{tabular}{|>{\centering}p{3cm}|>{\centering}p{3.7cm}|>{\centering}p{3.5cm}|>{\centering}p{3.7cm}|}
\hline 
 & Exp. I  & Exp. II  & Exp. III\tabularnewline
\hline 
\hline 
Task  & Regression  & $1^{st}$ P.C.  & Covariance est.\tabularnewline
\hline 
Dataset  & Liver \cite{RefWorks:322,RefWorks:413}  & Movement \cite{RefWorks:396}  & CTG \cite{RefWorks:322,RefWorks:412}\tabularnewline
\hline 
\# samples  & 248  & 10,176  & 2,126\tabularnewline
\hline 
\# features  & 6  & 4  & 21\tabularnewline
\hline 
Query $f(\mathbf{X})$  & $\mathbf{X}$  & $\mathbf{X}\mathbf{X}^{T}/n$  & $\mathbf{X}$\tabularnewline
\hline 
Query size  & $6\times248$  & $4\times4$  & $21\times2126$\tabularnewline
\hline 
Evaluation metric  & RMSE  & $\Delta\rho$ (Eq. (\ref{eq:delta_rho}))  & RSS (Eq. (\ref{eq:rss}))\tabularnewline
\hline 
MVG algorithm & 1  & 2  & 1\tabularnewline
\hline 
Source of directions  & Domain knowledge \cite{RefWorks:395} /SVD  & Inspection on data collection setup \cite{RefWorks:396}  & Domain knowledge \cite{RefWorks:416}\tabularnewline
\hline 
\end{tabular}
\par\end{centering}
\caption{The three experimental setups. \label{tab:exp_setups}}
\end{table}

We evaluate the proposed MVG mechanism on three experimental setups
and datasets. Table \ref{tab:exp_setups} summarizes our setups. In
all experiments, 100 trials are carried out and the average and 95\%
confidence interval are reported. These experimental setups are discussed
in detail here.

\subsection{Experiment I: Regression}

\subsubsection{Task and Dataset}

The first experiment considers the regression application on the Liver
Disorders dataset \cite{RefWorks:322,RefWorks:410}. The dataset derives
5 features from the blood sample of 345 patients. We leave out the
samples from 97 patients for testing, so the private dataset contains
248 patients. We follow the suggestion of Forsyth and Rada \cite{RefWorks:413}
by using these features to predict the average daily alcohol consumption
of the patients. All features and the teacher values are centered-adjusted
and are $\in[-1,1]$.

\subsubsection{Query Function and Evaluation Metric}

We perform the regression task in a differentially-private manner
via the identity query, i.e. $f(\mathbf{X})=\mathbf{X}$. Since regression
involves the teacher values, we treat them as a feature, so the query
size becomes $6\times248$. We use the kernel ridge regression (KRR)
\cite{RefWorks:33,RefWorks:231} as the regressor, and the root-mean-square
error (RMSE) \cite{RefWorks:51,RefWorks:33} as the evaluation metric.

\subsubsection{MVG Mechanism Design}

As discussed in Section \ref{subsec:Unimodal-Directional-Noise},
Algorithm \ref{alg:mvg_design_unimodal} is appropriate for the identity
query, so we employ it for this experiment. The $L_{2}$-sensitivity
of this query can be derived as follows. The query function is $f(\mathbf{X})=\mathbf{X}\in[-1,1]^{6\times248}$.
For neighboring datasets $\{\mathbf{X},\mathbf{X}'\}$, the $L_{2}$-sensitivity
is 
\[
s_{2}(f)=\sup_{\mathbf{X},\mathbf{X}'}\left\Vert \mathbf{X}-\mathbf{X}'\right\Vert _{F}=\sup_{\mathbf{X},\mathbf{X}'}\sqrt{\sum_{i=1}^{6}(x(i)-x'(i))^{2}}=2\sqrt{6}.
\]
To design the noise directions, we employ two methods as follows.

\paragraph{Via Domain Knowledge with Binary Allocation Strategy}

For this method, to identify the informative directions to allocate
the precision budget, we refer to the domain knowledge by Alatalo
et al. \cite{RefWorks:395}, which indicates that alanine aminotransferase
(ALT) is the most indicative feature for predicting the alcohol consumption
behavior. Additionally, from our prior experience working with regression
problems, we recognize that the teacher value (Y) is another important
feature to allocate more precision budget to. With this setup, we
use the standard basis vectors as the directions (cf. Section \ref{sec:Practical-Implementation})
and employ the following \emph{binary allocation} strategy.
\begin{itemize}
\item Allocate $\tau$\% of the precision budget to the two important features
(ALT and Y) by equal amount. 
\item Allocate the rest of the precision budget equally to the rest of the
features. 
\end{itemize}
We vary $\tau\in\{55,65,\ldots,95\}$ and report the best results\footnote{Note that, in the real-world deployment, this parameter selection
process should also be made private, as suggested by Chaudhuri and
Vinterbo \cite{RefWorks:417}.}. The rationale for this strategy is from the intuition that our prior
knowledge only tells us whether the directions are highly informative
or not, but we \emph{do not know} the granularity of the level of
usefulness of these directions. Hence, this strategy give most precision
budget to the useful directions in equal amount, and give the rest
of the budget to the other directions in equal amount.

\paragraph{Via Differentially-Private SVD with Max-PNR Allocation Strategy}

\label{par:DP-SVD+Max-PNR}

To illustrate the feasibility of the directional noise when prior
knowledge may not be available, we design another implementation of
the MVG mechanism with directional noise according to the method described
in Section \ref{subsec:Dir._noise_general}. This method derives the
directions from the SVD of the query function. However, since the
SVD itself can leak sensitive information about the dataset, the algorithm
for SVD also needs to be differentially-private. This means that we
need to allocate some privacy budget into this process. In this experiment,
we employ the differentially-private SVD algorithm in \cite{RefWorks:194},
and allocate 20\% of the total $\epsilon$ and $\delta$ privacy budget
to it. We pick this fraction based on an observation on our results
from other experiments, which show that the MVG mechanism can yield
good performance even when we pick the wrong informative directions
(cf. Section \ref{subsec:effect_of_noise_dir}). Hence, it is advantageous
to allocate more budget to the MVG mechanism than to the noise direction
derivation.

In addition, to illustrate the full capability of the MVG mechanism,
we employ the optimal precision allocation strategy presented in Section
\ref{subsec:maximizing_PNR}. However, since we consider the unimodal
directional noise, we only need to design $\boldsymbol{\Sigma}$ since
we set $\boldsymbol{\Psi}=\mathbf{I}$. Consequently, the term $\boldsymbol{\Psi}\otimes\boldsymbol{\Sigma}$
in Theorem \ref{thm:pnr_max_sol} can be replaced by simply $\boldsymbol{\Sigma}$,
and we can estimate $\mathbf{K}_{f}=\mathbf{X}\mathbf{X}^{T}\in\mathbb{R}^{6\times6}$,
which is the covariance among the features. This takes into account
the fact that the covariance among the samples are not considered
in the unimodal directional noise for this task (cf. Section \ref{subsec:Unimodal-Directional-Noise}).

\subsection{Experiment II: 1$^{st}$ Principal Component (1$^{st}$ P.C.)}

\subsubsection{Task and Dataset}

The second experiment considers the problem of determining the first
principal component ($1^{st}$ P.C.) from the principal component
analysis (PCA). This is one of the most popular problem both in machine
learning and differential privacy. Note that we only consider the
first principal component here for two reasons. First, many prior
works in differentially-private PCA algorithm consider this problem
or the similar problem of deriving a few major P.C. (cf. \cite{RefWorks:178,RefWorks:313,RefWorks:249}),
so this allows us to compare our approach to the state-of-the-art
approaches of a well-studied problem. Second, in practice, this method
for deriving the $1^{st}$ P.C. can be used iteratively along with
an orthogonal projection method to derive the rest of the principal
components (cf. \cite{RefWorks:414})\footnote{The iterative algorithm as a whole has to be differentially-private
as well, so this can be an interesting topic for future research.}.

We use the Movement Prediction via RSS (Movement) dataset \cite{RefWorks:396},
which consists of the radio signal strength measurement from 4 sensor
anchors (ANC\{0-3\}) \textendash{} corresponding to the 4 features
\textendash{} from 10,176 movement samples. The features are center-adjusted
and are $\in[-100,100]$.

\subsubsection{Query Function and Evaluation Metric}

We consider the setting presented in Example \ref{exa:cov_exp}, so
the query is $f(\mathbf{X})=\frac{1}{n}\mathbf{X}\mathbf{X}^{T}$.
Then, we use SVD to derive the $1^{st}$ P.C. from it. Hence, the
query size is $4\times4$.

We adopt the common quality metric traditionally used for P.C. \cite{RefWorks:33}
and also used by Dwork et al. \cite{RefWorks:249} \textendash{} the
\emph{captured variance} $\rho$. For a given P.C. $\mathbf{v}$,
the capture variance by $\mathbf{v}$ on the covariance matrix $\bar{\mathbf{S}}$
is defined as $\rho=\mathbf{v}^{T}\bar{\mathbf{S}}\mathbf{v}$. To
be consistent with other experiments, we use the absolute error in
$\rho$ as deviated from the maximum $\rho$ for a given $\bar{\mathbf{S}}$.
It is well-established that the maximum $\rho$ is equal to the largest
eigenvalue of $\bar{\mathbf{S}}$ (cf. \cite[Theorem 4.2.2]{RefWorks:208},
\cite{RefWorks:415}). Hence, this metric can be written concisely
as, 
\begin{equation}
\Delta\rho(\mathbf{v})=\lambda_{1}-\rho(\mathbf{v}),\label{eq:delta_rho}
\end{equation}
where $\lambda_{1}$ is the largest eigenvalue of $\bar{\mathbf{S}}$.
For the ideal, non-private case, the error would clearly be zero.

\subsubsection{MVG Mechanism Design}

As discussed in Section \ref{subsec:Equi-Modal-Directional-Noise},
Algorithm \ref{alg:mvg_design_equimodal} is appropriate for the covariance
query, so we employ it for this experiment. The $L_{2}$-sensitivity
of this query is derived as follows. The query function is $f(\mathbf{X})=\frac{1}{n}\mathbf{X}\mathbf{X}^{T}$,
where $\mathbf{X}\in[-100,100]^{4\times2021}$. For neighboring datasets
$\{\mathbf{X},\mathbf{X}'\}$, the $L_{2}$-sensitivity is 
\[
s_{2}(f)=\sup_{\mathbf{X},\mathbf{X}'}\frac{\left\Vert \mathbf{x}_{j}\mathbf{x}_{j}^{T}-\mathbf{x}_{j}'\mathbf{x}_{j}'^{T}\right\Vert _{F}}{10,176}=\frac{2\sqrt{\sum_{j=1}^{4^{2}}x_{j}(i)^{4}}}{10,176}=\frac{8\cdot10^{4}}{10,176}.
\]
To identify the informative directions to allocate the precision budget,
we inspect the data collection setup described by Bacciu et al. \cite{RefWorks:396},
and hypothesize that that two of the four anchors should be more informative
due to their proximity to the movement path (ANC0 \& ANC3). Hence,
we use the standard basis vectors as the directions (cf. Section \ref{sec:Practical-Implementation})
and allocate more precision budget to these two features using the
same strategy as in Exp. I.

Since this query function is positive semi-definite, we can use either
Theorem \ref{thm:design_general} or Theorem \ref{thm:mvg_for_psd}
to guarantee the same $(\epsilon,\delta)$-differential privacy. Therefore,
to illustrate the improvement achievable via the exploitation of the
structural characteristic of the matrix-valued query function, we
implement the MVG mechanism with both theorems.

\subsection{Experiment III: Covariance Estimation}

\subsubsection{Task and Dataset}

The third experiment considers the similar problem to Exp. II but
with a different flavor. In this experiment, we consider the task
of estimating the covariance matrix \emph{from the perturbed database}.
This differs from Exp. II in three major ways. First, for covariance
estimation, we are interested in every P.C., rather than just the
first one. Second, as mentioned in Exp. II, many previous works do
not consider every P.C. in their design, so the previous works for
comparison are different. Third, to give a different taste of our
approach, we consider the method of input perturbation for estimating
the covariance, i.e. we query the database itself and use it to compute
the covariance.

We use the Cardiotocography (CTG) dataset \cite{RefWorks:322,RefWorks:412},
which consists of 21 features from 2,126 fetal samples. All features
have the range of $[0,1]$.

\subsubsection{Query Function and Evaluation Metric}

We consider covariance estimation via input perturbation, so we use
the identity query, i.e. $f(\mathbf{X})=\mathbf{X}$. The query size
is $21\times2126$.

We adopt the captured variance as the quality metric similar to Exp.
II, but since we are interested in every P.C., we consider the \emph{residual
sum of square (RSS)} \cite{RefWorks:350} of every P.C. This is similar
to the total residual variance used by Dwork et al. \cite[p. 5]{RefWorks:249}.
Formally, given the perturbed database $\tilde{\mathbf{X}}$, the
covariance estimate is $\tilde{\mathbf{S}}=\frac{1}{n}\tilde{\mathbf{X}}\tilde{\mathbf{X}}^{T}$.
Let $\{\tilde{\mathbf{v}}_{i}\}$ be the set of P.C.'s derived from
$\tilde{\mathbf{S}}$, and the RSS is, 
\begin{equation}
RSS(\tilde{\mathbf{S}})=\sum_{i}(\lambda_{i}-\rho(\tilde{\mathbf{v}}_{i}))^{2},\label{eq:rss}
\end{equation}
where $\lambda_{i}$ is the $i^{th}$ eigenvalue of $\bar{\mathbf{S}}$
(cf. Exp. II), and $\rho(\tilde{\mathbf{v}}_{i})$ is the captured
variance of the $i^{th}$ P.C. derived from $\tilde{\mathbf{S}}$.
Clearly, in the non-private case, $RSS(\bar{\mathbf{S}})=0$.

\subsubsection{MVG Mechanism Design}

Since we consider the identity query, we employ Algorithm \ref{alg:mvg_design_unimodal}
for this experiment. The query function is the same as Exp. I, so
the $L_{2}$-sensitivity can be readily derived as $s_{2}(f)=\sup_{\mathbf{X},\mathbf{X}'}\sqrt{\sum_{i=1}^{21}(x(i)-x'(i))^{2}}=\sqrt{21}$.
To identify the informative directions to allocate the precision budget
to, we refer to the domain knowledge from Costa Santos et al. \cite{RefWorks:416},
which identifies three features to be most informative, i.e. fetal
heart rate (FHR), \%time with abnormal short term variability (ASV),
and \%time with abnormal long term variability (ALV). Hence, we use
the standard basis vectors as the directions and allocate more precision
budget to these three features using the similar strategy to that
in Exp. I.

\subsection{Comparison to Previous Works}

Since our approach falls into the category of basic mechanism, we
compare our work to four prior state-of-the-art basic mechanisms discussed
in Section \ref{subsec:Basic-Mechanisms}, namely, the Laplace mechanism,
the Gaussian mechanism, the Exponential mechanism, and the JL transform
method.

For Exp. I and III, since we consider the identity query, the four
previous works for comparison are the works by Dwork et al. \cite{RefWorks:195},
Dwork et al. \cite{RefWorks:186}, Blum et al. \cite{RefWorks:174},
and Upadhyay \cite{RefWorks:399}, for the four basic mechanisms respectively.

For Exp. II, we consider the $1^{st}$ P.C. As this problem has been
well-investigated, we compare our approach to the state-of-the-art
algorithms specially designed for this problem. These algorithms using
the four prior basic mechanisms are, respectively: Dwork et al. \cite{RefWorks:195},
Dwork et al. \cite{RefWorks:249}, Chaudhuri et al. \cite{RefWorks:178},
and Blocki et al. \cite{RefWorks:313}.

For all previous works, we use the parameter values as suggested by
the authors of the method, and vary the free variable before reporting
the best performance.

Finally, we recognize that some of these prior works have a different
privacy guarantee from ours, namely, $\epsilon$-differential privacy.
Nevertheless, we present these prior works for comprehensive coverage
of prior basic mechanisms, and we will keep this difference in mind
when discussing the results.

\section{Experimental Results \label{sec:Experimental-Results}}

\begin{table}
\begin{centering}
\begin{tabular}{|c|c|c|c|}
\hline 
Method  & $\epsilon$  & $\delta$  & RMSE ($\times10^{-2}$)\tabularnewline
\hline 
\hline 
Non-private  & -  & -  & 1.226\tabularnewline
\hline 
Random guess  & -  & -  & $\sim3.989$\tabularnewline
\hline 
MVG using the binary allocation strategy & 1.0  & $1/n$  & $1.624\pm0.026$\tabularnewline
\hline 
MVG using the max-PNR allocation strategy & 1.0 & $1/n$  & $1.611\pm0.046$\tabularnewline
\hline 
Gaussian (Dwork et al. \cite{RefWorks:186})  & 1.0  & $1/n$  & $1.913\pm0.069$\tabularnewline
\hline 
JL transform (Upadhyay \cite{RefWorks:399})  & 1.0  & $1/n$  & $1.682\pm0.015$\tabularnewline
\hline 
Laplace (Dwork et al. \cite{RefWorks:195})  & 1.0  & 0  & $2.482\pm0.189$\tabularnewline
\hline 
Exponential (Blum et al. \cite{RefWorks:174})  & 1.0  & 0  & $2.202\pm0.721$\tabularnewline
\hline 
\end{tabular}
\par\end{centering}
\caption{Results from Experiment I: regression. \label{tab:Ex1_results}}
\end{table}

\subsection{Experiment I: Regression}

Table \ref{tab:Ex1_results} reports the results for Exp. I. Here
are the key observations. 
\begin{itemize}
\item Compared to the non-private baseline, the MVG mechanisms using both
precision allocation strategies yield similar performance (difference
of .004 in RMSE). 
\item Compared to other $(\epsilon,\delta)$-basic mechanisms, i.e. Gaussian
and JL transform, the MVG mechanism provides better utility with the
same privacy guarantee (by .003 and .0006 in RMSE, respectively). 
\item Compared to other $\epsilon$-basic mechanisms, i.e. Laplace and Exponential,
the MVG mechanism provides \emph{significantly} better utility (up
to \textasciitilde{}150\%) with slightly weaker $(\epsilon,1/n)$-differential
privacy guarantee. 
\item Among the compared methods, the MVG mechanism using the directional
noise derived from SVD and the max-PNR allocation strategy has the
best performance.
\end{itemize}
Overall, the results from regression show the promise of the MVG mechanism.
Our approach can outperform all other $(\epsilon,\delta)$-basic mechanisms.
Although it provides a weaker privacy guarantee than other $\epsilon$-basic
mechanisms, it can provide considerably more utility (up to \textasciitilde{}150\%).
As advocated by Duchi et al. \cite{RefWorks:419} and Fienberg et
al. \cite{RefWorks:418}, this trade-off could be attractive in some
settings, e.g. critical medical situation. In addition, it shows that,
even when domain knowledge is not available to design the noise directions,
the process in Section \ref{par:DP-SVD+Max-PNR}, which uses differentially-private
SVD and the PNR maximization based on Section \ref{subsec:maximizing_PNR},
can provide an even better \textendash{} and, in fact, the best \textendash{}
performance.

\subsection{Experiment II: 1$^{st}$ Principal Component}

\begin{table}
\begin{centering}
\begin{tabular}{|c|c|c|c|}
\hline 
Method  & $\epsilon$  & $\delta$  & Error $\Delta\rho$ ($\times10^{-1})$\tabularnewline
\hline 
\hline 
Non-private  & -  & -  & $0$\tabularnewline
\hline 
Random guess  & -  & -  & $\sim4.370$\tabularnewline
\hline 
MVG with Theorem \ref{thm:design_general}  & 1.0  & $1/n$  & $2.138\pm0.192$\tabularnewline
\hline 
MVG with Theorem \ref{thm:mvg_for_psd}  & 1.0 & $1/n$ & $1.434\pm0.192$\tabularnewline
\hline 
Gaussian (Dwork et al. \cite{RefWorks:249})  & 1.0  & $1/n$  & $2.290\pm0.185$\tabularnewline
\hline 
JL transform (Blocki et al. \cite{RefWorks:313})  & 1.0  & $1/n$  & $2.258\pm0.186$\tabularnewline
\hline 
Laplace (Dwork et al. \cite{RefWorks:195})  & 1.0  & 0  & $2.432\pm0.177$\tabularnewline
\hline 
Exponential (Chaudhuri et al. \cite{RefWorks:178})  & 1.0  & 0  & $1.742\pm0.188$\tabularnewline
\hline 
\end{tabular}
\par\end{centering}
\caption{Results from Experiment II: $1^{st}$ principal component. The MVG
with Theorem \ref{thm:design_general} does not utilize the PSD structure
of the query function, whereas the MVG with Theorem \ref{thm:mvg_for_psd}
does utilize the PSD characteristic of the query function. \label{tab:Exp2_results}}
\end{table}

Table \ref{tab:Exp2_results} reports the results for Exp. II. Here
are the key observations. 
\begin{itemize}
\item Compared to the non-private baseline, the MVG mechanism has very small
error $\Delta\rho$ as low as $0.1434$, which is achieved when the
PSD structure of the query function is exploited by the mechanism
via Theorem \ref{thm:mvg_for_psd}. 
\item Compared to other $(\epsilon,\delta)$-basic mechanisms, i.e. the
Gaussian mechanism and the JL transform, the best MVG mechanism, which
exploits the PSD structure of the query via Theorem \ref{thm:mvg_for_psd},
provides significantly better utility (error of 0.1434 vs 0.2290/0.2258)
with the same privacy guarantee. 
\item Compared to other $\epsilon$-basic mechanisms, i.e. the Laplace and
Exponential mechanisms, the best MVG mechanism, which exploits the
PSD structure of the query via Theorem \ref{thm:mvg_for_psd}, also
yields higher utility (error of 0.1434 vs 0.2432/0.1742) with a slightly
weaker $(\epsilon,1/n)$-differential privacy guarantee.
\item Comparing the two MVG mechanism designs, the one with utilizes the
structural characteristic of the query function with Theorem \ref{thm:mvg_for_psd}
performs notably better (error of 0.1434 vs 0.2138) over its counterpart
which uses Theorem \ref{thm:design_general}.
\end{itemize}
Overall, the MVG mechanism which exploits the PSD structure of the
query via Theorem \ref{thm:mvg_for_psd} provides the best utility
among competing methods. Noticeably, without the utilization of the
PSD structure of the query function, the MVG mechanism which uses
Theorem \ref{thm:design_general} actually performs worse than the
state-of-the-art method by Chaudhuri et al. \cite{RefWorks:178}.
Though, it can be said that the method by Chaudhuri et al. \cite{RefWorks:178}
also utilizes the PSD property of the query function since the algorithm
is designed specifically for deriving the $1^{st}$ P.C. This reinforces
the notion that exploiting the structural characteristics of the matrix-valued
query function can provide utility improvement. 

\subsection{Experiment III: Covariance Estimation}

\begin{table}
\begin{centering}
\begin{tabular}{|c|c|c|c|}
\hline 
Method  & $\epsilon$  & $\delta$  & RSS ($\times10^{-2})$\tabularnewline
\hline 
\hline 
Non-private  & -  & -  & $0$\tabularnewline
\hline 
Random guess  & -  & -  & $\sim12.393$\tabularnewline
\hline 
MVG & 1.0  & $1/n$  & $6.657\pm0.193$\tabularnewline
\hline 
Gaussian (Dwork et al. \cite{RefWorks:186})  & 1.0  & $1/n$  & $7.029\pm0.216$\tabularnewline
\hline 
JL transform (Upadhyay \cite{RefWorks:399})  & 1.0  & $1/n$  & $6.718\pm0.229$\tabularnewline
\hline 
Laplace (Dwork et al. \cite{RefWorks:195})  & 1.0  & 0  & $7.109\pm0.211$\tabularnewline
\hline 
Exponential (Blum et al. \cite{RefWorks:174})  & 1.0  & 0  & $7.223\pm0.211$\tabularnewline
\hline 
\end{tabular}
\par\end{centering}
\caption{Results from Experiment III: covariance estimation. \label{tab:Exp3_results}}
\end{table}

Table \ref{tab:Exp3_results} reports the results for Exp. III. Here
are the key observations. 
\begin{itemize}
\item Compared to the non-private baseline, the MVG mechanism has very small
RSS error of $.06657$. 
\item Compared to other $(\epsilon,\delta)$-basic mechanisms, i.e. Gaussian
and JL transform, the MVG mechanism provides better utility with the
same privacy guarantee (.004 and .001 smaller RSS error, respectively). 
\item Compared to other $\epsilon$-basic mechanisms, i.e. Laplace and Exponential,
the MVG mechanism gives better utility with slightly weaker $(\epsilon,1/n)$-differential
privacy guarantee (.005 and .006 smaller RSS error, respectively). 
\end{itemize}
Overall, the MVG mechanism provides the best utility (smallest error).
When compared to other methods with stronger privacy guarantee, the
MVG mechanism can provide much higher utility. Again, we point out
that in some settings, the trade-off of weaker privacy for better
utility might be favorable \cite{RefWorks:418,RefWorks:419}, and
our approach provides the best trade-off.

\begin{figure*}
\begin{centering}
\includegraphics[scale=0.47]{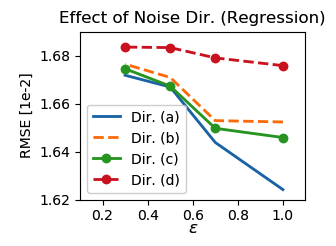}\includegraphics[scale=0.47]{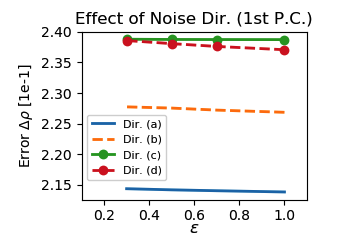}\includegraphics[scale=0.47]{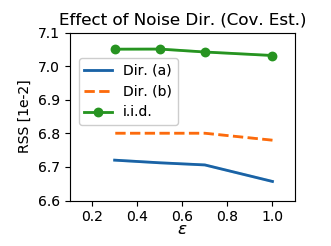} 
\par\end{centering}
\caption{Effect of noise directions on the utility under various $\epsilon$,
all with $\delta=1/n$. (Left) Exp. I: regression on the Liver dataset.
The four directions shown put more precision budget on the following
features based on the domain knowledge \cite{RefWorks:395}: (a) \{ALT,
Y\}, (b) \{ALT\}, (c) \{Y\}, (d) \{ALT, AST, Y\}. (Middle) Exp. II:
$1^{st}$ P.C. (with Theorem \ref{thm:design_general}) on the Movement
dataset. The four directions emphasize the following features: (a)
\{ANC0, ANC3\}, (b) \{ANC0\}, (c) \{ANC3\}, (d) \{ANC1, ANC2\}. (Right)
Exp. III: covariance estimation on the CTG dataset. The two directions
emphasize the two disjoint subsets of features: (a) \{FHR, ASV, ALV\},
(b) The rest of the features. \label{fig:Effect-of-noise-direction}}
\end{figure*}

\section{Discussion and Future Works}

\subsection{Effect of Noise Directions on the Utility}

\label{subsec:effect_of_noise_dir}

In Section \ref{subsec:Utility-Gain-via-dir-noise}, we discuss how
the choice of noise directions can affect the utility. Here, we investigate
this effect on the obtained utility in the three experiments. Figure
\ref{fig:Effect-of-noise-direction} depicts our results, and we discuss
our key observations here.

Figure \ref{fig:Effect-of-noise-direction}, Left, shows the direction
comparison from Exp. I with the directions derived from the domain
knowledge \cite{RefWorks:395}. We compare four choices of directions.
Direction (a), which uses the domain knowledge (ALT) and the teacher
label (Y), yields the best result when compared to: (b) using only
the domain knowledge (ALT), (c) using only the teacher label (Y),
and (d) using an arbitrary extra feature (ALT+Y+AST). In addition,
it is noticeable that the MVG mechanism still generally provides good
utility despite the less optimal directions being chosen.

Figure \ref{fig:Effect-of-noise-direction}, Middle, shows the direction
comparison from Exp. II using the MVG mechanism with Theorem \ref{thm:design_general}.
We compare four choices of directions. Direction (a), which makes
full use of the prior information (ANC0 and ANC3), performs best when
compared to: (b), (c) using only partial prior information (ANC0 or
ANC3), and (d) having the wrong priors completely (ANC1 and ANC2).

Figure \ref{fig:Effect-of-noise-direction}, Right, shows the comparison
from Exp. III. We compare three choices of directions. Direction (a),
which uses the domain knowledge (FHR, ASV, ALV3), gives the best performance
compared to: (d) using the completely wrong priors (all other features),
and (c) having no prior at all (i.i.d.).

Overall, these observations confirm the merit of both using directional
noise for the utility gain, and using prior information to properly
select the noise directions. However, we emphasize that, when prior
information is not available, we can still exploit the notion of directional
noise as shown in Section \ref{sec:Experimental-Results}.

\subsection{Directional Noise as a Generalized Subspace Analysis}

Directional noise provides utility gain by adding less noise in useful
directions and more in other directions. This has a connection to
subspace projection or dimensionality reduction, in which the non-useful
directions are simply removed. Clearly, the main difference between
the two is that, in directional noise, the non-useful directions are
kept, although are highly perturbed. However, despite being highly
perturbed, these directions may still be able to contribute to the
utility performance. With dimensionality reduction, which discards
them completely, we forfeit that additional information.

We test this hypothesis by running two additional regression experiments
(Exp. I) as follows. Given the same two important features used for
the MVG mechanism (ALT \& Y), we use the Gaussian mechanism \cite{RefWorks:186}
and the JL transform method \cite{RefWorks:399} to perform the regression
task using only these two features (i.e. discarding the rest of the
features). With $\epsilon=1$ and $\delta=\frac{1}{n}$, the results
are $(2.538\pm.065)\times10^{-2}$ and $(2.863\pm.022)\times10^{-2}$
of RMSE, respectively. Noticeably, these results are significantly
worse (i.e. larger error) than that of the MVG mechanism ($(1.624\pm.026)\times10^{-2}$
or $(1.611\pm.046)\times10^{-2}$), with the same privacy guarantee.
Specifically, by incorporating all features with directional noise
via the MVG mechanism, we can achieve over 150\% gain in utility over
the dimensionality reduction alternatives.

\subsection{Information Theoretic Perspective on the MVG Mechanism}

In this work, we analyze the MVG mechanism primarily from the perspectives
of differential privacy and signal processing. However, as discussed
by Cuff and Yu \cite{RefWorks:259}, there is a strong connection
between differential privacy and information theory. Therefore, in
this section, we discuss several advantageous properties of the MVG
mechanism from the perspective of information theory. First, we introduce
the necessary definitions, concepts, lemmas, and theorems. Second,
we discuss how maximizing PNR in Section \ref{subsec:maximizing_PNR}
is equivalent to maximizing the mutual information between the MVG
mechanism output and the true query answer. Third, we derive the optimal
type of query function to be used with the MVG mechanism. Finally,
we show that, under a certain condition, not only does the MVG mechanism
preserves ($\epsilon,\delta$)-differential privacy, but it also guarantees
the minimum correlation between the dataset and the output of the
mechanism.

\subsubsection{Background Concepts}

From the perspective of information theory, we view both the query
answer $f(\mathcal{X})$ and the mechanism output $\mathcal{MVG}(f(\mathcal{X}))$
as two random variables. One of the key measurements in information
theory concerns the mutual information, defined as follows.
\begin{defn}[Mutual information]
 Let $\mathcal{X}$ and $\mathcal{Y}$ be two matrix-valued random
variables with joint probability density given by $p_{\mathcal{X},\mathcal{Y}}(\mathcal{X},\mathcal{Y})$.
Then, the mutual information between $\mathcal{X}$ and $\mathcal{Y}$
is defined as 
\[
I(\mathcal{X};\mathcal{Y})=\mathbb{E}\left[\log\frac{p_{\mathcal{X},\mathcal{Y}}(\mathcal{X},\mathcal{Y})}{p_{\mathcal{X}}(\mathcal{X})p_{\mathcal{Y}}(\mathcal{Y})}\right].
\]
\end{defn}
The mutual information $I(\mathcal{X};\mathcal{Y}$) is a non-negative
quantity that measures the dependency between the two random variables
(cf. \cite{RefWorks:344}). In relation to our analysis, it is important
to discuss a property of the mutual information with respect to \emph{entropy}.
To the best of our knowledge, the notion of entropy for a matrix-valued
random variable has not been substantially studied. Therefore, we
define the terms here, and note that this topic in itself might be
of interest for future research in information theory and matrix analysis. 
\begin{defn}
Given a matrix-valued random variable $\mathcal{X}$ with the pdf
$p_{\mathcal{X}}(\mathcal{X})$, the \emph{differential entropy} is
\[
h(\mathcal{X})=\mathbb{E}\left[-\log p_{\mathcal{X}}(\mathcal{X})\right].
\]
For a joint pdf $p_{\mathcal{X},\mathcal{Y}}(\mathcal{X},\mathcal{Y})$,
the \emph{conditional entropy} is 
\[
h(\mathcal{X}\mid\mathcal{Y})=\mathbb{E}\left[-\log p_{\mathcal{X}\mid\mathcal{Y}}(\mathcal{X}\mid\mathcal{Y})\right].
\]
For two pdfs $p_{\mathcal{X}}(\mathcal{X})$ and $p_{\mathcal{Y}}(\mathcal{Y})$,
the \emph{relative entropy} is 
\[
D(p_{\mathcal{X}}\parallel p_{\mathcal{Y}})=\mathbb{E}\left[\log\frac{p_{\mathcal{X}}(\mathcal{X})}{p_{\mathcal{Y}}(\mathcal{Y})}\right].
\]
\end{defn}
Using the definitions of entropy, the mutual information can be written
as 
\begin{equation}
I(\mathcal{X};\mathcal{Y})=h(\mathcal{Y})-h(\mathcal{Y}\mid\mathcal{X}).\label{eq:mi_entropy}
\end{equation}
Next, let us define 
\[
Cov(vec(\mathcal{X}))=\mathbb{E}\left[[vec(\mathcal{X})-\mathbb{E}\left[vec(\mathcal{X})\right]][vec(\mathcal{X})-\mathbb{E}\left[vec(\mathcal{X})\right]]^{T}\right],
\]
and let $\preceq$ be the Loewner partial order defined as follows
\cite[chapter 7.7]{RefWorks:208}.
\begin{defn}
Given $\mathbf{A},\mathbf{B}\in\mathbb{R}^{n\times n}$, we write
$\mathbf{A}\preceq\mathbf{B}$ if $\mathbf{A}$ and $\mathbf{B}$
are symmetric and $\mathbf{B}-\mathbf{A}$ is positive semi-definite.
\end{defn}
Then, we present the following two lemmas. 
\begin{lem}[Maximum entropy principle]
\label{lem:max_entropy_principle}Given a matrix-valued random variable
$\mathcal{X}\in\mathbb{R}^{m\times n}$ with $Cov(vec(\mathcal{X}))\preceq\mathbf{K}$,
the entropy of $\mathcal{X}$ has the property 
\[
h(\mathcal{X})\leq\frac{1}{2}\log\left[(2\pi e)^{mn}\left|\mathbf{K}\right|\right],
\]
with equality if and only if $\mathcal{X}\sim\mathcal{MVG}_{m,n}(\mathbf{M},\boldsymbol{\Sigma},\boldsymbol{\Psi})$
with $\boldsymbol{\Psi}\otimes\boldsymbol{\Sigma}=\mathbf{K}$. 
\end{lem}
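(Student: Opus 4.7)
The plan is to reduce the matrix-valued statement to the classical multivariate maximum entropy principle by using the vectorization equivalence in Lemma \ref{lem:mvg_mn_equivalence}, and then to use non-negativity of relative entropy to bound $h(\mathcal{X})$ by the entropy of an auxiliary matrix-variate Gaussian with covariance exactly $\mathbf{K}$.

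First, I would set $\mathbf{y} = vec(\mathcal{X}) \in \mathbb{R}^{mn}$ and observe that since $vec$ is a bijective linear operator whose Jacobian has unit absolute determinant, the change-of-variable formula for differential entropy gives $h(\mathcal{X}) = h(\mathbf{y})$. The hypothesis then becomes $Cov(\mathbf{y}) \preceq \mathbf{K}$. Next, I would introduce the auxiliary pdf $\phi_{\mathbf{K}}(\mathbf{y})$ of $\mathcal{N}_{mn}(\mathbb{E}[\mathbf{y}], \mathbf{K})$, and invoke the non-negativity of relative entropy, $D(p_{\mathbf{y}} \parallel \phi_{\mathbf{K}}) \geq 0$, which rearranges to
\[
h(\mathbf{y}) \;\leq\; -\mathbb{E}_{p_{\mathbf{y}}}\!\left[\log \phi_{\mathbf{K}}(\mathbf{y})\right].
\]
Expanding the right-hand side using the Gaussian density and the identity $\mathbb{E}[(\mathbf{y}-\mathbb{E}[\mathbf{y}])^T \mathbf{K}^{-1}(\mathbf{y}-\mathbb{E}[\mathbf{y}])] = \mathrm{tr}(\mathbf{K}^{-1} Cov(\mathbf{y}))$ yields
\[
h(\mathbf{y}) \;\leq\; \tfrac{1}{2}\log\!\left[(2\pi)^{mn}\lvert \mathbf{K}\rvert\right] + \tfrac{1}{2}\,\mathrm{tr}\!\left(\mathbf{K}^{-1} Cov(\mathbf{y})\right).
\]

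The Loewner hypothesis $Cov(\mathbf{y}) \preceq \mathbf{K}$ then implies $\mathbf{K}^{-1/2} Cov(\mathbf{y}) \mathbf{K}^{-1/2} \preceq \mathbf{I}_{mn}$, so the cyclic property of the trace gives $\mathrm{tr}(\mathbf{K}^{-1} Cov(\mathbf{y})) = \mathrm{tr}(\mathbf{K}^{-1/2} Cov(\mathbf{y}) \mathbf{K}^{-1/2}) \leq mn$. Substituting absorbs the $mn/2$ into the logarithm via $\log e^{mn/2}$ and produces the desired bound $h(\mathcal{X}) \leq \tfrac{1}{2}\log[(2\pi e)^{mn}\lvert \mathbf{K}\rvert]$.

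For the equality characterization, I would track both inequalities. Equality in $D(p_{\mathbf{y}} \parallel \phi_{\mathbf{K}}) = 0$ forces $p_{\mathbf{y}} = \phi_{\mathbf{K}}$ almost everywhere, so $\mathbf{y}$ is multivariate Gaussian with mean $\mathbb{E}[\mathbf{y}]$ and covariance $\mathbf{K}$; this in particular makes the trace bound tight automatically. Undoing the vectorization via Lemma \ref{lem:mvg_mn_equivalence} then identifies $\mathcal{X}$ with a matrix-variate Gaussian $\mathcal{MVG}_{m,n}(\mathbf{M},\boldsymbol{\Sigma},\boldsymbol{\Psi})$ whose parameters satisfy $\boldsymbol{\Psi} \otimes \boldsymbol{\Sigma} = \mathbf{K}$, with $\mathbf{M}$ the matrix reshape of $\mathbb{E}[\mathbf{y}]$. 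Conversely, a direct computation of $h(\mathcal{X})$ from Definition \ref{def:tmvg_dist} for such a matrix-variate Gaussian recovers $\tfrac{1}{2}\log[(2\pi e)^{mn}\lvert \mathbf{K}\rvert]$, closing the biconditional.

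The only subtlety I anticipate is making sure the vectorization step is treated rigorously: one must justify that $h(\mathcal{X}) = h(vec(\mathcal{X}))$ by specifying that $h(\mathcal{X})$ is defined through the pdf of $\mathcal{X}$ with respect to the natural Lebesgue measure on $\mathbb{R}^{m \times n}$, under which $vec$ is measure-preserving. Everything else is a textbook application of the KL-divergence proof of the Gaussian maximum entropy principle lifted through the Kronecker-product identity of Lemma \ref{lem:mvg_mn_equivalence}.
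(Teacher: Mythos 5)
Your proof is correct, and its core engine is the same as the paper's: non-negativity of relative entropy against a Gaussian reference, with the cross-entropy term computed from second moments only. The difference is in the choice of reference and how the Loewner hypothesis enters. The paper follows the Cover--Thomas template literally: it compares the arbitrary variable $\mathcal{A}$ against the matrix-variate Gaussian whose covariance is \emph{matched} to $Cov(vec(\mathcal{A}))$, concludes $h(\mathcal{A})\le h(\mathcal{X})$, and leaves implicit both the passage from $Cov(vec(\mathcal{A}))\preceq\mathbf{K}$ to the stated bound (which needs monotonicity of $\log\left|\cdot\right|$ under the Loewner order) and the equality characterization. You instead compare directly against $\mathcal{N}_{mn}(\mathbb{E}[vec(\mathcal{X})],\mathbf{K})$, which makes the hypothesis $Cov(vec(\mathcal{X}))\preceq\mathbf{K}$ do its work explicitly through $\mathrm{tr}(\mathbf{K}^{-1}Cov(vec(\mathcal{X})))\le mn$, and it lets you read off the equality case cleanly: the two slacks ($D\ge0$ and the trace gap) are separately non-negative, so equality forces $p_{vec(\mathcal{X})}=\phi_{\mathbf{K}}$, and the converse is a direct density computation. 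Your treatment is thus somewhat more complete than the paper's on exactly the points the paper glosses over. One shared caveat worth noting: the ``only if'' direction as stated presumes $\mathbf{K}$ admits a Kronecker factorization $\boldsymbol{\Psi}\otimes\boldsymbol{\Sigma}$; for a general positive definite $\mathbf{K}$ the maximizer is the vectorized Gaussian with covariance $\mathbf{K}$, which is matrix-variate Gaussian only under that structural assumption \textendash{} in the paper's applications $\mathbf{K}$ always has this form, so this is a defect of the lemma's phrasing rather than of your argument.
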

\begin{proof}
The proof parallels the proof in \cite{RefWorks:344}, where we use
the property that $D(f_{\mathcal{A}}\|g_{\mathcal{X}})$ is non-negative.
Let $g_{\mathcal{X}}$ be the pdf of MVG random variable such that
$Cov(vec(\mathcal{X}))=Cov(vec(\mathcal{A}))$. We observe that 
\begin{align*}
0\le D(f\|g) & =\mathbb{E}\left[\log\frac{f_{\mathcal{A}}(\mathcal{A})}{g_{\mathcal{X}}(\mathcal{A})}\right]=-h(\mathcal{A})-\mathbb{E}\left[\log g_{\mathcal{X}}(\mathcal{A})\right].
\end{align*}
Next, we show that $\mathbb{E}\left[\log g_{\mathcal{X}}(\mathcal{X})\right]=\mathbb{E}\left[\log g_{\mathcal{X}}(\mathcal{A})\right]$
as follows. 
\begin{align*}
\mathbb{E}\left[\log g_{\mathcal{X}}(\mathcal{A})\right] & =-\frac{1}{2}\mathbb{E}\left[\mathrm{tr}[\boldsymbol{\Sigma}^{-1}(\mathcal{A}-\mathbf{M})^{T}\boldsymbol{\Psi}^{-1}(\mathcal{A}-\mathbf{M})]\right]+\log\left((2\pi)^{mn/2}\left|\boldsymbol{\Sigma}\right|^{m/2}\left|\boldsymbol{\Psi}\right|^{n/2}\right)\\
 & =-\frac{1}{2}\mathbb{E}\left[(vec(\mathcal{A})-vec(\mathbf{M}))^{T}(\boldsymbol{\Psi}\otimes\boldsymbol{\Sigma})(vec(\mathcal{A})-vec(\mathbf{M}))\right]+\log\left((2\pi)^{mn/2}\left|\boldsymbol{\Psi}\otimes\boldsymbol{\Sigma}\right|\right)\\
 & =-\frac{1}{2}Cov(vec(\mathcal{X}))+\log\left((2\pi)^{\frac{mn}{2}}\left|\boldsymbol{\Psi}\otimes\boldsymbol{\Sigma}\right|\right)=\mathbb{E}\left[\log g_{\mathcal{X}}(\mathcal{X})\right].
\end{align*}
Using the equality $\mathbb{E}\left[\log g_{\mathcal{X}}(\mathcal{X})\right]=\mathbb{E}\left[\log g_{\mathcal{X}}(\mathcal{A})\right]$,
we have $h(\mathcal{A})\le-\mathbb{E}\left[\log g_{\mathcal{X}}(\mathcal{A})\right]=h(\mathcal{X})$,
which concludes the proof. 
\end{proof}
\begin{lem}
\label{lem:max_conditional_entropy} Given two matrix-valued random
variables $\mathcal{X}\in\mathbb{R}^{m\times n}$ and $\mathcal{Y}\in\mathbb{R}^{p\times q}$
with $Cov(vec(\mathcal{X})\mid\mathcal{Y})=\mathbf{K}$, the conditional
entropy of $\mathcal{X}$ upon $\mathcal{Y}$ has the property 
\[
h(\mathcal{X}\mid\mathcal{Y})\leq\frac{1}{2}\log\left[(2\pi e)^{mn}\left|\mathbf{K}\right|\right],
\]
with equality if and only if $(\mathcal{X},\mathcal{Y})$ is jointly
MVG with $Cov(vec(\mathcal{X})\mid\mathcal{Y})=\mathbf{K}$. 
\end{lem}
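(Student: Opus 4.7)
The plan is to reduce Lemma \ref{lem:max_conditional_entropy} to the unconditional maximum entropy principle already established in Lemma \ref{lem:max_entropy_principle}, together with concavity of the log-determinant on the positive-definite cone.

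First, I would expand the conditional entropy as an iterated expectation,
\[
h(\mathcal{X} \mid \mathcal{Y}) = \mathbb{E}_{\mathcal{Y}}\bigl[h(\mathcal{X} \mid \mathcal{Y} = \mathbf{Y})\bigr],
\]
and denote by $\mathbf{K}(\mathbf{Y}) := Cov(vec(\mathcal{X}) \mid \mathcal{Y} = \mathbf{Y})$ the conditional covariance for each realization of $\mathcal{Y}$, which under the stated hypothesis satisfies $\mathbb{E}_{\mathcal{Y}}[\mathbf{K}(\mathcal{Y})] = \mathbf{K}$. Applying Lemma \ref{lem:max_entropy_principle} pointwise in $\mathbf{Y}$ to the conditional law $p_{\mathcal{X} \mid \mathcal{Y} = \mathbf{Y}}$ immediately gives
\[
h(\mathcal{X} \mid \mathcal{Y} = \mathbf{Y}) \le \tfrac{1}{2}\log\bigl[(2\pi e)^{mn}\,|\mathbf{K}(\mathbf{Y})|\bigr],
\]
with pointwise equality iff this conditional law is MVG with covariance $\mathbf{K}(\mathbf{Y})$.

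Next, I would take expectations in $\mathbf{Y}$ and invoke Jensen's inequality using the well-known concavity of $\mathbf{A}\mapsto \log|\mathbf{A}|$ on the cone of positive-definite matrices (cf.\ Horn--Johnson): this yields
\[
\mathbb{E}_{\mathcal{Y}}\log|\mathbf{K}(\mathcal{Y})| \le \log\bigl|\mathbb{E}_{\mathcal{Y}}[\mathbf{K}(\mathcal{Y})]\bigr| = \log|\mathbf{K}|,
\]
which chained with the pointwise bound produces the stated inequality.

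The hardest part will be the equality characterization. Tightness forces both (i) $\mathcal{X}\mid\mathcal{Y}=\mathbf{Y}$ to be MVG for almost every $\mathbf{Y}$, and (ii) $\mathbf{K}(\mathcal{Y})$ to be almost-surely constant, equal to $\mathbf{K}$. Passing from these two conditions to the conclusion that $(\mathcal{X},\mathcal{Y})$ is \emph{jointly} MVG is nontrivial, because it additionally requires that the conditional mean $\mathbb{E}[\mathcal{X}\mid\mathcal{Y}]$ be an affine function of $\mathcal{Y}$ and that $\mathcal{Y}$ itself be MVG. I would handle this by vectorizing via Lemma \ref{lem:mvg_mn_equivalence} so that the claim reduces to the familiar joint-Gaussian characterization in the multivariate-Gaussian setting, and then transport the conclusion back to the matrix-variate side by unstacking. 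This step should be presented carefully, perhaps as an assumption that $\mathcal{Y}$ is itself drawn from an MVG marginal, in order for the ``only if'' direction to hold in the strongest sense.
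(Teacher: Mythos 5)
Your proposal is correct on the inequality and is essentially the fleshed-out version of the paper's argument: the paper's entire proof is the single remark that the argument of Lemma \ref{lem:max_entropy_principle} goes through ``with the conditional probability version,'' i.e.\ apply the relative-entropy bound to the conditional law and average over $\mathcal{Y}$, which is exactly your pointwise step. The one mechanical difference is your Jensen step: it is needed only if the hypothesis $Cov(vec(\mathcal{X})\mid\mathcal{Y})=\mathbf{K}$ is read as an \emph{average} conditional covariance; if (as the paper seems to intend, and as in its application in Theorem \ref{thm:min_mi_priv}) the conditional covariance equals $\mathbf{K}$ almost surely, the pointwise bound integrates directly. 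Alternatively, Jensen can be avoided even in the average reading by comparing each conditional law against a Gaussian with the fixed covariance $\mathbf{K}$ and mean $\mathbb{E}[\mathcal{X}\mid\mathcal{Y}=\mathbf{Y}]$, since the quadratic cross term then averages to $\frac{1}{2}\mathrm{tr}\bigl(\mathbf{K}^{-1}\mathbb{E}[\mathbf{K}(\mathcal{Y})]\bigr)=\frac{mn}{2}$. Your caution about the equality case is well placed and in fact sharper than the paper: tightness only forces $\mathcal{X}\mid\mathcal{Y}$ to be MVG with almost surely constant conditional covariance $\mathbf{K}$, and this does \emph{not} imply that $(\mathcal{X},\mathcal{Y})$ is jointly MVG (take $\mathcal{Y}$ non-Gaussian and $\mathcal{X}=\mathcal{Y}+\mathcal{Z}$ with $\mathcal{Z}$ independent MVG noise: equality holds but joint Gaussianity fails). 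So the ``only if'' as literally stated is too strong --- joint MVG-ness is sufficient but not necessary --- and the paper's one-line proof does not address this either. Your proposed repair (additional hypotheses) is the right instinct, but note that assuming $\mathcal{Y}$ is MVG alone still does not suffice without also requiring the conditional mean $\mathbb{E}[\mathcal{X}\mid\mathcal{Y}]$ to be affine in $\mathcal{Y}$.
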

\begin{proof}
The proof follows that of Lemma \ref{lem:max_entropy_principle} by
working with the conditional probability version. 
\end{proof}

\subsubsection{Maximizing PNR is Maximizing Mutual Information}

Recall from Section \ref{subsec:maximizing_PNR} that we optimize
utility of the MVG mechanism using the PNR criterion. In this section,
we present how maximizing PNR is equivalent to maximizing the mutual
information between the MVG mechanism output and the true query answer.

Here, we can consider the true query answer $f(\mathcal{X})$, and
the mechanism output $\mathcal{MVG}(f(\mathcal{X}))$ as two random
variables. Then, we use the mutual information $I(f(\mathcal{X});\mathcal{MVG}(f(\mathcal{X})))$
to measure how informative the MVG mechanism output is, with respect
to the true answer. More importantly, we can optimize this measure
over the design flexibility of the MVG mechanism as follows: 
\begin{align}
\underset{\mathcal{Z}}{\arg\max}\:I(f(\mathcal{X});f(\mathcal{X})+\mathcal{Z})\label{eq:utility_optimization}\\
\text{s.t. }\ \mathcal{Z}\sim\mathcal{MVG}(\mathbf{0},{\bf \boldsymbol{\Sigma}},{\bf \boldsymbol{\Psi}}),\,\mathcal{Z}\in\mathbb{P},\nonumber 
\end{align}
where $\mathbb{P}$ is the set of all possible random variables $\mathcal{Z}$
that satisfy ($\epsilon,\delta$)-differential privacy according to
Theorem \ref{thm:design_general}, i.e. the conceptual shaded area
in Figure \ref{fig:A-conceptual-display}. If $f(\mathcal{X})$ is
Gaussian, the optimization can be simplified as follows. 
\begin{lem}
\label{lem:opt_utility_dir_noise}The optimization in Problem \ref{prob:pnr_constrained_opt}
is equivalent to 
\begin{align}
\underset{\boldsymbol{\Sigma},\boldsymbol{\Psi}}{\arg\max}\:\frac{|{\bf \boldsymbol{\Psi}}_{f}\otimes\boldsymbol{\Sigma}_{f}+\boldsymbol{\Psi}\otimes\boldsymbol{\Sigma}|}{|\boldsymbol{\Psi}\otimes\boldsymbol{\Sigma}|}\label{eq:utility_optimization2}\\
\text{s.t. }\ \left\Vert \boldsymbol{\sigma}(\boldsymbol{\Sigma}^{-1})\right\Vert _{2}\left\Vert \boldsymbol{\sigma}(\boldsymbol{\Psi}^{-1})\right\Vert _{2}\leq & \frac{(-\beta+\sqrt{\beta^{2}+8\alpha\epsilon})^{2}}{4\alpha^{2}},\nonumber 
\end{align}
where $\left|\cdot\right|$ is the matrix determinant; $\beta$, $\alpha$,
and $\epsilon$ are defined in Theorem \ref{thm:design_general};
and $f(\mathcal{X})\sim\mathcal{MVG}(\mathbf{0},{\bf \boldsymbol{\Sigma}}_{f},{\bf \boldsymbol{\Psi}}_{f})$. 
\end{lem}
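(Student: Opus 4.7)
The plan is to evaluate the mutual information $I(f(\mathcal{X});\,f(\mathcal{X})+\mathcal{Z})$ in closed form under the Gaussian assumption, and then observe that the resulting objective is exactly the ratio of determinants appearing in Problem~\ref{prob:pnr_constrained_opt}. The privacy constraint is already identical in both formulations (it comes directly from Theorem~\ref{thm:design_general} and describes the admissible set $\mathbb{P}$), so the work is entirely on the objective.

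First, I would decompose the mutual information using Eq.~\eqref{eq:mi_entropy}:
\[
I(f(\mathcal{X});\,f(\mathcal{X})+\mathcal{Z}) = h(f(\mathcal{X})+\mathcal{Z}) - h(f(\mathcal{X})+\mathcal{Z}\mid f(\mathcal{X})).
\]
Since $\mathcal{Z}$ is independent of $f(\mathcal{X})$, conditioning on $f(\mathcal{X})$ merely shifts the distribution of $f(\mathcal{X})+\mathcal{Z}$ by a deterministic amount, and differential entropy is translation invariant. Thus $h(f(\mathcal{X})+\mathcal{Z}\mid f(\mathcal{X})) = h(\mathcal{Z})$. Because $\mathcal{Z}\sim\mathcal{MVG}_{m,n}(\mathbf{0},\boldsymbol{\Sigma},\boldsymbol{\Psi})$, the equivalence in Lemma~\ref{lem:mvg_mn_equivalence} gives $vec(\mathcal{Z})\sim\mathcal{N}_{mn}(\mathbf{0},\boldsymbol{\Psi}\otimes\boldsymbol{\Sigma})$, and the standard multivariate Gaussian entropy formula (which follows as the equality case of Lemma~\ref{lem:max_entropy_principle}) yields
\[
h(\mathcal{Z}) = \tfrac{1}{2}\log\bigl[(2\pi e)^{mn}\,|\boldsymbol{\Psi}\otimes\boldsymbol{\Sigma}|\bigr].
\]

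Next, I would compute $h(f(\mathcal{X})+\mathcal{Z})$. Under the hypothesis $f(\mathcal{X})\sim\mathcal{MVG}(\mathbf{0},\boldsymbol{\Sigma}_f,\boldsymbol{\Psi}_f)$, the vectorization $vec(f(\mathcal{X}))$ is multivariate Gaussian with covariance $\boldsymbol{\Psi}_f\otimes\boldsymbol{\Sigma}_f$, independent of $vec(\mathcal{Z})$. The sum of two independent multivariate Gaussians is multivariate Gaussian with covariance equal to the sum of the covariances, so $vec(f(\mathcal{X})+\mathcal{Z})$ is Gaussian with covariance $\boldsymbol{\Psi}_f\otimes\boldsymbol{\Sigma}_f+\boldsymbol{\Psi}\otimes\boldsymbol{\Sigma}$. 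Applying the Gaussian entropy formula again,
\[
h(f(\mathcal{X})+\mathcal{Z}) = \tfrac{1}{2}\log\bigl[(2\pi e)^{mn}\,|\boldsymbol{\Psi}_f\otimes\boldsymbol{\Sigma}_f+\boldsymbol{\Psi}\otimes\boldsymbol{\Sigma}|\bigr].
\]

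Subtracting and collecting terms, the $(2\pi e)^{mn}$ factors cancel and
\[
I(f(\mathcal{X});\,f(\mathcal{X})+\mathcal{Z}) = \tfrac{1}{2}\log\frac{|\boldsymbol{\Psi}_f\otimes\boldsymbol{\Sigma}_f+\boldsymbol{\Psi}\otimes\boldsymbol{\Sigma}|}{|\boldsymbol{\Psi}\otimes\boldsymbol{\Sigma}|}.
\]
Since $\tfrac{1}{2}\log(\cdot)$ is a strictly increasing transformation, the argmax over $(\boldsymbol{\Sigma},\boldsymbol{\Psi})$ in the admissible set $\mathbb{P}$ is unchanged if we drop it, which produces exactly the objective in Eq.~\eqref{eq:utility_optimization2}. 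Combined with the already-matching privacy constraint, this establishes the equivalence.

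The only subtle point, and the one I would double-check, is the invocation of translation invariance of the matrix differential entropy (to conclude $h(f(\mathcal{X})+\mathcal{Z}\mid f(\mathcal{X})) = h(\mathcal{Z})$). This follows immediately by vectorizing via Lemma~\ref{lem:mvg_mn_equivalence} and using the standard translation invariance of multivariate differential entropy, so it is not a genuine obstacle; it just needs to be stated cleanly given that matrix-variate entropy was only freshly defined in the paper.
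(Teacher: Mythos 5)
Your proposal is correct and follows essentially the same route as the paper: the paper likewise decomposes the mutual information via Eq.~\eqref{eq:mi_entropy}, evaluates the Gaussian entropies (citing Lemma~\ref{lem:max_entropy_principle} for the explicit formula, which you invoke as its equality case), obtains $\tfrac{1}{2}\log$ of the determinant ratio, and concludes by monotonicity of the logarithm. Your write-up simply makes explicit the steps the paper compresses (translation invariance giving $h(f(\mathcal{X})+\mathcal{Z}\mid f(\mathcal{X}))=h(\mathcal{Z})$, vectorization via Lemma~\ref{lem:mvg_mn_equivalence}, and additivity of covariances for independent Gaussians), so there is nothing further to fix.
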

\begin{proof}
Because $f(\mathcal{X})$ and $\mathcal{Z}$ are Gaussian, their mutual
information can be explicitly evaluated. With the relationship in
between the mutual information and entropy in Eq. \eqref{eq:mi_entropy}
and Lemma \ref{lem:max_entropy_principle}, the optimization in Eq.
\eqref{eq:utility_optimization} becomes, 
\begin{align*}
\underset{\boldsymbol{\Sigma},\boldsymbol{\Psi}}{\arg\max}\frac{1}{2}\log\left(\frac{|{\bf \boldsymbol{\Psi}}_{f}\otimes\boldsymbol{\Sigma}_{f}+\boldsymbol{\Psi}\otimes\boldsymbol{\Sigma}|}{|\boldsymbol{\Psi}\otimes\boldsymbol{\Sigma}|}\right)\\
\mathrm{s.t.}\ \left\Vert \boldsymbol{\sigma}(\boldsymbol{\Sigma}^{-1})\right\Vert _{2}\left\Vert \boldsymbol{\sigma}(\boldsymbol{\Psi}^{-1})\right\Vert _{2}\leq & \frac{(-\beta+\sqrt{\beta^{2}+8\epsilon})^{2}}{4\alpha^{2}}.
\end{align*}
By using the monotonicity property of the log function, this concludes
the proof. 
\end{proof}
Noticeably, comparing Lemma \ref{lem:opt_utility_dir_noise} to Problem
\ref{prob:pnr_constrained_opt}, if we replace $\boldsymbol{\Psi}_{f}\otimes\boldsymbol{\Sigma}_{f}$
with $\mathbf{K}_{f}$, we readily see that the two optimizations
are equivalent. In other words, maximizing PNR of the MVG mechanism
output is equivalent to maximizing the mutual information between
the MVG mechanism output and the true query answer.

\subsubsection{Optimal Query Function for the MVG Mechanism}

With an alternative view on the formulation of mutual information
in Eq. \eqref{eq:utility_optimization}, we can also study what type
of query function would yield the maximum mutual information between
the MVG mechanism output and the true query answer. Here, we prove
that such query functions have the form $f(\mathbf{X})=\sum_{i}g_{i}(\mathbf{x}_{i})$,
where $\mathbf{x}_{i}$ is the $i^{th}$ column (sample) of $\mathbf{X}$.
\begin{thm}
\label{thm:max_util} The mutual information between the true query
answer $f(\mathcal{X})$ and the query answer via the MVG mechanism
$\mathcal{MVG}(f(\mathcal{X}))$ is maximum when the query function
has the form $f(\mathbf{X})=\sum_{i}g_{i}(\mathbf{x}_{i})$, where
$\mathbf{x}_{i}$ is the $i^{th}$ column (sample) in $\mathbf{X}$. 
\end{thm}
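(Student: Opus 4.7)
The plan is to recast the mutual-information maximization as a maximization of a single entropy term, then appeal to the maximum-entropy characterization of the matrix-variate Gaussian (Lemma \ref{lem:max_entropy_principle}) to show that the optimum is achieved precisely when $f(\mathcal{X})$ itself is (asymptotically) MVG, and finally identify the sum-of-samples form as the query class that induces this distribution through a multivariate central limit argument.

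First I would write
\[
I(f(\mathcal{X});\,f(\mathcal{X})+\mathcal{Z})=h(f(\mathcal{X})+\mathcal{Z})-h(f(\mathcal{X})+\mathcal{Z}\mid f(\mathcal{X}))=h(f(\mathcal{X})+\mathcal{Z})-h(\mathcal{Z}),
\]
using independence of $\mathcal{Z}$ and $f(\mathcal{X})$. Since $\mathcal{Z}\sim\mathcal{MVG}_{m,n}(\mathbf{0},\boldsymbol{\Sigma},\boldsymbol{\Psi})$ is fixed by the mechanism design, $h(\mathcal{Z})$ is a constant, so maximizing the mutual information reduces to maximizing $h(f(\mathcal{X})+\mathcal{Z})$ over the choice of query function $f$.

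Next I would fix the covariance $\mathbf{K}=Cov(vec(f(\mathcal{X})+\mathcal{Z}))=Cov(vec(f(\mathcal{X})))+\boldsymbol{\Psi}\otimes\boldsymbol{\Sigma}$ and invoke Lemma \ref{lem:max_entropy_principle}: under this covariance constraint, $h(f(\mathcal{X})+\mathcal{Z})\leq\tfrac{1}{2}\log[(2\pi e)^{mn}|\mathbf{K}|]$ with equality if and only if $f(\mathcal{X})+\mathcal{Z}$ is matrix-variate Gaussian. Because $\mathcal{Z}$ is MVG and independent of $f(\mathcal{X})$, the sum is MVG exactly when $f(\mathcal{X})$ is MVG, so the entropy-maximizing query is one whose output distribution is MVG.

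The final step is to identify which functional form of $f$ renders $f(\mathcal{X})$ matrix-variate Gaussian. Since the dataset columns $\mathbf{x}_{i}$ are modeled as i.i.d. samples (cf. Section \ref{subsec:Matrix-Valued-Query}), if $f(\mathbf{X})=\sum_{i=1}^{n}g_{i}(\mathbf{x}_{i})$, then $f(\mathcal{X})$ is a sum of independent matrix-valued random variables; applying the multivariate central limit theorem to $vec(f(\mathcal{X}))$ yields $vec(f(\mathcal{X}))\Rightarrow\mathcal{N}_{mn}(\boldsymbol{\mu}^{*},\mathbf{K}^{*})$, which by Lemma \ref{lem:mvg_mn_equivalence} is equivalent to $f(\mathcal{X})$ being MVG. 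Combined with the previous step, this achieves the entropy upper bound and hence maximizes $I(f(\mathcal{X});\mathcal{MVG}(f(\mathcal{X})))$. The main obstacle I anticipate is making the CLT step fully rigorous: strictly speaking the Gaussianity is asymptotic in $n$, so the theorem should be read either as an asymptotic statement or subject to mild regularity conditions on the $g_{i}$ (finite second moments, Lindeberg-type conditions); I would either cite the classical vector CLT applied to $vec(f(\mathcal{X}))$ or state the result in the ``asymptotic maximum'' sense consistent with the paper's earlier information-theoretic discussion.
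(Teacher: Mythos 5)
Your proposal is correct and follows essentially the same route as the paper: decompose $I(f(\mathcal{X});f(\mathcal{X})+\mathcal{Z})$ as $h(f(\mathcal{X})+\mathcal{Z})-h(\mathcal{Z})$, invoke the maximum-entropy characterization (Lemma \ref{lem:max_entropy_principle}) to conclude the optimum is attained when the output is matrix-variate Gaussian, and use the multivariate central limit theorem on the i.i.d.\ columns to show that queries of the form $f(\mathbf{X})=\sum_{i}g_{i}(\mathbf{x}_{i})$ achieve this. Your explicit caveat that the Gaussianity is asymptotic in $n$ (under mild conditions on the $g_{i}$) is, if anything, a more candid statement of the same qualification the paper makes informally.
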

\begin{proof}
For clarity, let $\mathcal{Y}=\mathcal{MVG}(f(\mathcal{X}))=f(\mathcal{X})+\mathcal{Z}$.
For any query function with $Cov(vec(f(\mathcal{X}))\preceq\mathbf{K}$,
write the mutual information in terms of the entropy, 
\[
\max_{f}I(f(\mathbf{X});\mathcal{Y})=\max_{f}h(\mathcal{Y})-h(\mathcal{Z}).
\]
From Lemma \ref{lem:max_entropy_principle}, the maximum is reached
when both $\mathcal{Y}$ and $\mathcal{Z}$ are Gaussian. Since $\mathcal{Z}\sim\mathcal{MVG}_{m,n}(\mathbf{0},\boldsymbol{\Sigma},\boldsymbol{\Psi})$,
$\mathcal{Y}$ is Gaussian when $f(\mathbf{X})$ is Gaussian. Then,
with the multivariate central limit theorem \cite{RefWorks:342},
and a mild condition on $g_{i}(\cdot)$, e.g. $g_{i}(\cdot)$ is bounded,
we arrive at the latter condition $f(\mathbf{X})=\sum_{i}g_{i}(\mathbf{x}_{i})$. 
\end{proof}
Theorem \ref{thm:max_util} states that to obtain the highest amount
of information on the true query answer from the MVG mechanism, the
query function should be in the form $f(\mathbf{X})=\sum_{i}g_{i}(\mathbf{x}_{i})$.
Conceptually, this condition states that the query function can be
decomposed into functions of individual records in the dataset. We
note that this can cover a wide range of real-world query functions.
Several statistical queries can be formulated in this form including
the average, covariance, and correlation. In machine learning, the
kernel classifiers and the loss functions can often be formulated
in this form \cite{RefWorks:33,RefWorks:350}.

\subsubsection{Proof of Minimum Privacy Leakage}

Apart from considering the mutual information between the MVG mechanism
output and the true query answer as in Eq. \eqref{eq:utility_optimization},
we can alternatively consider the mutual information between the MVG
mechanism output and the \emph{dataset} itself. This can be thought
of as the correlation between the dataset and the output of the MVG
mechanism. From the privacy point of view, clearly, we want this correlation
to be minimum. Here, we show that, if the query function has the form
$f(\mathbf{X})=\sum g_{i}(\mathbf{x}_{i})$, the MVG mechanism also
guarantees the minimum correlation between the dataset and the output
of the mechanism.

First, we treat the dataset $\mathcal{X}$ as a random variable. Then,
the claim is readily proved by the following theorem.
\begin{thm}
\label{thm:min_mi_priv}If the matrix-valued query function is of
the form $f(\mathbf{X})=\sum g_{i}(\mathbf{x}_{i})$, then for an
arbitrary positive definite matrix $\mathbf{K}$, 
\[
\min_{\mathcal{Z}:Cov(vec(\mathcal{Z}))=\mathbf{K}}I(\mathcal{X};f(\mathcal{X})+\mathcal{Z})=I(\mathcal{X};f(\mathcal{X})+\mathcal{Z}_{\mathcal{MVG}}),
\]
where $\mathcal{Z}_{\mathcal{MVG}}\sim\mathcal{MVG}_{m,n}(\mathbf{0},\boldsymbol{\Sigma},\boldsymbol{\Psi})$
such that $\boldsymbol{\Psi}\otimes\boldsymbol{\Sigma}=\mathbf{K}$. 
\end{thm}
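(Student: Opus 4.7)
The plan is to reduce this matrix-valued minimization to the well-studied vector problem via the Kronecker/vectorization correspondence, then combine the CLT-Gaussianity of $f(\mathcal{X})$ (already invoked for Theorem \ref{thm:max_util}) with the classical fact that, among additive noises of prescribed covariance, Gaussian noise is the ``worst'' in the sense of minimizing mutual information with a Gaussian input.

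First I would split the mutual information as
\[
I(\mathcal{X};\,f(\mathcal{X})+\mathcal{Z})=h(f(\mathcal{X})+\mathcal{Z})-h(f(\mathcal{X})+\mathcal{Z}\mid\mathcal{X})=h(f(\mathcal{X})+\mathcal{Z})-h(\mathcal{Z}),
\]
using that $\mathcal{Z}$ is independent of $\mathcal{X}$. Next, using Lemma \ref{lem:mvg_mn_equivalence}, I would move everything to vectorized form: set $\mathbf{s}=vec(f(\mathcal{X}))$, $\mathbf{z}=vec(\mathcal{Z})$, and note that the MVG constraint $\boldsymbol{\Psi}\otimes\boldsymbol{\Sigma}=\mathbf{K}$ is exactly $Cov(\mathbf{z}_{\mathcal{MVG}})=\mathbf{K}$ for a multivariate Gaussian $\mathbf{z}_{\mathcal{MVG}}$. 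So the problem reduces to: among all $\mathbf{z}$ independent of $\mathcal{X}$ with $Cov(\mathbf{z})=\mathbf{K}$, show that the Gaussian choice minimizes $I(\mathcal{X};\mathbf{s}+\mathbf{z})=h(\mathbf{s}+\mathbf{z})-h(\mathbf{z})$.

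Second, I would appeal to the hypothesis $f(\mathbf{X})=\sum_i g_i(\mathbf{x}_i)$ together with the multivariate CLT (under mild boundedness on $g_i$, exactly as in the proof of Theorem \ref{thm:max_util}) to conclude that $\mathbf{s}$ is (asymptotically) Gaussian with some covariance $\mathbf{K}_f$. With $\mathbf{s}$ Gaussian, I can invoke the standard ``worst additive noise under a covariance constraint'' result (Diggavi--Cover), which states that for Gaussian $\mathbf{s}$ and independent $\mathbf{z}$ with fixed $Cov(\mathbf{z})=\mathbf{K}$, the choice of Gaussian $\mathbf{z}$ minimizes $I(\mathbf{s};\mathbf{s}+\mathbf{z})$. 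Concretely, one compares
\[
I(\mathbf{s};\mathbf{s}+\mathbf{z})-I(\mathbf{s};\mathbf{s}+\mathbf{z}_G)=\bigl[h(\mathbf{s}+\mathbf{z})-h(\mathbf{s}+\mathbf{z}_G)\bigr]+\bigl[h(\mathbf{z}_G)-h(\mathbf{z})\bigr],
\]
where $\mathbf{z}_G$ is Gaussian with the same covariance $\mathbf{K}$. The second bracket is the relative entropy $D(p_{\mathbf{z}}\Vert p_{\mathbf{z}_G})\ge 0$ (this follows directly from the maximum entropy principle, Lemma \ref{lem:max_entropy_principle}, applied to $\mathbf{z}$). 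The first bracket, which is nonpositive by the same principle applied to $\mathbf{s}+\mathbf{z}$, is \emph{dominated in magnitude} by the second because adding the Gaussian $\mathbf{s}$ acts as a smoothing channel that shrinks relative-entropy deviations from Gaussianity; this contraction is the technical heart of the Diggavi--Cover argument (established via the I--MMSE or de~Bruijn identity applied to the Gaussian channel $\mathbf{s}+\mathbf{z}$). Concluding yields $I(\mathbf{s};\mathbf{s}+\mathbf{z})\ge I(\mathbf{s};\mathbf{s}+\mathbf{z}_G)$, and undoing the vectorization recovers the statement for $\mathcal{Z}_{\mathcal{MVG}}$.

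The main obstacle is the inequality between the two bracketed entropy differences above; this is the non-trivial inequality that distinguishes this result from a straightforward maximum-entropy argument (which, naively applied to both terms, yields upper bounds of the \emph{same} sign and is therefore inconclusive). Handling it rigorously requires either quoting the Diggavi--Cover contraction result or redoing its proof via I--MMSE / EPI for vector Gaussian channels. A secondary technical issue is that the CLT only gives \emph{asymptotic} Gaussianity of $\mathbf{s}$, so the equality in the theorem statement should be interpreted in that limit (or under an explicit Gaussianity assumption on $f(\mathcal{X})$ as is implicit in Theorem \ref{thm:max_util}).
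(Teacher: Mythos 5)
Your proposal is correct in substance, but it takes a genuinely different route from the paper. The paper never passes through the decomposition $I=h(f(\mathcal{X})+\mathcal{Z})-h(\mathcal{Z})$ that forces the worst-additive-noise comparison; instead it writes $I(\mathcal{X};f(\mathcal{X})+\mathcal{Z})=I(f(\mathcal{X});f(\mathcal{X})+\mathcal{Z})=h(f(\mathcal{X}))-h(f(\mathcal{X})\mid f(\mathcal{X})+\mathcal{Z})$, observes that $h(f(\mathcal{X}))$ does not depend on $\mathcal{Z}$, and then maximizes the conditional entropy by a single application of the maximum conditional entropy principle (Lemma \ref{lem:max_conditional_entropy}), equality holding exactly when $(f(\mathcal{X}),f(\mathcal{X})+\mathcal{Z})$ is jointly Gaussian, i.e.\ when $\mathcal{Z}=\mathcal{Z}_{\mathcal{MVG}}$ with $\boldsymbol{\Psi}\otimes\boldsymbol{\Sigma}=\mathbf{K}$. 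This is precisely the decomposition that sidesteps the ``two opposing max-entropy bounds'' issue you correctly diagnosed for your own decomposition. Your route \textendash{} vectorize via Lemma \ref{lem:mvg_mn_equivalence}, invoke CLT-Gaussianity of $\mathbf{s}=vec(f(\mathcal{X}))$, and compare $I(\mathbf{s};\mathbf{s}+\mathbf{z})-I(\mathbf{s};\mathbf{s}+\mathbf{z}_G)=\bigl[h(\mathbf{s}+\mathbf{z})-h(\mathbf{s}+\mathbf{z}_G)\bigr]+\bigl[h(\mathbf{z}_G)-h(\mathbf{z})\bigr]$ \textendash{} does work, and you correctly identify the one nontrivial inequality; but it is heavier than needed: since both brackets are relative entropies to the Gaussian law of matching covariance, namely $D(p_{\mathbf{s}+\mathbf{z}}\parallel p_{\mathbf{s}+\mathbf{z}_G})$ and $D(p_{\mathbf{z}}\parallel p_{\mathbf{z}_G})$, the required domination is just the data-processing inequality for relative entropy under the common map ``convolve with the independent $\mathbf{s}$,'' so no I\textendash MMSE, de Bruijn, or EPI machinery is required (this is also the short way to recover the Diggavi\textendash Cover statement you cite). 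What your approach buys is an explicit link to the classical worst-additive-noise result and a proof that works directly in vectorized coordinates; what the paper's approach buys is brevity via its own Lemma \ref{lem:max_conditional_entropy} (with the implicit point, glossed in the paper, that the conditional-entropy bound is controlled by the LMMSE error covariance, which is fixed by the second-order constraints $Cov(vec(\mathcal{Z}))=\mathbf{K}$ and $Cov(vec(f(\mathcal{X})))$). Your caveat that the CLT only gives asymptotic Gaussianity of $f(\mathcal{X})$ is fair and applies equally to the paper's own proof.
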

\begin{proof}
First, we use the multivariate central limit theorem \cite{RefWorks:342}
to claim that $f(\mathcal{X})$ is Gaussian. Then, using the decomposition
in Eq. (\ref{eq:mi_entropy}), we have that 
\begin{align*}
I(\mathcal{X};f(\mathcal{X})+\mathcal{Z}) & =h(f(\mathcal{X})+\mathcal{Z})-h(f(\mathcal{X})+\mathcal{Z}|\mathcal{X})\\
 & =h(f(\mathcal{X})+\mathcal{Z})-h(f(\mathcal{X})+\mathcal{Z}|f(\mathcal{X}))\\
 & =I(f(\mathcal{X});f(\mathcal{X})+\mathcal{Z})\\
 & =h\left(f(\mathcal{X})\right)-h\left(f(\mathcal{X})|f(\mathcal{X})+\mathcal{Z}\right),
\end{align*}
where the second line follows from the independence of $f(\mathcal{X})$
and $\mathcal{Z}$, and from the fact that by conditioning on $\mathcal{X}$,
we have $h(f(\mathcal{X})+\mathcal{Z}\mid\mathcal{X})=h(\mathcal{Z})$.
Therefore, we have 
\begin{align*}
\min_{\mathcal{Z}:Cov(vec(\mathcal{Z}))=\mathbf{K}}I(\mathcal{X};f(\mathcal{X})+\mathcal{Z}) & =h\left(f(\mathcal{X})\right)-\max_{\mathcal{Z}:Cov(vec(\mathcal{Z}))=\mathbf{K}}h\left(f(\mathcal{X})|f(\mathcal{X})+\mathcal{Z}\right)\\
 & =h\left(f(\mathcal{X})\right)-h\left(f(\mathcal{X})|f(\mathcal{X})+\mathcal{Z}_{\mathcal{MVG}}\right),
\end{align*}
where the last step follows from Lemma \ref{lem:max_conditional_entropy}.
\end{proof}
Combining result of Theorem \ref{thm:design_general} and Theorem
\ref{thm:min_mi_priv}, we have a remarkable observation that not
only does the MVG mechanism guarantees $(\epsilon,\delta)$-differential
privacy but it also guarantees the minimum possible correlation (in
terms of mutual information) between the database and the output of
the mechanism. Finally, we note that the result of Theorem \eqref{thm:min_mi_priv}
is very general as it does not assume anything about the distribution
of the dataset $\mathcal{X}$ at all.

\section{Conclusion}

In this work, we study the matrix-valued query function in differential
privacy. We present the MVG mechanism that can fully exploit the structural
nature of the matrix-valued analysis. We present two sufficient conditions
for the MVG mechanism to guarantee $(\epsilon,\delta)$-differential
privacy \textendash{} one for the general matrix-valued query function,
and another for the positive semi-definite query function. As a result
of the sufficient conditions, we introduce the novel concept of directional
noise, which can be used to reduce the impact of the noise on the
utility of the query answer. Furthermore, we show how design the directional
noise such that the power-to-noise ratio (PNR) of the MVG mechanism
output is maximum. Finally, we evaluate our approach experimentally
for three matrix-valued query functions on three privacy-sensitive
datasets. The results show that our approach can provide significant
utility improvement over existing methods, and yield the utility performance
close to that of the non-private baseline while preserving $(1.0,1/n)$-differential
privacy. We hope that our work may inspire further development in
the problem of differential privacy under matrix-valued query functions.

\bibliographystyle{IEEEtran}
\bibliography{mvg_references}

\end{document}